\newcommand{\bE}{\mathbf{E}}   
\newcommand{\bv}{\mathbf{v}}
\newcommand{\bx}{\mathbf{x}}
\newcommand{\bz}{\mathbf{z}}
\newcommand{\eps}{\varepsilon}
\crefname{hypothesis}{Hypothesis}{Hypotheses}
\crefname{fact}{Fact}{Facts}
\title{Micro-Macro Tensor Neural Surrogates for Uncertainty Quantification in Collisional Plasma\thanks{The work of WC was partially supported by the China Scholarship Council, and the National Natural Science Foundation of China (NSFC) through the Research Fund for the Doctoral Program (No. 125B2021). WC also acknowledge the hospitality of the University of Ferrara. 
  The work of GD was partially supported by the Italian Ministry of University and Research (MUR) through the PRIN
2020 project (No. 2020JLWP23) ``Integrated Mathematical Approaches to Socio–Epidemiological Dynamics”. The work of LP was partially supported by the Royal Society under the Wolfson Fellowship ``Uncertainty quantification, data-driven simulations and learning of multiscale complex systems governed by PDEs". LP also acknowledges the partial support by Fondo Italiano per la Scienza (FIS2023-01334) advanced grant "ADvanced numerical Approaches for MUltiscale Systems with uncertainties" - ADAMUS, and 
by the Italian Ministry of University and Research (MUR) through the PRIN 2022 project (No. 2022KKJP4X) ``Advanced numerical methods for time dependent parametric partial differential equations with applications". This work has been written within the activities of GNFM and GNCS groups of INdAM (Italian National Institute of High Mathematics).}}
\author{Wei Chen\thanks{School of Mathematical Sciences, Xiamen University, China 
  (\email{weichenmath@stu.xmu.edu.cn}).}
\and Giacomo Dimarco\thanks{Department of Mathematics and Computer Science, University of Ferrara, Italy 
  (\email{giacomo.dimarco@unife.it}).}
\and Lorenzo Pareschi\thanks{Maxwell Institute for Mathematical Sciences and Department of Mathematics, School of Mathematical and Computer Sciences, Heriot-Watt University, Edinburgh, UK
  (\email{l.pareschi@hw.ac.uk}). Also affiliated with Department of Mathematics and Computer Science, University of Ferrara, Italy.}
  }
\begin{document}

\maketitle

\begin{abstract}
Plasma kinetic equations exhibit pronounced sensitivity to microscopic perturbations in model parameters and data, making reliable and efficient uncertainty quantification (UQ) essential for predictive simulations. 
However, the cost of uncertainty sampling, the high-dimensional phase space, and multiscale stiffness pose severe challenges to both computational efficiency and error control in traditional numerical methods. These aspects are further emphasized in presence of collisions where the high-dimensional nonlocal collision integrations and conservation properties pose severe constraints. To overcome this, we present a variance-reduced Monte Carlo framework for UQ in the Vlasov--Poisson--Landau (VPL) system, in which neural network surrogates replace the multiple costly evaluations of the Landau collision term.
The method couples a high-fidelity, asymptotic-preserving VPL solver with inexpensive, strongly correlated surrogates based on the Vlasov--Poisson--Fokker--Planck (VPFP) and Euler--Poisson (EP) equations. 
For the surrogate models, we introduce a generalization of the separable physics-informed neural network (SPINN), developing a class of tensor neural networks based on an anisotropic micro-macro decomposition, to reduce velocity-moment costs, model complexity, and the curse of dimensionality. 
To further increase correlation with VPL, we calibrate the VPFP model and design an asymptotic-preserving SPINN whose small- and large-Knudsen limits recover the EP and VP systems, respectively.
Numerical experiments show substantial variance reduction over standard Monte Carlo, accurate statistics with far fewer high-fidelity samples, and lower wall-clock time, while maintaining robustness to stochastic dimension. 
\end{abstract}

\begin{keywords}
uncertainty quantification, plasma kinetic equations, multifidelity methods, PINN, tensor neural networks, anisotropic micro-macro decomposition, asymptotic-preserving methods. 
\end{keywords}

\begin{MSCcodes}
  82C40,  
  68T07,  
  35Q83,  
  35Q84,  
  65C05,  
  65C20,  
\end{MSCcodes}

\tableofcontents

\section{Introduction}
Kinetic equations play a central role in plasma physics, rarefied gas dynamics, and related multiscale problems, where macroscopic observables emerge from microscopic particle interactions ^^>\cite{villani2002review,pareschi2013interacting,cercignani2013mathematical}. In many relevant applications, however, the predictive capability of kinetic simulations is severely affected by uncertainties in model parameters, initial data, and external forcing. These uncertainties may originate from incomplete physical knowledge, measurement errors, or intrinsic variability, and can be strongly amplified by nonlinear transport and collisional effects. As a result, reliable uncertainty quantification (UQ) has become an essential component of kinetic modeling.

Among the various kinetic descriptions, the Vlasov–Poisson–Landau (VPL) system is of paramount importance in plasma physics, as it models long-range Coulomb interactions through a nonlinear, nonlocal, and degenerate collisional operator. From the viewpoint of UQ, the VPL system presents a combination of major difficulties: a high-dimensional phase space, stiffness induced by collisional diffusion, conservation laws, and the high computational cost of evaluating the Landau operator^^>\cite{ZaJi2024, dimarco2014numerical}. While UQ methodologies for plasma kinetic equations have been extensively developed for simplified collision models, such as BGK and Fokker-Planck operators^^>\cite{AlbiDimarcoFerraresePareschi2025, MedagliaPareschiZanella2023, Frank2021, ZhuJin2017}, systematic UQ studies for the full VPL system remain scarce^^>\cite{MedagliaPareschiZanella2024,hu2016stochasticvpl,bailo2025uncertainty}.

Two principal classes of methods have been explored for the UQ in kinetic equations. Intrusive approaches, such as stochastic Galerkin methods, can achieve spectral convergence when the solution depends smoothly on the random inputs, but their cost grows rapidly with the stochastic dimension and they require substantial modifications of deterministic solvers^^>\cite{hu2016stochastic,liu2024spectral,liu2020bi,shu2017stochastic,xiu2010numerical}. Nonintrusive Monte Carlo (MC) methods, by contrast, are solver-independent and trivially parallelizable, but their slow convergence rate makes high-accuracy simulations prohibitively expensive in complex kinetic regimes^^>\cite{caflisch1998monte,fairbanks2017low,giles2015multilevel,hu2021uncertainty,mishra2012multi,peherstorfer2018convergence,peherstorfer2016optimal,DimarcoLiuPareschiZhu2025}. 


In recent years, variance-reduction techniques based on multifidelity^^>\cite{dimarco2019multi,dimarco2020multiscale,peherstorfer2016optimal} and multilevel^^>\cite{giles2015multilevel,hu2021uncertainty} ideas have emerged as a powerful strategy to alleviate the limitations of Monte Carlo sampling. The core idea is to combine a small number of high-fidelity samples with a large number of inexpensive, strongly correlated low-fidelity evaluations, yielding unbiased estimators with significantly reduced variance. For kinetic equations, multiscale control-variate methods have proven particularly effective when suitable reduced models are available. However, the construction of accurate and robust low-fidelity surrogates for collisional plasma models remains a major challenge.
%

In this work, following our previous approach for the Boltzmann equation^^>\cite{chen2025spnn}, we develop a variance-reduced Monte Carlo framework for uncertainty quantification in the Vlasov–Poisson–Landau system, in which low-fidelity surrogates are generated by structure-preserving neural networks. The high-fidelity model is discretized using an asymptotic-preserving solver for the VPL equation, while the low-fidelity models are based on the Vlasov–Poisson–Fokker–Planck (VPFP) and Euler–Poisson (EP) systems. These reduced descriptions capture, respectively, the collisional and hydrodynamic limits of the VPL dynamics and serve as control variates within the variance-reduction strategy. 
Although VPFP and EP are significantly cheaper than VPL on a per-sample basis, directly solving these PDEs for the $\mathcal{O}(10^4)$ realizations required by variance-reduced Monte Carlo would still be extremely expensive. Neural-network surrogates are therefore introduced not as replacements for the physical models, but as a means to reduce their cost and enable massive sampling at inference time. We mention also operator learning approaches for the Landau equation^^>\cite{oPINN2023} and refer to^^>\cite{GyroSwim2025, plasmaDD2022} for data driven construction of plasma models.

A key novelty of the present work lies in the architecture of the neural surrogate, which differs fundamentally from our previous PINN-based approach. Rather than approximating the full distribution function through a single neural network, we adopt a composite micro–macro architecture, inspired by the underlying structure of collisional kinetic equations. Specifically, the macroscopic component of the solution is learned by an anisotropic Maxwellian through a dedicated neural network that directly parameterizes the moment variables. The microscopic component, representing deviations from the anisotropic Maxwellian, is instead approximated by a separate tensor neural network^^>\cite{JCM,oh2025separable} acting on velocity space. By exploiting the low-rank structure of kinetic solutions in velocity space, tensor neural networks drastically reduce the cost of velocity-moment evaluations and mitigate the curse of dimensionality.
The two components are coupled through the same asymptotic-preserving loss function, yielding a structured representation of the kinetic solution.

The architecture adopts a hierarchical micro-macro structure, with a model-driven decomposition embedded into the architecture itself. Compared with single-network architectures, this composite approach leads to more efficient training, improved generalization across Knudsen regimes, and stronger correlation with the high-fidelity VPL solution, a crucial requirement for effective variance reduction.

The construction of the present method relies on several building blocks, which are summarized below.

\begin{itemize}
\item \textit{Multifidelity design.}
The VPL system is treated as the high-fidelity model, while the VPFP and EP systems are employed as low-fidelity surrogates in the training. Within the variance-reduced Monte Carlo (VRMC) framework, optimal control-variate weights are deduced in order to minimize the estimator variance associated with the VPL solution.

\item \textit{High-fidelity solver.}
For the VPL system, we adopt a fast spectral solver of the Landau collision operator^^>\cite{pareschi2000fast}, penalized by a Fokker–Planck collision operator^^>\cite{dimarco2015numerical,jin2011class}. Time integration is performed using high-order implicit–explicit (IMEX) Runge–Kutta schemes^^>\cite{pareschi2000implicit}, combined with a weighted essentially non-oscillatory (WENO) spatial discretization^^>\cite{shu2020essentially}. The resulting method is asymptotic preserving (AP), with a stable time step independent of the Knudsen number, thus enabling the efficient generation of accurate high-fidelity samples.

\item \textit{Low-fidelity surrogates via neural networks.}
Building on^^>\cite{chen2025spnn}, we introduce separable physics-informed neural networks (SPINN), parameterized by the random inputs, to generate low-fidelity samples for the VPFP and EP equations at low computational cost. The SPINN decomposition is further optimized by adopting an anisotropic Maxwellian distribution and tensor neural networks, which enhances the approximation capability. The resulting surrogate models drastically reduce the cost of velocity-moment evaluations, lower model complexity, alleviate the curse of dimensionality associated with high-dimensional parameters, and accelerate the training process.

\item \textit{Model calibration and asymptotic-preserving property.}
For the VPFP surrogate, a calibration parameter is introduced using a subset of VPL data in order to reduce the model discrepancy and enhance variance reduction. In addition, a windowed training strategy is employed to improve long-time accuracy and reduce error accumulation. Finally, the surrogate model is constructed to be asymptotic preserving, consistently recovering the Euler-Poisson (EP) fluid limit and the collisionless Vlasov-Poisson (VP) regime in the respective highly collisional and mean-free-path dominated limits.\end{itemize}

The remainder of the paper is organized as follows.
Section~\ref{mf} introduces the VPL, VPFP, and EP models that form the multiscale hierarchy underlying the proposed approach.
Section~\ref{dm} describes the deterministic asymptotic-preserving numerical method adopted for the VPL system.
Section~\ref{nn} presents the tensor neural network architecture and develops the UQ-SPINN framework tailored to the VPFP and EP equations.
Section~\ref{vrmc} introduces the variance-reduced Monte Carlo methodology based on the neural-network surrogates for uncertainty quantification.
Section~\ref{ne} reports a set of numerical experiments that validate the proposed framework and illustrate its effectiveness across different regimes.
Finally, the last section summarizes the main conclusions and outlines perspectives for future work.

\section{Model Formulation}
\label{mf}
The Vlasov--Poisson--Landau (VPL) equation is widely used in kinetic models for describing long-range Coulomb interactions within weakly ionized gases. 
It is represented by a nonlinear partial integro-differential equation of the form
\begin{equation}
\label{system}
  {\partial_t}f + \bv \cdot \nabla_\bx f + \bE \cdot \nabla_\bv f = \frac{1}{\varepsilon} Q(f, f), \quad \bv \in \mathbb{R}^{d_v}, \quad \bx \in \Omega \subset \mathbb{R}^{d_x},
\end{equation}
where $f(t, \bx, \bv)$ denotes the distribution function, which depends on time $t$, spatial location $\bx$, and particle velocity $\bv$. 
Here, $\eps$ is the Knudsen number, and $\bE(t, \bx)$ is the self-consistent electric field determined by the solution to the normalized Poisson equation
\begin{equation}
\label{poisson}
  \bE(t, \bx) = -\nabla_\bx \phi(t, \bx), \qquad \Delta_\bx \phi(t, \bx) = 1 - \int_{\mathbb{R}^{d_v}} f(t, \bx, \bv)\,d\bv,
\end{equation}
while $Q(f, f)$ refers to the Landau collision operator, given by
\begin{equation*}
Q(f, f)(\bv) = \nabla_\bv \cdot \int_{\mathbb{R}^{d_v}} \Phi(\bv-\bv^*) \left[ \nabla_\bv f(\bv) f(\bv^*) - \nabla_{\bv^*} f(\bv^*) f(\bv) \right] d\bv^*.
\end{equation*}
In this definition, the dependence on $\bx$ and $t$ is omitted for simplicity, and $\Phi$ is a symmetric, non-negative matrix which encodes the specifics of the particle interaction.
For example,
\begin{equation*}
  \Phi(\bv) = |\bv|^{\gamma+2} S(\bv), \quad \gamma \in \mathbb{R}, \quad {\rm and} \quad S(\bv) = \mathbf{I} - \frac{\bv \otimes \bv}{|\bv|^2},
\end{equation*}
where $\mathbf{I}$ is the identity matrix.
The value of $\gamma$ determines the classification of interactions: hard potentials correspond to $\gamma > 0$, Maxwellian molecules to $\gamma = 0$, and soft potentials to $\gamma < 0$. 
The most important scenario for plasma physics is the Coulomb interaction with $\gamma = -3$.

The Landau collision operator shares similarities with the classical Boltzmann collision integral for rarefied gases. 
Kinetic theory ensures that the operator conserves key physical quantities, specifically mass, momentum, and energy:
\begin{equation}
\label{conservation}
  \int_{\mathbb{R}^{d_v}} Q(f, f)(\bv) \;
  \psi(\bv) d\bv = 0,
\end{equation}
where $\psi(\bv) = (1, \bv, \vert \bv \vert^2)^{\top}$ is the collision invariants.
Multiplying the Landau equation by the test function $\psi(\bv)$ and integrating over velocity space yields the moment equations:
\begin{equation}
\label{moment}
  \partial_t \int_{\mathbb{R}^{d_v}} f \psi(\bv) d\bv + \nabla_{\bx} \cdot \int_{\mathbb{R}^{d_v}} \bv f \psi(\bv) d\bv 
   = \bE \cdot  \int_{\mathbb{R}^{d_v}}f \nabla_\bv \psi(\bv)  d\bv,
\end{equation}
where we have used integration by parts.
Moreover, the entropy production is non-negative:
\begin{equation*}
  \frac{d H(t)}{dt} = - \frac{d}{dt} \int_{\mathbb{R}^{d_v}} f(t, \bv) \ln (f(t, \bv))\,d\bv \ge 0.
\end{equation*}
This implies that the equilibrium states of the Landau operator, i.e., functions $f$ such that $Q(f, f) = 0$, are given by local Maxwellian distributions:
\begin{equation*}
  \mathcal{M}(\mathbf{U};\bv) = \frac{\rho}{(2\pi  T)^{d_v/2}} \exp \left( -\frac{|\bv - \mathbf{u}|^2}{2 T} \right),
\end{equation*}
with
\begin{equation*}
  \mathbf{U} = \left(\rho, \rho \mathbf{u}, E \right)^{\top}, \quad E = \frac12\rho \left(\vert \mathbf{u} \vert^2 + d_v  T\right).
\end{equation*}
Here $E$ is the total energy, $\rho$ is the total mass, $\mathbf{u}$ is the mean velocity, and $T$ denotes the temperature, determined by
\begin{equation*}
  \rho = \int_{\mathbb{R}^{d_v}} f(\bv)\,d\bv, \qquad
  \mathbf{u} = \frac{1}{\rho} \int_{\mathbb{R}^{d_v}} f(\bv) \bv\,d\bv, \qquad
  T = \frac{1}{{d_v}  \rho} \int_{\mathbb{R}^{d_v}} f(\bv) |\mathbf{u} - \bv|^2 d\bv.
\end{equation*}
Owing to its sharing the same equilibrium, conservation properties, and diffusive character with the Landau operator—and to its ease of inversion—the Fokker--Planck (FP) collision operator is commonly employed as a penalization term in asymptotic preserving (AP) schemes for the Landau equation^^>\cite{jin2011class,dimarco2015numerical}. 
Its form is given by
\begin{equation}
\label{FPcollision}
  P(f) = \nabla_{\bv} \cdot \left(\mathcal{M} \nabla_{\bv} \left( \frac{f}{\mathcal{M}} \right) \right),
\end{equation}
and the Vlasov--Poisson--Fokker--Planck (VPFP) equation can be regarded as an approximate model for VPL:
\begin{equation}
\label{VPFP}
  {\partial_t}f + \bv \cdot \nabla_\bx f + \bE \cdot \nabla_\bv f = \frac{\mu}{\varepsilon} P(f), \quad \bv \in \mathbb{R}^{d_v}, \quad \bx \in \Omega \subset \mathbb{R}^{d_x},
\end{equation}
where $\mu > 0$ denotes the collision frequency.

In the limit of vanishing Knudsen number ($\varepsilon\to 0$), both the VPFP and VPL systems formally reduce to the closed compressible Euler--Poisson (EP) system, obtained via the Poisson relation \eqref{poisson} and the moment equations \eqref{moment}^^>\cite{dimarco2014numerical}:
\begin{equation}
\label{EP}
\begin{aligned}
  \partial_t \mathbf{U} + \nabla_{\bx} \cdot \mathbf{F} (\mathbf{U}) = \mathbf{S}(\mathbf{U}, \phi),
\end{aligned}
\end{equation}
with
\begin{equation*}
\label{3eq}
\left\{
\begin{aligned}
  &\Delta_\bx \phi = 1 - \rho,\\
  &\mathbf{F} (\mathbf{U}) = \left(\rho \mathbf{u}, \rho \mathbf{u} \otimes \mathbf{u} + p \mathbf{I}, \left(E + p\right) \mathbf{u} \right)^{\top}, \\
  &\mathbf{S}(\mathbf{U}, \phi) = \left(0, -\rho \nabla_{\bx} \phi, -\rho \mathbf{u}  \cdot\nabla_{\bx} \phi \right)^{\top},
\end{aligned}
\right.
\end{equation*}
where $p = \rho  T$ denotes the pressure.

\section{Deterministic Methods}
\label{dm}
The Landau collision operator is of diffusive type: its spectrum is unbounded in the continuum and, upon discretization, exhibits eigenvalues of order $\mathcal{O}(1/\Delta v^{2})$. 
This induces the characteristic parabolic stiffness, forcing explicit schemes to obey the severe time-step restriction $\Delta t \sim (\Delta v)^{2}$. 
Although standard implicit methods enjoy larger stability regions, the high-dimensional, nonlocal coupling of the discretized operator $Q$ renders its inversion prohibitively expensive in practice. 
To alleviate this difficulty, a penalization operator which is easier to invert should be introduced. 
The Fokker-Planck (FP) operator \eqref{FPcollision} shares the Maxwellian equilibrium, conservation laws, and diffusive character of the Landau operator while possessing a structure more amenable to inversion^^>\cite{dimarco2015numerical,jin2011class}. 
Penalizing $Q$ with the FP operator $P$ replaces (or absorbs) the stiff component, relaxes the time-step constraint, and enables the design of high-order AP schemes that are uniform in the limit $\varepsilon \to 0$.
Therefore, the original VPL system \eqref{system} can be modified in
\begin{equation*}
  {\partial_t}f + v \cdot \nabla_\bx f + \bE(t, \bx) \cdot \nabla_\bv f = \underbrace{\frac{1}{\eps} \left(Q(f, f) - \beta P(f)\right)}_{\text{non-stiff}} + \overbrace{\frac{1}{\eps} \beta P(f)}^{\text{stiff}}
\end{equation*}
and $\beta$ is a large enough constant, some details can be found in^^>\cite{jin2011class}.

To obtain high-order accuracy in time, we adopt an $s$-stage IMEX Runge-Kutta (RK) method^^>\cite{BoscarinoPareschiRusso2024}, which can be represented by a double $tableau$
\begin{equation*}
  \begin{array}{c|c}
  \tilde{c} & \tilde{A}\\
  \hline
  \vspace{-0.25cm}
  \\
  & \tilde{b}^T \end{array} \ \ \ \ \ \qquad
  \begin{array}{c|c}
  {c} & {A}\\
  \hline
  \vspace{-0.25cm}
  \\
  & {b^{\top}} \end{array},
\end{equation*}
where both $\tilde{A} = (\tilde{a}_{ij})$ and $A = ({a}_{ij})$ are $s \times s$ matrices.
The matrix $\tilde{A}$ which is lower triangular with a zero diagonal and another lower triangular matrix $A$ with a non-zero diagonal are applied for explicit and implicit parts, respectively. 
We note that a diagonally implicit RK scheme with a lower triangular matrix $A$ guarantees an efficient implementation of such high order IMEX multi-stage RK schemes. 
Here $\tilde{c}=(\tilde{c}_1,...,\tilde{c}_s)^{\top}$, $\tilde{b}=(\tilde{b}_1,...,\tilde{b}_s)^{\top}$,  $c=(c_1,...,c_s)^{\top}$, and $b=(b_1,...,b_s)^{\top}$ are $s \times 1$ vectors. 
The coefficients $\tilde{c}$ and $c$ satisfy the typical assumptions
\begin{eqnarray*}
    \tilde{c}_i = \sum_{j=1}^{i-1} \tilde a_{ij}, \ \ \ c_i = \sum_{j=1}^{i} a_{ij}.
\end{eqnarray*}
Later, for the consideration of AP property, we consider the globally stiff accuracy property, i.e., the implicit part of the
Butcher table satisfies the condition $b^{\top} = e_s^{\top}A = \tilde{b}^{\top} = e_s^{\top} \tilde{A}$, with $e_s^{\top} = (0, \cdots 0, 1)^{\top}$.
The updating process of the solution is given as follows:
\begin{itemize}
  \item \textbf{Step 1.} Starting from $f^n$ at time $t^n$, for an intermediate stage $k (1 \leq k \leq s)$, we first compute $Q \left( f^{(k - 1)} \right)$ and $\bE^{(k - 1)}$ from $f^{k - 1}$.

  \item \textbf{Step 2.} Updating the advection term and electric field term:
  \begin{equation*}
    \tilde{f}^{(k)} = f^n - \sum_{i = 1}^{k - 1} \tilde{a}_{k i} \Delta t \left( \bv \cdot \nabla_{\bx} f^{(i)} + \bE^{(i)} \cdot \nabla_{\bv} f^{(i)} \right).
  \end{equation*}

  \item \textbf{Step 3.} Computing $\mathcal{M}(f^{(k)})$ with the conservation property \eqref{conservation}, i.e., $\mathcal{M}(f^{(k)}) = \mathcal{M}(\tilde{f}^{(k)})$.

  \item \textbf{Step 4.} Solving the following system
  \begin{equation*}
    \left(\eps - a_{kk} \Delta t \beta P\right) \left(f^{(k)}\right) = \eps \bar{f}^{(k)}
  \end{equation*}
  with the expression
  \begin{equation*}
    \eps \bar{f}^{(k)} = \eps \tilde{f}^{(k)} + \sum_{i = 1}^{k - 1} \tilde{a}_{k i} \Delta t \left( Q \left( f^{(i)} \right) - \beta P \left( f^{(i)} \right) \right) + \sum_{i = 1}^{k - 1} a_{k i} \Delta t \beta P \left( f^{(i)} \right).
  \end{equation*}
  to get $f^{(k)}$ and then obtain $P\left({f^{(k)}}\right)$.

  \item \textbf{Step 5.} Updating the solution $f^{n + 1} = f^{(s)}$.
\end{itemize}

To balance computational efficiency and accuracy for the physical and velocity space derivative terms indicated in the flowchart, we employ a finite difference weighted-essentially non-oscillatory scheme^^>\cite{shu2020essentially}. 
Moreover, the Landau collision operator is evaluated using a fast spectral method to enhance computational efficiency^^>\cite{pareschi2000fast}. Note that, further computational efficiency can be achieved using IMEX linear multistep methods^^>\cite{DiPa2017}.

\section{Neural Network}
\label{nn}
Solutions to kinetic equations often concentrate near a low-dimensional manifold and admit a low-rank structure^^>\cite{dektor2021rank}. 
Leveraging this property, we construct a tensor neural network^^>\cite{JCM,oh2025separable} to alleviate the curse of dimensionality by curbing the exponential growth in memory usage and computational cost.
Fig.~\ref{F1} depicts the tensor neural network $\mathcal{T}$ comprising $r$ distinct subnetworks. 
The input coordinate along the $i$-th dimension is denoted $v_i$, and dashed boxes indicate hidden layers. 
Each subnetwork yields $p$ rank-1 components. 
The overall output of $\mathcal{T}$ is obtained by taking elementwise (Hadamard) products across the $r$ subnetwork outputs and then summing over the $p$ components.

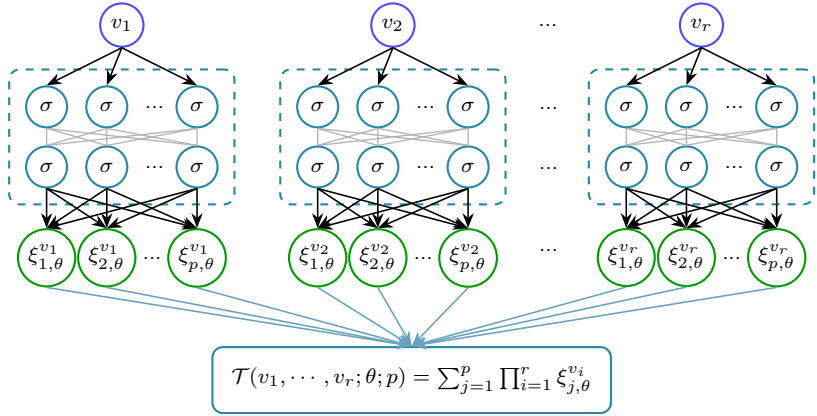
\begin{figure}[htb]
    \centering
  \begin{tikzpicture}[x=1.0cm, y=1.0cm, >=Stealth, 
      every node/.style={font=\sffamily},
      every path/.style={line width=1pt},
      inputnode/.style={draw=blue!70, thick, circle, minimum size=0.3cm, font=\sffamily\footnotesize},
      hiddencircle/.style={draw=cyan!60!black, thick, circle, minimum size=0.3cm, font=\sffamily\footnotesize},
      outputnode/.style={draw=green!60!black, thick, circle, minimum size=0.3cm, font=\sffamily\footnotesize},
      arrow1/.style={-Stealth, line width = 0.6pt},
      arrow2/.style={-, line width = 0.5pt},
      dashbox/.style={draw=cyan!60!black, rounded corners, dashed, thick, inner sep=6pt, fill=none},
      rect/.style={draw=cyan!60!black, rounded corners, thick, font=\sffamily\footnotesize}
      ]

  \node[inputnode, inner sep=2.5pt] (a1) {$v_1$};
  \node[inputnode, inner sep=2.5pt, right = 3cm of a1] (a2) {$v_2$};
  \node[right = 1.5cm of a2, font=\sffamily\footnotesize] (cdot) {...};
  \node[below = 0.65cm of a2, xshift={2.07cm}, font=\sffamily\footnotesize] {...};
  \node[below = 1.45cm of a2, xshift={2.07cm}, font=\sffamily\footnotesize] {...};
  \node[below = 2.55cm of a2, xshift={2.07cm}, font=\sffamily\footnotesize] {...};
  \node[inputnode, inner sep=2.5pt, right = 3.5cm of a2] (a3) {$v_r$};

  \foreach \j in {1,2,3}
    \foreach \i in {1,3}{
      \node[hiddencircle, below=0.5cm of a\j, xshift={(\i-2)*1cm}] (h1\i\j) {$\sigma$};
      \node[hiddencircle, below=1.3cm of a\j, xshift={(\i-2)*1cm}] (h2\i\j) {$\sigma$};
      }

  \foreach \j in {1,2,3}{
    \node[hiddencircle, below=0.5cm of a\j, xshift={-0.2cm}] (h12\j) {$\sigma$};
    \node[hiddencircle, below=1.3cm of a\j, xshift={-0.2cm}] (h22\j) {$\sigma$};
    }

  \foreach \j in {1,2,3}
    \foreach \i in {1,2}{
      \node[right = 0.09cm of h\i2\j, font=\sffamily\footnotesize] {...};
      }

  \foreach \j in {1,2,3}
    \node[fit=(h11\j) (h23\j), dashbox ] (boxnn) {};

  \node[outputnode, inner sep=1.5pt, below=2.4cm of a1, xshift={-1cm}] (xi11) {$\xi^{v_1}_{1, \theta}$};
  \node[outputnode, inner sep=1.5pt, below=2.4cm of a1, xshift={-0.2cm}] (xi12) {$\xi^{v_1}_{2, \theta}$};
  \node[outputnode, inner sep=1.5pt, below=2.4cm of a1, xshift={1cm}] (xi13) {$\xi^{v_1}_{p, \theta}$};

  \node[outputnode, inner sep=1.5pt, below=2.4cm of a2, xshift={-1cm}] (xi21) {$\xi^{v_2}_{1, \theta}$};
  \node[outputnode, inner sep=1.5pt, below=2.4cm of a2, xshift={-0.2cm}] (xi22) {$\xi^{v_2}_{2, \theta}$};
  \node[outputnode, inner sep=1.5pt, below=2.4cm of a2, xshift={1cm}] (xi23) {$\xi^{v_2}_{p, \theta}$};

  \node[outputnode, inner sep=1.5pt, below=2.4cm of a3, xshift={-1cm}] (xi31) {$\xi^{v_r}_{1, \theta}$};
  \node[outputnode, inner sep=1.5pt, below=2.4cm of a3, xshift={-0.2cm}] (xi32) {$\xi^{v_r}_{2, \theta}$};
  \node[outputnode, inner sep=1.5pt, below=2.4cm of a3, xshift={1cm}] (xi33) {$\xi^{v_r}_{p, \theta}$};

  \foreach \j in {1,2,3}
    \node[right = -0.05cm of xi\j2, font=\sffamily\footnotesize] {...};

  \node[below = 4.1cm of a2, xshift = 0.25cm, font=\sffamily\footnotesize] (final) {$\mathcal{T}(v_1,\cdots,v_{r};\theta;p) = \sum_{j=1}^{p} \prod_{i=1}^r \xi^{v_i}_{j, \theta}$};
  \node[fit=(final), rect] (boxnn) {};

  \foreach \i in {1,2,3} 
    \foreach \j in {1,2,3}{
      \draw[arrow1, black!100] (a\j.south) -- (h1\i\j.north);
  }

  \foreach \i in {1,2,3} 
    \foreach \j in {1,2,3}
      \foreach \k in {1,2,3} {
        \draw[arrow2, gray!60] (h1\j\i.south) -- (h2\k\i.north);
    }

  \foreach \i in {1,2,3}
    \foreach \j in {1,2,3}
      \foreach \k in {1,2,3}
        \draw[arrow1, black!100] (h2\j\i.south) -- (xi\i\k.north);

  \foreach \i in {1,2,3}{
    \draw[arrow1, cyan!40!gray] (xi1\i.south) -- (boxnn.north);
    \draw[arrow1, cyan!40!gray] (xi2\i.south) -- (boxnn.north);
    \draw[arrow1, cyan!40!gray] (xi3\i.south) -- (boxnn.north);
  }

  \end{tikzpicture}

  \caption{The Tensor Neural Network.}
    \label{F1}
\end{figure}

Tensor neural networks are particularly amenable to integration, as they reduce an $r-$fold integral to a product of $r$ one-dimensional integrals, thereby mitigating the curse of dimensionality in numerical integration. 
For illustration, we consider $d_v = r = 2$, $\bv=(v_1,v_2)^{\top}$, and define the one-dimensional moments
\begin{equation*}
  \eta_{j,\theta}^{l, k} := \int_{\mathbb{R}} \xi_{j,\theta}^{v_l} v_l^k d v_l, \quad k=0,1,2 \quad {\rm and} \quad l = 1, 2.
\end{equation*}
Then the two-dimensional integral reduces to a vector of products of these moments:
\begin{equation}
\label{int}
\begin{aligned}
  & \int_{\mathbb{R}^{d_v}} \mathcal{T}(v_1, v_2 ;\theta;p)  \psi(\bv) d \bv \\
  = & \sum_{j=1}^{p} \int_{\mathbb{R}^{d_v}} \prod_{i = 1}^2 \xi_{j,\theta}^{v_i} \left(1, \bv, \vert \bv \vert^2\right)^{\top} d \bv\\
  = & \sum_{j=1}^{p} \left(
  \prod_{i = 1}^2 \eta_{j,\theta}^{i, 0}, 
  \eta_{j,\theta}^{1, 1} \eta_{j,\theta}^{2, 0},
  \eta_{j,\theta}^{1, 0} \eta_{j,\theta}^{2, 1},
  \frac12 \left( \eta_{j,\theta}^{1, 2}\eta_{j,\theta}^{2, 0} + \eta_{j,\theta}^{1, 0}\eta_{j,\theta}^{2, 2} \right)
  \right)^{\top},
\end{aligned}
\end{equation}
which substantially reduces the impact of the curse of dimensionality on computational efficiency.

Building on^^>\cite{oh2025separable}, we construct an separable physics-informed neural network (UQ-SPINN) for the VPFP equation comprising three tensor neural networks, $\widetilde{\mathbf{W}}, g$, and $\phi$ (Fig.~\ref{F3}).
Here, $\mathcal{T}$ denotes a tensor neural network that takes uncertainty $\mathbf{z}$, time $t$, physical space $\mathbf{x}$, and velocity $\mathbf{v}$ as inputs. 
Here, the weight matrix $\mathbf{w}$ is an $(1+2d_v)\times p$ matrix learned by training a subnetwork and requires no explicit inputs. 
The output of network $\widetilde{\mathbf{W}}$  is a tensor comprising the density, $d_v$ velocity components, and $d_v$ directional temperatures. 
The $l$-th component of $\widetilde{\mathbf{W}}$ is given by
\begin{equation*}
\widetilde{\mathbf{W}}_{l}
= \sum_{j=1}^{p} \mathbf{w}_{j}^{\,l}\,
\xi^{z_{1}}_{j,\theta}\cdots\xi^{z_{d_z}}_{j,\theta}\,
\xi^{t}_{j,\theta}\,
\xi^{x_{1}}_{j,\theta}\cdots\xi^{x_{d_x}}_{j,\theta}.
\end{equation*}
\begin{figure}[htb]
    \centering
  \begin{tikzpicture}[x=1.0cm, y=1.0cm, >=Stealth, 
      every node/.style={font=\sffamily},
      every path/.style={line width=1pt},
      circle1/.style={draw=green!60!black, thick, circle, minimum size=0.5cm, inner sep=0pt, outer sep=0pt, font=\sffamily\footnotesize},
      circle2/.style={draw=blue!70, thick, circle, minimum size=0.5cm, inner sep=0pt, outer sep=0pt, font=\sffamily\footnotesize},
      arrow1/.style={-Stealth, line width = 0.6pt},
      arrow2/.style={-, line width = 0.5pt},
      dashbox/.style={draw=cyan!60!black, rounded corners, dashed, thick, inner sep=6pt, fill=none},
      rect1/.style={draw=cyan!60!black, rounded corners, thick, font=\sffamily\footnotesize},
      rect2/.style={draw=orange!70!gray, rounded corners, thick, font=\sffamily\footnotesize}
      ]

  \node[font=\sffamily\footnotesize] (in1) {$\widetilde{\mathbf{W}}=\mathcal{T}(\mathbf{w},\mathbf{z},t,\mathbf{x};\mathbf{\theta}_1;p)$};
  \node[fit=(in1), rect1] (boxn1) {};
  \node[right = 3cm of in1, font=\sffamily\footnotesize] (in3) {$\phi=\mathcal{T}(\mathbf{z},t,\mathbf{x};\mathbf{\theta}_3;p)$};
  \node[fit=(in3), rect1] (boxn3) {};
  \node[below = 3cm of in1, font=\sffamily\footnotesize] (in2) {$g=\mathcal{T}(\mathbf{z},t,\mathbf{x},\mathbf{v};\mathbf{\theta}_2;p)$};
  \node[fit=(in2), rect1] (boxn2) {};

  \node[circle1, below=0.4cm of in1, xshift=-1.4cm] (tF) {$\widetilde{\mathbf{F}}$};
  \node[circle1, above=0.4cm of in2, xshift=-1.4cm] (dF) {$\delta \mathbf{F}$};
  \node[circle2, below=1.25cm of in1, xshift=-1.4cm] (F) {$\overline{\mathbf{F}}$};

  \node[circle1, above=0.4cm of in2, xshift=-0.4cm] (dU) {$\delta \mathbf{U}$};
  \node[circle2, below=0.8cm of in1, xshift=-0.4cm] (U) {$\mathbf{U}$};
  \node[circle1, below=0.4cm of in1, xshift=0.4cm] (ntU) {$\widetilde{\mathbf{U}}$};
  \node[circle2, above=0.45cm of in2, xshift=0.4cm] (M) {$\mathcal{M}$};

  \node[circle2, above=0.8cm of in2, xshift=1.4cm] (f) {$f$};
  \node[circle1, below=0.4cm of in1, xshift=1.4cm] (tM) {$\widetilde{\mathcal{M}}$};

  \node[right=0.5cm of in2, font=\sffamily\footnotesize] (l3) {$\mathcal{L}_3 \left\{  \frac{\varepsilon}{1+\varepsilon}\left({\partial_t}f + v \cdot \nabla_\bx f -\nabla_\bx \phi \cdot \nabla_\bv f \right) - \frac{1}{1+\varepsilon}\nabla_{\bv} \cdot \left(\mathcal{M} \nabla_{\bv} \left( \frac{f}{\mathcal{M}} \right) \right)\right\}$};
  \node[below=0.5cm of in3, xshift=-1.1cm, font=\sffamily\footnotesize] (l1) {$\mathcal{L}_1 \left\{ \Delta_\bx \phi - (1 - \rho) \right\}$};
  \node[below=1.7cm of in3, xshift=-1.15cm, font=\sffamily\footnotesize] (l2) {$\mathcal{L}_2 \left\{  \partial_t \mathbf{U} + \nabla_{\bx} \cdot \overline{\mathbf{F}} - \mathbf{S}\right\}$};
  \node[right=3cm of l2, font=\sffamily\footnotesize] (le) {$\mathcal{L}^{\varepsilon}$};
  \node[right=2.23cm of in3, font=\sffamily\footnotesize] (theta) {$\mathbf{\theta}$};
  \node[left=0.cm of le, yshift=1.8cm, font=\sffamily\footnotesize, rotate=90] (Min) {Minimize};

  \node[fit=(l1), rect2] (boxl1) {};
  \node[fit=(l2), rect2] (boxl2) {};
  \node[fit=(l3), rect2] (boxl3) {};
  \node[fit=(le), rect2] (boxle) {};
  \node[fit=(theta), rect2, draw=green!30!gray] (boxtheta) {};

  \draw[arrow1, cyan!40!gray] (boxn1.south) -- (tF.east);
  \draw[arrow1, cyan!40!gray] (boxn1.south) -- (ntU.west);
  \draw[arrow1, cyan!40!gray] (ntU.west) -- (U.north);
  \draw[arrow1, cyan!40!gray] (boxn1.south) -- (tM.west);
  \draw[arrow1, cyan!40!gray] (tF.south) -- (F.north);
  \draw[arrow1, cyan!40!gray] (tM.south) -- (f.north);

  \draw[arrow1, blue!30!gray] (boxn2.north) -- (dF.south);
  \draw[arrow1, blue!30!gray] (boxn2.north) -- (dU.south);
  \draw[arrow1, blue!30!gray] (boxn2.north) -- (f.south);
  \draw[arrow1, blue!30!gray] (dF.north) -- (F.south);
  \draw[arrow1, blue!30!gray] (dU.north) -- (U.south);

  \draw[arrow1, draw=green!40!gray] (U.east) -- (M.north);
  \draw[arrow1, draw=green!40!gray] (f.east) -- ++(0.2,0) -- ++(0,-0.56) -| (boxl3.north);
  \draw[draw=green!40!gray, line width = 0.6pt] (M.east) -| ($(f.east)+(0.2,-0.56)$);
  \draw[draw=green!40!gray, line width = 0.6pt] (boxn3.east) -- ++(0.2,0) |- ($(f.east)+(0.2,-0.56)$);
  
  \draw[arrow1, draw=yellow!70!gray] (F.east) -- ($(F)+(3.5,0)$) |- (boxl2.west);
  \draw[draw=yellow!70!gray, line width = 0.6pt] ($(U.east)+(0.02,0)$) -- ++(0,-0.45);
  \draw[draw=yellow!70!gray, line width = 0.6pt] (boxn3.south) -- ($(boxn3)+(0,-0.5)$)-| ($(F)+(3.5,0)$);

  \draw[arrow1, draw=red!50!gray] (U.east) -- ($(U)+(3,0)$) |- (boxl1.west);
  \draw[draw=red!50!gray, line width = 0.6pt] (boxn3.west) -- ($(boxn3.west)+(-1.91,0)$) -- ($(U)+(3,0)$);

  \draw[arrow1, draw=brown!40!gray] (boxl2.east) -- (boxle.west);
  \draw[arrow1, draw=brown!40!gray] (boxl1.east) --++ (0.6, 0) |- (boxle.west);
  \draw[arrow1, draw=brown!40!gray] ($(boxl3.north)+(2,0)$) |- (boxle.west);

  \draw[arrow1, draw=brown!40!gray] (boxle.north) -- (boxtheta.south);

  \end{tikzpicture}

  \caption{UQ-SPINN for Vlasov--Poisson--Fokker--Planck equation.}
    \label{F3}
\end{figure}
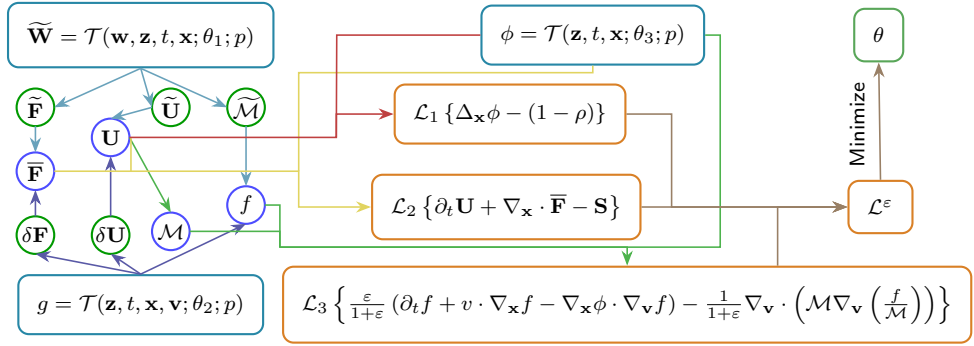

We decompose the distribution function into an anisotropic Maxwellian background $\widetilde{\mathcal{M}}$ and a perturbation $g$.
The neural network is tasked with approximating the computationally inexpensive background $\widetilde{\mathcal{M}}$ to obtain a close surrogate of $f$, while the corrective term $g$ refines this approximation and yields a substantially more accurate solution.
Specifically, we write
\begin{equation}
\label{decom}
  f = \widetilde{\mathcal{M}} + \alpha \; g, \quad {\rm with} \quad \widetilde{\mathcal{M}} = \frac{\rho}{(2\pi)^{d_v/2}\sqrt{{\rm det} \boldsymbol{\Theta}}}  \exp \left( -\frac{1}{2}(\bv - \mathbf{u})^{\top} \boldsymbol{\Theta}^{-1} (\bv - \mathbf{u})\right)
\end{equation}
where $\alpha$ a tunable constant to facilitate optimization and we set $\alpha = 1$ in our test.
In the anisotropic Maxwellian background $\widetilde{\mathcal{M}}$, we set
\[
\boldsymbol{\Theta} \;=\; \mathbf{R}(\boldsymbol{s})\,\mathbf{T}\,\mathbf{R}(\boldsymbol{s})^{\!\top},
\qquad 
\mathbf{T} \;=\; \operatorname{diag}\!\big(T_1,\dots,T_{d_v}\big)\in\mathbb{R}^{d_v\times d_v},
\]
where \(\mathbf{R}(\boldsymbol{s})\in\mathbb{R}^{d_v\times d_v}\) is an orthogonal matrix and the diagonal entries of \(\mathbf{T}\) are temperatures.
By analogy with the weights \(\mathbf{w}\), one could employ an auxiliary neural network to generate the parameters \(\boldsymbol{s}\), and hence \(\mathbf{R}(\boldsymbol{s})\), thereby enhancing the expressiveness of model at the cost of additional computation. 
In this work, we simply set \(\mathbf{R}(\boldsymbol{s})=\mathbf{I}\), which we find sufficient and perfrom better than the decomposition in^^>\cite{oh2025separable}.
Following^^>\cite{oh2025separable}, to improve the accuracy of velocity–moment integrals, we multiply the perturbation $g$ by a damping factor $\exp (-\bv^2/2)$ that enforces exponential decay near the boundary of the velocity domain. 
The update procedure of the UQ-SPINN is as follows:
\begin{itemize}
  \item \textbf{Step 1.} Given rank $p$, evaluate the three tensor neural networks to obtain $\widetilde{\mathbf{W}}$, $g$, and $\phi$.

  \item \textbf{Step 2.} Given $\widetilde{\mathbf{W}} = (\widetilde{\rho},\widetilde{\mathbf{u}},\widetilde{\mathbf{T}})^{\top}$ and \eqref{decom}, the background Maxwellian $\widetilde{\mathcal{M}}$ is determined.
  By multiplying $\widetilde{\mathcal{M}}$ by the test function $\psi(\bv)$ and $\bv\psi(\bv)$, respectively, and integrating over velocity space, we can obtain
  \begin{equation*}
  \left\{
  \begin{aligned}
    &\widetilde{\mathbf{U}} = \left(\widetilde{\rho},\widetilde{\rho}\widetilde{\mathbf{u}},\widetilde{\mathcal{E}}\right)^{\top}, \\
    &\widetilde{\mathbf{F}} = \left(\widetilde{\rho}\widetilde{\mathbf{u}},\widetilde{\rho}\widetilde{\mathbf{u}}\otimes\widetilde{\mathbf{u}} + \widetilde{\mathbf{P}},\widetilde{\mathcal{E}}\widetilde{\mathbf{u}} + \widetilde{\mathbf{P}}\widetilde{\mathbf{u}}\right)^{\top}, \\
    &\widetilde{\mathcal{E}} = \frac12 \widetilde{\rho} \left(|\widetilde{\mathbf{u}}|^2 + |\widetilde{\mathbf{T}}|\right), \quad \widetilde{\mathbf{P}} = \widetilde{\rho} \;{\rm diag} \widetilde{\mathbf{T}}.
    \end{aligned}
  \right.
  \end{equation*}
  Using the tensor-product structure (linearity) of the integrals, we deduce
  \begin{equation*}
  \left\{
  \begin{aligned}
    &\delta \mathbf{U} = \int_{\mathbb{R}^{d_v}} g \psi(\bv) d\bv = \sum_{j=1}^p \xi_{j,\theta_2}^{\mathbf{z}}\xi_{j,\theta_2}^{t}\xi_{j,\theta_2}^{\mathbf{x}} \int_{\mathbb{R}^{d_v}} \xi_{j,\theta_2}^{\mathbf{v}} \psi(\bv)d\bv, \\
    & \delta \mathbf{F} = \int_{\mathbb{R}^{d_v}} \bv g \psi(\bv) d\bv = \sum_{j=1}^p \xi_{j,\theta_2}^{\mathbf{z}}\xi_{j,\theta_2}^{t}\xi_{j,\theta_2}^{\mathbf{x}} \int_{\mathbb{R}^{d_v}} \bv \xi_{j,\theta_2}^{\mathbf{v}} \psi(\bv)d\bv,\\
    & \xi_{j,\theta_2}^{\mathbf{z}} = \prod_{i = 1}^{d_z} \xi_{j,\theta_2}^{z_i}, \; \xi_{j,\theta_2}^{\mathbf{x}} = \prod_{i = 1}^{d_x} \xi_{j,\theta_2}^{x_i}, \; \xi_{j,\theta_2}^{\mathbf{v}} = \prod_{i = 1}^{d_v} \xi_{j,\theta_2}^{v_i}.
  \end{aligned}
  \right.
  \end{equation*}

  \item \textbf{Step 3.} Compute $f$, $\mathbf{U}$, $\overline{\mathbf{F}}$, and $\mathcal{M}$ as follows
  \begin{equation*}
  \left\{
  \begin{aligned}
    &\mathbf{U} = \int_{\mathbb{R}^{d_v}} f \psi(\bv) d\bv = \int_{\mathbb{R}^{d_v}} (\widetilde{\mathcal{M}} + \alpha \; g) \psi(\bv) d\bv = \widetilde{\mathbf{U}} + \alpha \; \delta \mathbf{U},\\
    &\overline{\mathbf{F}} = \int_{\mathbb{R}^{d_v}} \bv f \psi(\bv) d\bv = \int_{\mathbb{R}^{d_v}} \bv (\widetilde{\mathcal{M}} + \alpha \; g) \psi(\bv) d\bv = \widetilde{\mathbf{F}} + \alpha \; \delta \mathbf{F}, \\
    & {\mathcal{M}} = {\mathcal{M}}({\mathbf{U}};\mathbf{v}).
  \end{aligned}
  \right.
  \end{equation*}

  \item \textbf{Step 4.} Evaluate the PDE residual and data loss terms, $\mathcal{L}_1$, $\mathcal{L}_2$, and $\mathcal{L}_3$, and minimize their sum, the total loss $\mathcal{L}^{\varepsilon}$, to obtain the optimal parameter $\theta$.
\end{itemize}

To prevent numerical blow-up, we scale the PDE in $\mathcal{L}_3$ by $\varepsilon / (1+\varepsilon)$.
This bounded factor ensures the formulation remains well behaved in the limits $\varepsilon \to 0$ and $\varepsilon \to \infty$.
In the limit $\varepsilon \to 0$, the contribution of $\mathcal{L}_3$ reduces to $\mathcal{L}_3 \left\{-\nabla_{\bv} \cdot \left(\mathcal{M} \nabla_{\bv} \left( {f}/{\mathcal{M}} \right) \right)\right\}$, and minimizing this term enforces $f \equiv \mathcal{M}$.
Consequently,
\begin{equation*}
  \overline{\mathbf{F}} = \int_{\mathbb{R}_{d_v}}\bv f\psi(\bv) d\bv= \int_{\mathbb{R}_{d_v}}\bv \mathcal{M} \psi(\bv) d\bv = \mathbf{F}(\mathbf{U}),
\end{equation*}
and the PDE in the UQ-SPINN reduces to
\begin{equation*}
\left\{
\begin{aligned}
  &\partial_t \mathbf{U} + \nabla_{\bx} \cdot \mathbf{F} (\mathbf{U}) = \mathbf{S}(\mathbf{U}, \phi),\\
  &\Delta_\bx \phi = 1 - \rho,
\end{aligned}
\right.
\end{equation*}
which shows that the method satisfies the AP property.
Based on^^>\cite{oh2025separable}, we can also have the following approximation theorem
\begin{lemma}[Approximation on $\Omega\times\mathbb{R}^{d_v}$]
Let $\Omega\subset\mathbb{R}^{d_x}$ be a compact set.
Consider a particle density function $f:\Omega\times\mathbb{R}^{d_v}\to\mathbb{R}$ at a fixed time, and define the velocity weight $l(\mathbf{v})=1+|\mathbf{v}|^{2}$.
Assume that $f\in L^{2}(\Omega\times\mathbb{R}^{d_v})$ and
\[
\int_{\mathbb{R}^{d_v}} f(\cdot,\mathbf{v})\,l(\mathbf{v})\,d\mathbf{v}\ \in\ L^{2}(\Omega).
\]
Then, for every $\epsilon>0$, there exists an UQ-SPINN $f_\theta$ such that
\[
\|f-f_\theta\|_{L^{2}(\Omega\times\mathbb{R}^{d_v})} < \epsilon
\quad\text{and}\quad
\left\| \int_{\mathbb{R}^{d_v}} \big(f-f_\theta\big)\,l(\mathbf{v})\,d\mathbf{v} \right\|_{L^{2}(\Omega)} < \epsilon.
\]
\end{lemma}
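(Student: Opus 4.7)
The plan is to reduce the claim to a truncated, compact situation in velocity and to apply a tensor-product density argument in $L^{2}$ together with the standard universal approximation theorem to each of the rank-one factors. Throughout, we can set the Maxwellian background $\widetilde{\mathcal{M}}$ to moments consistent with $f$ (or even to any fixed integrable profile) and absorb the residual into the perturbation $\alpha g$, so the essential approximation burden lies on the tensor network $g$. Because $g$ is multiplied by the damping factor $e^{-|\mathbf{v}|^{2}/2}$, any uniform approximation of $g/e^{-|\mathbf{v}|^{2}/2}$ on a bounded velocity box lifts automatically to a global $L^{2}(\mathbb{R}^{d_v})$ approximation of $g$, and the weight $l(\mathbf{v})=1+|\mathbf{v}|^{2}$ remains controlled against the Gaussian tail.

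First I would perform the truncation step. Using the integrability hypotheses $f\in L^{2}(\Omega\times\mathbb{R}^{d_v})$ and $\int f\,l(\mathbf{v})\,d\mathbf{v}\in L^{2}(\Omega)$, together with dominated convergence, I would choose $R=R(\epsilon)$ so that the tails $\|f\|_{L^{2}(\Omega\times\{|\mathbf{v}|>R\})}$ and $\bigl\|\int_{|\mathbf{v}|>R} f\,l\,d\mathbf{v}\bigr\|_{L^{2}(\Omega)}$ are each below $\epsilon/3$, and analogously for $\widetilde{\mathcal{M}}$, whose Gaussian decay makes the tail estimates trivial. It then suffices to approximate $h:=f-\widetilde{\mathcal{M}}$ on the compact domain $K=\Omega\times B_{R}$ by a function of the form $\alpha g$.

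Second, I would invoke a separable (tensor-rank) density result in $L^{2}(K)$: finite sums of products $\prod_{i}\phi_{j}^{x_{i}}(x_{i})\prod_{i}\phi_{j}^{v_{i}}(v_{i})$ are dense in $L^{2}(K)$ (this is the standard Hilbert-space tensor-product completeness, or equivalently a truncated singular-value/Schmidt decomposition in the spatial-velocity split). Choosing $p$ large enough, I can match $h$ on $K$ to within $\epsilon/3$ in $L^{2}(K)$ using a rank-$p$ separable approximant, which I then rewrite in the UQ-SPINN format $\alpha g = \alpha e^{-|\mathbf{v}|^{2}/2}\sum_{j=1}^{p}\prod_{i}\xi_{j}^{x_{i}}\prod_{i}\xi_{j}^{v_{i}}$ by absorbing the exponential into the velocity factors (possible since $e^{|\mathbf{v}|^{2}/2}$ is bounded on $B_{R}$). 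Standard universal approximation theorems (Hornik/Cybenko/Pinkus) for the chosen activation $\sigma$ let me replace each continuous one-dimensional factor by the output of a subnetwork with error controlled, in supremum norm on the relevant compact interval, by an arbitrarily small amount; the overall product-and-sum error in $L^{2}(K)$ is then bounded by a triangle-inequality estimate using uniform bounds on the continuous factors.

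Finally, I would translate the $L^{2}(K)$ bound into the two required estimates. The first, $\|f-f_{\theta}\|_{L^{2}(\Omega\times\mathbb{R}^{d_v})}$, combines the truncation error outside $B_{R}$ with the tensor-approximation error inside $K$. For the second, I would use Cauchy--Schwarz: on $B_{R}$ the weight $l$ is bounded by $1+R^{2}$, and $|B_{R}|^{1/2}(1+R^{2})$ converts the $L^{2}(K)$ control of $f-f_{\theta}$ into an $L^{2}(\Omega)$ control of $\int_{B_{R}}(f-f_{\theta})\,l\,d\mathbf{v}$; the tail contribution is already small by the truncation step, using Gaussian decay of $\widetilde{\mathcal{M}}$ and of the damping in $g$ against polynomial growth of $l$. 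The main obstacle is the unbounded velocity domain together with the weighted moment: it is precisely the damping factor $e^{-|\mathbf{v}|^{2}/2}$ built into the definition of $g$, combined with the Gaussian form of $\widetilde{\mathcal{M}}$, that makes the lift from the compact approximation on $K$ to a global $L^{2}$ and weighted-moment approximation harmless, and balancing the truncation radius $R$ against the neural-network approximation error is the only delicate book-keeping step.
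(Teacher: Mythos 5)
The paper does not supply a proof of this lemma; it only cites \cite{oh2025separable} as the source. Your reconstruction (truncation in velocity $\to$ density of separable products in $L^2$ of the truncated box $\to$ universal approximation of each one--dimensional factor $\to$ lifting the compact approximation to the global domain using the built--in Gaussian damping) is the standard route for this kind of result and is almost certainly the argument the cited reference uses, so in that sense you have taken the same approach. The observation that one may fix $\widetilde{\mathcal{M}}$ to a constant--coefficient Gaussian (trivially a rank--one tensor) and push all the approximation work onto $g$ is a clean way to organize the bookkeeping, and your Cauchy--Schwarz estimate $\bigl\|\int_{B_R}(f-f_\theta)l\,d\mathbf{v}\bigr\|_{L^2(\Omega)} \le (1+R^2)|B_R|^{1/2}\,\|f-f_\theta\|_{L^2(\Omega\times B_R)}$ correctly handles the order of quantifiers: fix $R$ first, then drive the tensor error below $\epsilon/[(1+R^2)|B_R|^{1/2}]$.

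There is one genuine gap you should address: the truncation of the weighted moment. You claim that dominated convergence gives $\bigl\|\int_{|\mathbf{v}|>R}f\,l\,d\mathbf{v}\bigr\|_{L^2(\Omega)} < \epsilon/3$ for $R$ large, but the stated hypothesis controls only the \emph{signed} integral $\int_{\mathbb{R}^{d_v}}f\,l\,d\mathbf{v}$ in $L^2(\Omega)$. Dominated convergence in the $\mathbf{x}$--variable requires an $L^2(\Omega)$ dominating function; the natural dominant for the tail is $G(\mathbf{x})=\int_{\mathbb{R}^{d_v}}|f|\,l\,d\mathbf{v}$, and $G\in L^2(\Omega)$ does \emph{not} follow from $f\in L^2$ (Cauchy--Schwarz fails because $l\notin L^2(\mathbb{R}^{d_v})$) nor from the signed integral being $L^2$. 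You either need to strengthen the hypothesis to $\int|f|\,l\,d\mathbf{v}\in L^2(\Omega)$ (equivalently $f\,l\in L^2_{\mathbf{x}}L^1_{\mathbf{v}}$), which is surely the intended reading, or produce some other uniform--in--$\mathbf{x}$ tail bound; as it stands the DCT step is not justified. Two further minor points worth a sentence each: (i) factorizing $L^2(\Omega)$ over the coordinates $x_1,\dots,x_{d_x}$ requires $\Omega$ to be a box, so one should first embed $\Omega$ into a rectangle and extend $f$ by zero (likewise replace $B_R$ by $[-R,R]^{d_v}$); (ii) the outer $L^2(\Omega\times B_R^c)$ control of $f_\theta$ rests on the velocity subnetwork outputs being uniformly bounded, which holds for the sinusoidal (SIREN) activation used in the paper but should be stated explicitly so the Gaussian damping really dominates the tail.
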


\section{Variance Reduction Monte Carlo}
\label{vrmc}
For clarity, we denote the lower-fidelity solvers, ordered by decreasing fidelity relative to the high-fidelity ($\mathcal{H}$) solver, by \(\mathcal{L}_i\) for \(i=1,\ldots,I\).
The variance reduction Monte Carlo (VRMC) can be written as:
\begin{equation}
\label{mpestimator}
\left\{
\begin{aligned}
  &\mathbb{E}[f](\bx,\bv,t)
  \;\approx\;
  E_{K}^{\Lambda}[f](\bx,\bv,t)\\
  &E_{K}^{\Lambda}[f](\bx,\bv,t)
  := \left(
  \frac{1}{K}\sum_{k=1}^{K} f^{\mathcal{H}}_{k}
  - \sum_{i=1}^{I}\lambda_{i}\!\left(
      \frac{1}{K}\sum_{k=1}^{K} f^{\mathcal{L}_{i}}_{k}
      - \mathbb{E}\!\left[f^{\mathcal{L}_{i}}\right]\right)\right)(\bx,\bv,t),
\end{aligned}
\right.
\end{equation}
with the following means and optimized control parameters
\begin{equation*}
  \mathbb{E}\!\left[f^{\mathcal{L}_{i}}\right]
  := \lim_{L\to\infty}\frac{1}{L}\sum_{k=1}^{L} f^{\mathcal{L}_{i}}_{k},
  \qquad
  \lambda_i\in\mathbb{R}.
\end{equation*}
Define the control-variate random variable
\begin{equation}
\label{fLambda}
  f^{\Lambda}(\bz,\bx,\bv,t)
  :=
  f^{\mathcal{H}}(\bz,\bx,\bv,t)
  - \sum_{i=1}^{I}\lambda_{i}\!\left(
      f^{\mathcal{L}_{i}}(\bz,\bx,\bv,t)
      - \mathbb{E}\!\left[f^{\mathcal{L}_{i}}\right](\bx,\bv,t)\right).
\end{equation}
Introduce the notation
\begin{equation}
\label{notation}
\begin{aligned}
  &\Lambda=(\lambda_{1},\ldots,\lambda_{I})^{\top},\qquad
    b=(b_{i})_{i=1}^{I},\qquad
    C=(c_{ij})_{i,j=1}^{I},\\
  &b_{i}=\operatorname{Cov}\!\big(f^{\mathcal{H}},f^{\mathcal{L}_{i}}\big),\qquad
    c_{ij}=\operatorname{Cov}\!\big(f^{\mathcal{L}_{i}},f^{\mathcal{L}_{j}}\big),
\end{aligned}
\end{equation}
so that \(C\) is the symmetric \(I\times I\) covariance matrix.
From \eqref{fLambda}–\eqref{notation}, the variance of \(f^{\Lambda}\) is
\begin{equation}
\label{varfL}
  \operatorname{Var}\!\left(f^{\Lambda}\right)
  =
  \operatorname{Var}\!\left(f^{\mathcal{H}}\right)
  + \Lambda^{\top} C \Lambda
  - 2\,\Lambda^{\top} b .
\end{equation}

\begin{lemma}
\label{lem:optCV}
If the covariance matrix \(C\) is nonsingular, then
\begin{equation}
\label{sLambda}
  \tilde{\Lambda}=C^{-1}b
\end{equation}
minimizes \(\operatorname{Var}(f^{\Lambda})\) at \((\bx,\bv,t)\), and the minimal variance is
\begin{equation}
\label{varfLt}
  \operatorname{Var}\!\left(f^{\tilde{\Lambda}}\right)
  =
  \left(1-\frac{b^{\top}C^{-1}b}{\operatorname{Var}(f^{\mathcal{H}})}\right)
  \operatorname{Var}(f^{\mathcal{H}}).
\end{equation}
\end{lemma}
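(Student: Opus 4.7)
The plan is to view \eqref{varfL} as a convex quadratic form in the vector parameter $\Lambda\in\mathbb{R}^{I}$ and minimize it by the standard calculus argument. Writing
\[
\Phi(\Lambda) := \operatorname{Var}\!\left(f^{\Lambda}\right) = \operatorname{Var}\!\left(f^{\mathcal{H}}\right) + \Lambda^{\top} C \Lambda - 2\,\Lambda^{\top} b,
\]
the function $\Phi$ is a quadratic polynomial in $\Lambda$ whose Hessian equals $2C$. Since $C$ is a covariance matrix of the random vector $(f^{\mathcal{L}_1},\ldots,f^{\mathcal{L}_I})^{\top}$, it is symmetric and positive semidefinite; the assumption that $C$ is nonsingular promotes this to positive definiteness. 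Hence $\Phi$ is strictly convex and admits a unique global minimizer, determined by the first-order condition.

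Next I would compute the gradient. Differentiating $\Phi$ with respect to $\Lambda$ yields
\[
\nabla_{\Lambda}\Phi(\Lambda) = 2\,C\Lambda - 2\,b,
\]
and setting this to zero gives the linear system $C\Lambda = b$. Under the invertibility hypothesis this has the unique solution $\tilde{\Lambda} = C^{-1}b$, which is the claim \eqref{sLambda}. Strict convexity (Hessian $2C\succ 0$) confirms that $\tilde{\Lambda}$ is the global minimizer rather than merely a critical point.

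To obtain the minimal-variance formula \eqref{varfLt}, I would substitute $\tilde{\Lambda}=C^{-1}b$ back into $\Phi$. Using symmetry of $C^{-1}$, the quadratic term simplifies:
\[
\tilde{\Lambda}^{\top}C\tilde{\Lambda} - 2\,\tilde{\Lambda}^{\top}b = (C^{-1}b)^{\top}C(C^{-1}b) - 2(C^{-1}b)^{\top}b = b^{\top}C^{-1}b - 2\,b^{\top}C^{-1}b = -\,b^{\top}C^{-1}b,
\]
so that $\Phi(\tilde{\Lambda}) = \operatorname{Var}(f^{\mathcal{H}}) - b^{\top}C^{-1}b$. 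Factoring out $\operatorname{Var}(f^{\mathcal{H}})$ (which may be assumed strictly positive, since otherwise the statement is trivial) produces exactly \eqref{varfLt}.

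There is no real analytical obstacle: the result is essentially the Gauss–Markov/normal-equations argument for multivariate control variates, and all ingredients, namely the quadratic expansion \eqref{varfL}, positive definiteness of $C$, and the algebraic identity $\tilde{\Lambda}^{\top}C\tilde{\Lambda}=b^{\top}C^{-1}b$ at the optimum, are immediate. The only mild care needed is to justify that positive semidefiniteness of the covariance matrix plus the nonsingularity hypothesis yields positive definiteness, which is standard since $C\succeq 0$ and $\det C\neq 0$ together imply all eigenvalues are strictly positive.
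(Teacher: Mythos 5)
Your proof is correct and follows essentially the same route as the paper's: differentiate the quadratic variance expression in $\Lambda$, solve the first-order condition $C\Lambda=b$, invoke positive definiteness of $C$ (from nonsingularity plus $C\succeq 0$) to certify a global minimum, and substitute back to obtain \eqref{varfLt}. The only difference is that you spell out the substitution algebra and the positive-semidefinite-plus-nonsingular-implies-positive-definite step more explicitly, whereas the paper states these more tersely.
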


\begin{proof}
Differentiating \eqref{varfL} with respect to \(\lambda_i\) and setting the derivatives to zero yields
\[
  0 = \frac{\partial}{\partial\lambda_i}\operatorname{Var}(f^{\Lambda})
    = 2(C\Lambda-b)_i,\quad i=1,\ldots,I,
\]
i.e., \(C\Lambda=b\). If \(C\) is nonsingular, the unique solution is \(\tilde{\Lambda}=C^{-1}b\).
Since the Hessian of \(\operatorname{Var}(f^{\Lambda})\) with respect to \(\Lambda\) is \(2C\), which is positive definite under the assumption, \(\tilde{\Lambda}\) indeed minimizes the variance.
Substituting \(\tilde{\Lambda}\) into \eqref{varfL} gives \eqref{varfLt}.
\end{proof}
\begin{remark}
To illustrate the methodology, we consider the homogeneous setting with two control variates (\(I=2\)):
\(f^{\mathcal{L}_1}=f^{0}\) (the initial data) and \(f^{\mathcal{L}_2}=\mathcal{M}\) (the stationary state).
A direct calculation shows that the optimal coefficients \(\tilde{\lambda}_1\) and \(\tilde{\lambda}_2\) are
\begin{equation}
\label{lambda12}
\begin{aligned}
  \tilde{\lambda}_1
  &= \frac{\operatorname{Var}(\mathcal{M})\,\operatorname{Cov}\!\big(f^{\mathcal{H}}, f^{0}\big)
      - \operatorname{Cov}\!\big(f^{0}, \mathcal{M}\big)\,\operatorname{Cov}\!\big(f^{\mathcal{H}}, \mathcal{M}\big)}
      {\Delta},\\
  \tilde{\lambda}_2
  &= \frac{\operatorname{Var}(f^{0})\,\operatorname{Cov}\!\big(f^{\mathcal{H}}, \mathcal{M}\big)
      - \operatorname{Cov}\!\big(f^{0}, \mathcal{M}\big)\,\operatorname{Cov}\!\big(f^{\mathcal{H}}, f^{0}\big)}
      {\Delta},
\end{aligned}
\end{equation}
where \(\Delta=\operatorname{Var}(f^{0})\,\operatorname{Var}(\mathcal{M})-\operatorname{Cov}\!\big(f^{0},\mathcal{M}\big)^{2}\neq 0\).
Using \(K\) samples for both control variates, the optimal estimator is
\begin{equation*}
\begin{aligned}
  E_{K}^{\tilde{\lambda}_1,\tilde{\lambda}_2}[f](\bv,t)
  &= \frac{1}{K}\sum_{k=1}^{K} f^{\mathcal{H}}_{k}(\bv,t)
   - \tilde{\lambda}_1\!\left(\frac{1}{K}\sum_{k=1}^{K} f^{0}_{k}(\bv) - \mathbb{E}[f^{0}](\bv)\right) \\
  &\quad - \tilde{\lambda}_2\!\left(\frac{1}{K}\sum_{k=1}^{K} \mathcal{M}_{k}(\bv) - \mathbb{E}[\mathcal{M}](\bv)\right).
\end{aligned}
\end{equation*}
At \(t=0\), since \(f^{\mathcal{H}}(\bz,\bv,0)=f^{0}(\bz,\bv)\), we immediately obtain \(\tilde{\lambda}_1=1\) and \(\tilde{\lambda}_2=0\).
Hence the estimator reduces to the exact expression \(E_{K}^{1,0}[f](\bv,0)=\mathbb{E}[f^{0}](\bv)\).
As \(t\to\infty\), if \(f^{\mathcal{H}}(\bz,\bv,t)\to \mathcal{M}(\bz,\bv)\), the coefficients in \eqref{lambda12} satisfy
\[
  \lim_{t\to\infty}\tilde{\lambda}_1=0,
  \qquad
  \lim_{t\to\infty}\tilde{\lambda}_2=1,
\]
and the estimator converges to
\[
  \lim_{t\to\infty} E_{K}^{\tilde{\lambda}_1,\tilde{\lambda}_2}[f](\bv,t)
  = E_{K}^{0,1}[f](\bv) = \mathbb{E}[\mathcal{M}](\bv).
\]
\end{remark}

\section{Numerical Examples}
\label{ne}
In the following numerical experiments, MC denotes the standard Monte Carlo method, with samples generated by the deterministic Landau approach. 
UQ-SPINN($\mathcal{F}_I$) refers to a VRMC scheme in which samples of function $\mathcal{F}$ are produced by applying the UQ-SPINN method to equation $I$. 
UQ-SPINN($\mathcal{F}_I, \mathcal{F}_J$) denotes a multiple variance-reduction scheme that generates samples of function $\mathcal{F}$  from UQ-SPINN approximations associated with equations $I$ and $J$, respectively. 
Some symbols and abbreviations used in the numerical examples are summarized in Table~\ref{tab:mark}.

For the UQ-SPINN component, we use a SIREN network with sinusoidal activations: an input layer (one feature), two hidden layers of width $128$ (overall depth of three), and a linear output layer of rank $p=256$. 
The SIREN frequency parameter is $w_0=10$. 
Optimization uses the Optax Lion optimizer with zero weight decay and a cosine-decay learning-rate schedule, initialized at $1\times10^{-5}$ (i.e., $10^{-4}/w_0$) and annealed to zero over the total number of training iterations. 
Training was performed on an NVIDIA A30 GPU. 
The deterministic solver is implemented in Fortran and executed on an Intel Xeon Gold 6130 CPU at 2.10\,GHz.
\begin{table}[htbp]
\centering
\label{tab:mark}
\caption{\sf Symbols and abbreviations used in the numerical examples.} 
\footnotesize{
\begin{tabular}{c|c c}
\hline\hline

    Category & Notation & Description \\\hline
\multirow{4}{*}{Subscript} & FP & Homogeneous Fokker--Planck \\
& EP & Euler-Poisson \\
& VPFP & Vlasov-Poisson-Fokker--Planck \\
& VPL & Vlasov--Poisson--Landau \\ \hline
Superscript & * & Calibrated Model     \\\hline\hline
\end{tabular}
}
\end{table}

\subsection{Model calibration}
\label{exam0}
As the VPFP model approximates the VPL equation by replacing its collision operator with a simplified surrogate, its accuracy depends strongly on the choice of the collision frequency $\mu$, which controls the relaxation rate toward equilibrium. 
We assess $\mu$ by computing the mean $L_1$- and $L_\infty$-norm errors, over a dataset of distributions $f$, between the calibrated Fokker--Planck operator $\mu\,P(f)$ and the Landau collision operator $Q(f,f)$. 
The results, summarized in the left pannel of Fig.~\ref{CollisionError}, show that the error is minimized at $\mu^{-1} \approx 13$.
On the right of Fig.~\ref{CollisionError}, we plot the time evolution of the model error with respect to VPL for three cases: the standard model ($\mu^{-1}=1$), the calibrated model ($\mu^{-1}=13$), and the calibrated model augmented with VPL data. 
Calibration substantially reduces the discrepancy relative to VPL, and incorporating VPL data further decreases the error.

\begin{figure}[tb]
    \begin{center}
        \mbox{
        {\includegraphics[width = 0.45 \textwidth, trim=0 0 0 0,clip]{./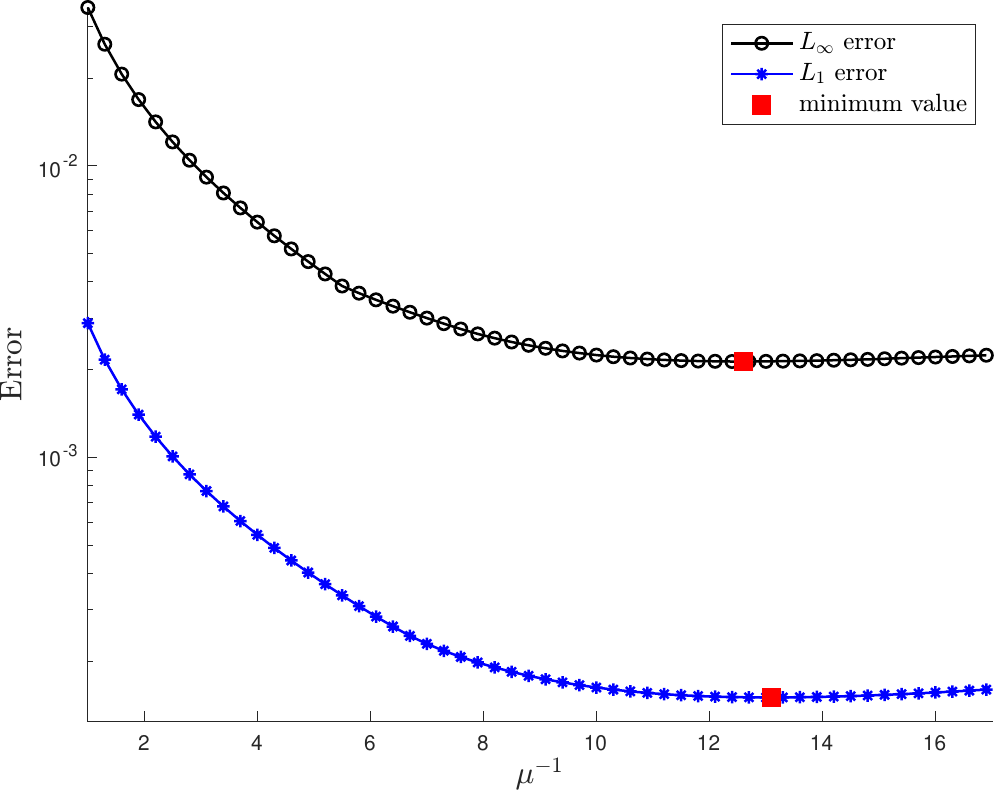}}
        {\includegraphics[width = 0.45 \textwidth, trim=0 0 0 0,clip]{./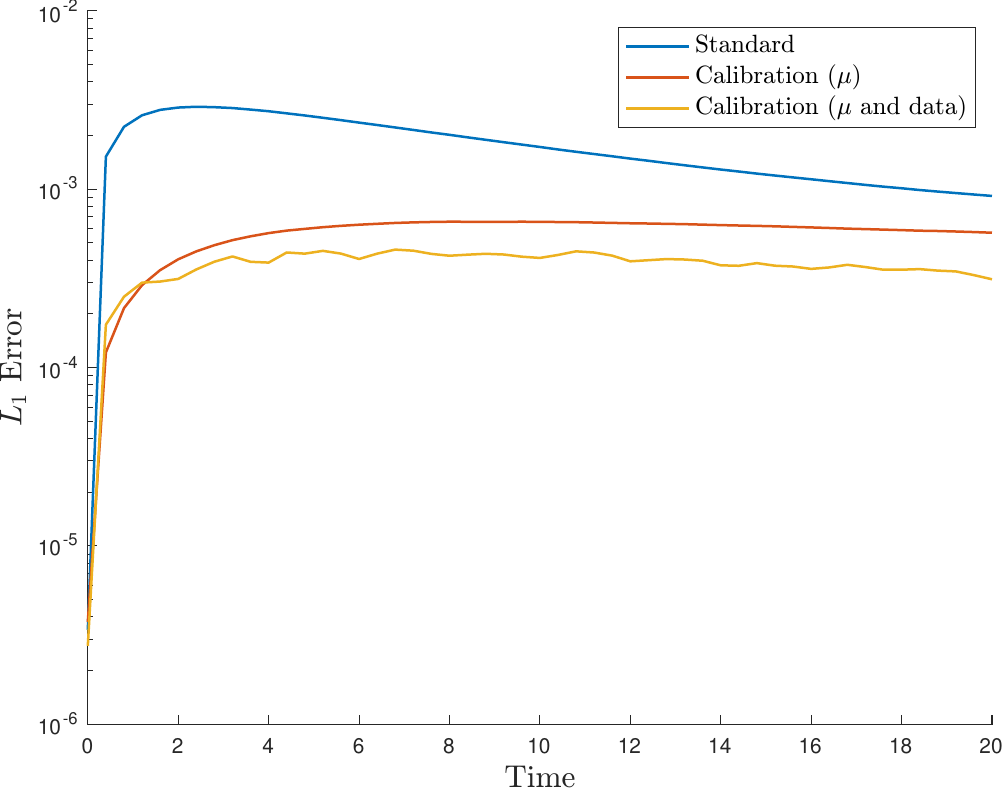}}}
        \caption{\sf Left: the $L_1$ and $L_{\infty}$ errors between Landau operator $Q(f,f)$ and the calibrated Fokker--Planck operator $\mu P(f)$; Right: the model error with respect to VPL for standard model, calibrated model without data, and calibrated model augmented with VPL data.}
        \label{CollisionError}
    \end{center}
\end{figure}

To highlight the benefit of using an anisotropic reference Maxwellian in the decomposition \eqref{decom}, we compare it with the isotropic reference and moment-matched Maxwellian by applying UQ-SPINN to the homogeneous VPFP equation with the initial condition
{\small
\begin{equation*}
f_0(\mathbf v)
=\frac{3}{16\pi}\!\left(
\exp\!\left[-\frac{(v_x-1.5)^2}{2}-\frac{(v_y-0.5)^2}{4}\right]
+
\exp\!\left[-(v_x+1.5)^2-\frac{(v_y+2.5)^2}{2}\right]
\right).
\end{equation*}}
We emphasize that the background term \(\widetilde{\mathcal{M}}\) is constructed from the \((1+d_x)\)-dimensional macroscopic vector \(\widetilde{\mathbf W}\) via a Maxwellian formula. 
Consequently, it is substantially easier to learn than the \((1+d_x+d_v)\)-dimensional residual \(g\).
When \(\widetilde{\mathcal{M}}\) is accurately approximated, the residual \(g\) only needs to correct the remainder \(f-\widetilde{\mathcal{M}}\), which in turn improves both accuracy and efficiency.
Fig.~\ref{Mg} shows the background \(\widetilde{\mathcal M}\) and the residual \(g\) obtained from the three decompositions.
Relative to the isotropic reference $\widetilde{\mathcal{M}}_{\rm iso} = \mathcal{M}(\widetilde{\mathbf{U}})$ and moment-matched case $\widetilde{\mathcal{M}}_{\rm mom} = \mathcal{M}({\mathbf{U}})$, the anisotropic Maxwellian \eqref{decom} captures more of the dominant structure of \(f\), yielding a smoother and more learnable residual \(g\).
Therefore, as shown in Fig.~\ref{normandloss}, the resulting error of the anisotropic Maxwellian is approximately half of that produced by the isotropic-Maxwellian baseline.
The moment-matched Maxwellian decomposition yields a more complex \(g\) and, consequently, a training error approximately two orders of magnitude greater than that of the best-performing anisotropic reference decomposition. 

\begin{figure}[tb]
    \begin{center}
        \mbox{
        {\includegraphics[width = 0.48 \textwidth, trim=15 0 15 0,clip]{./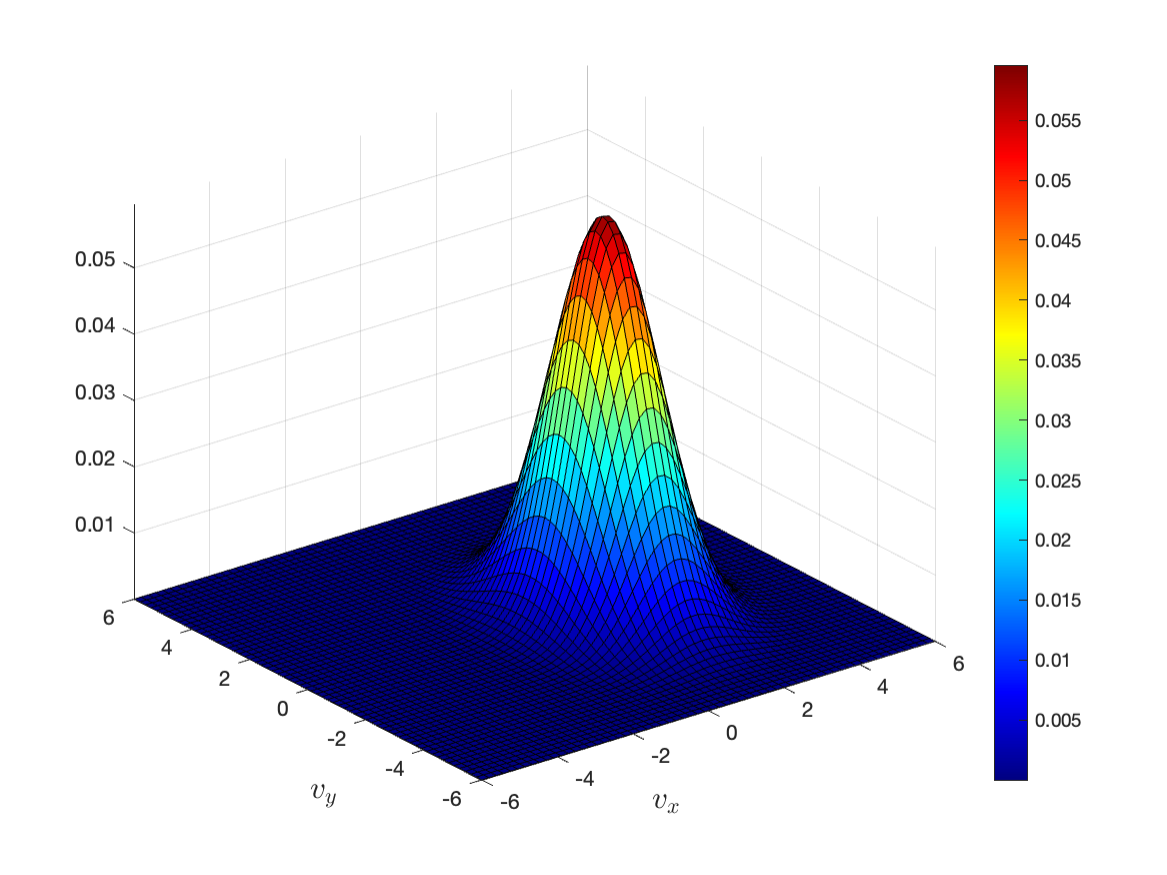}}
        {\includegraphics[width = 0.48 \textwidth, trim=15 0 15 0,clip]{./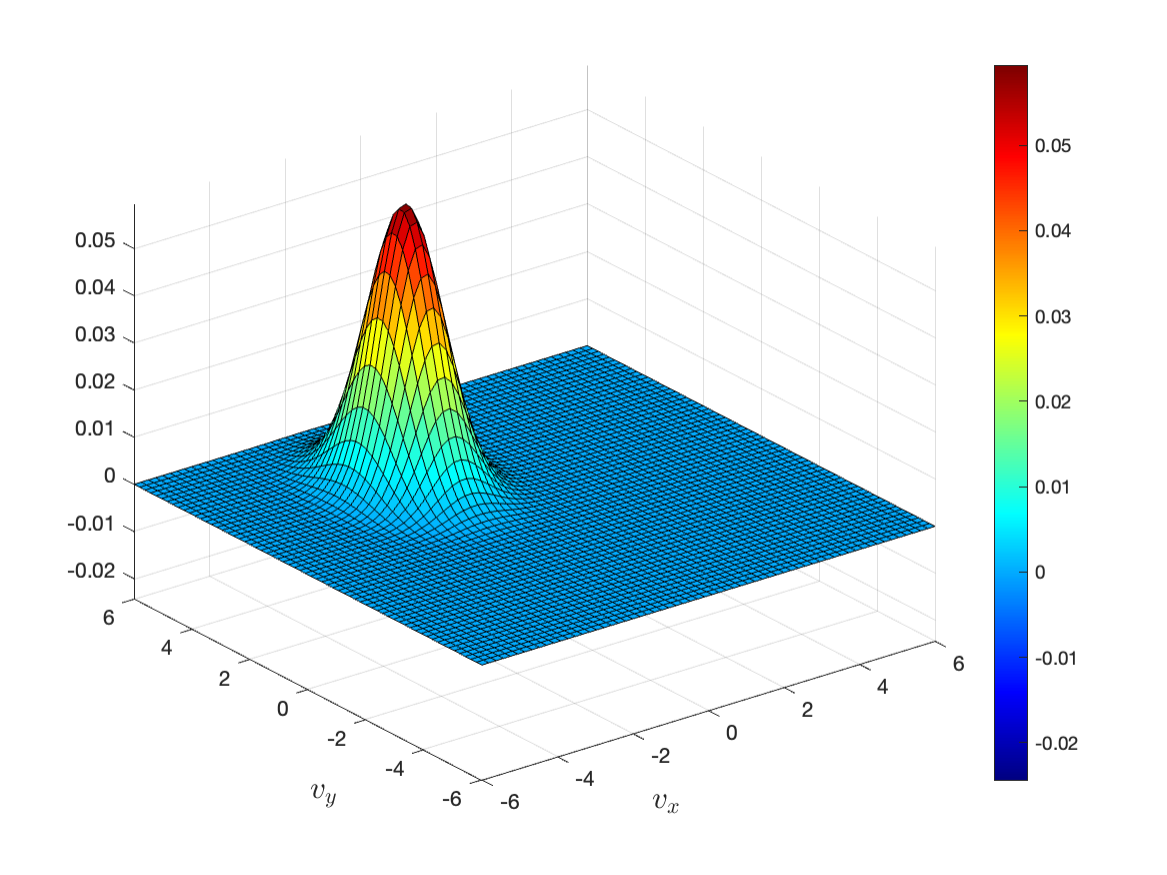}}}
        \mbox{
        {\includegraphics[width = 0.48 \textwidth, trim=15 0 15 0,clip]{./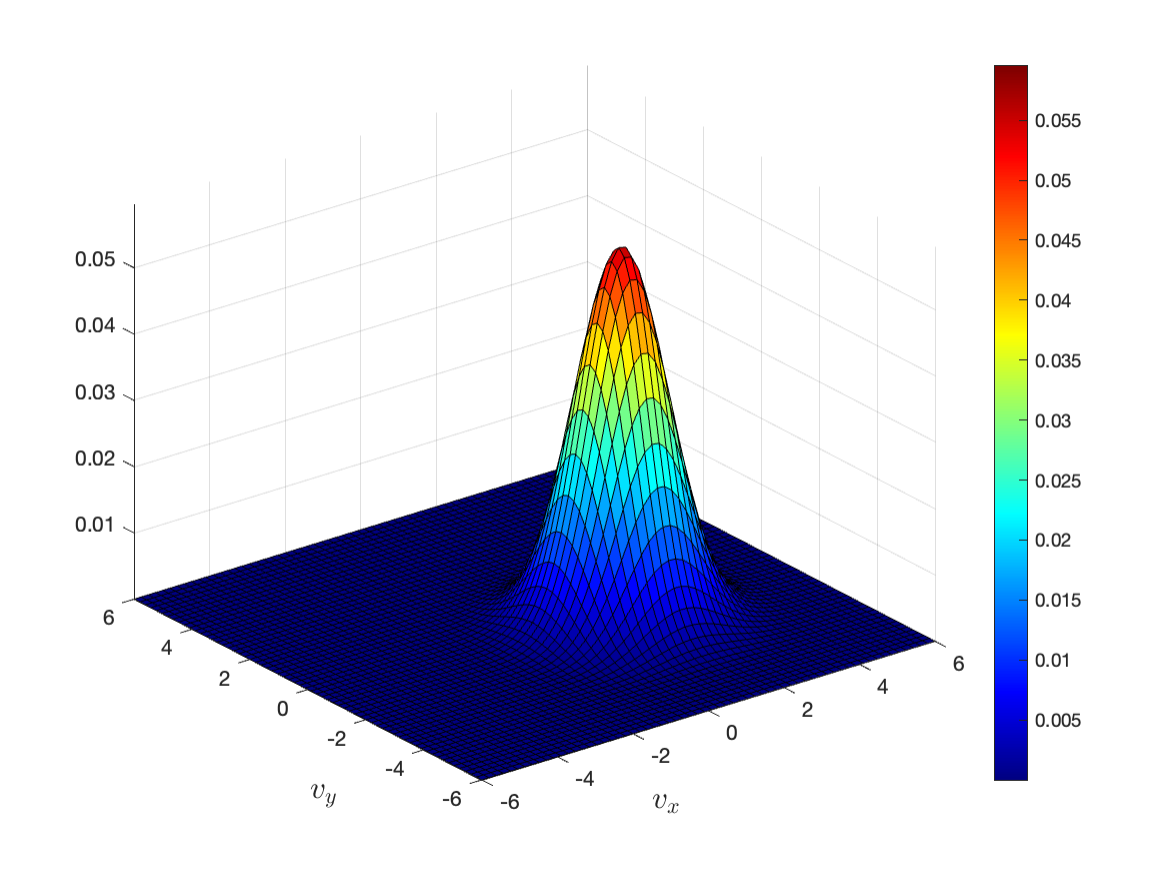}}
        {\includegraphics[width = 0.48 \textwidth, trim=15 0 15 0,clip]{./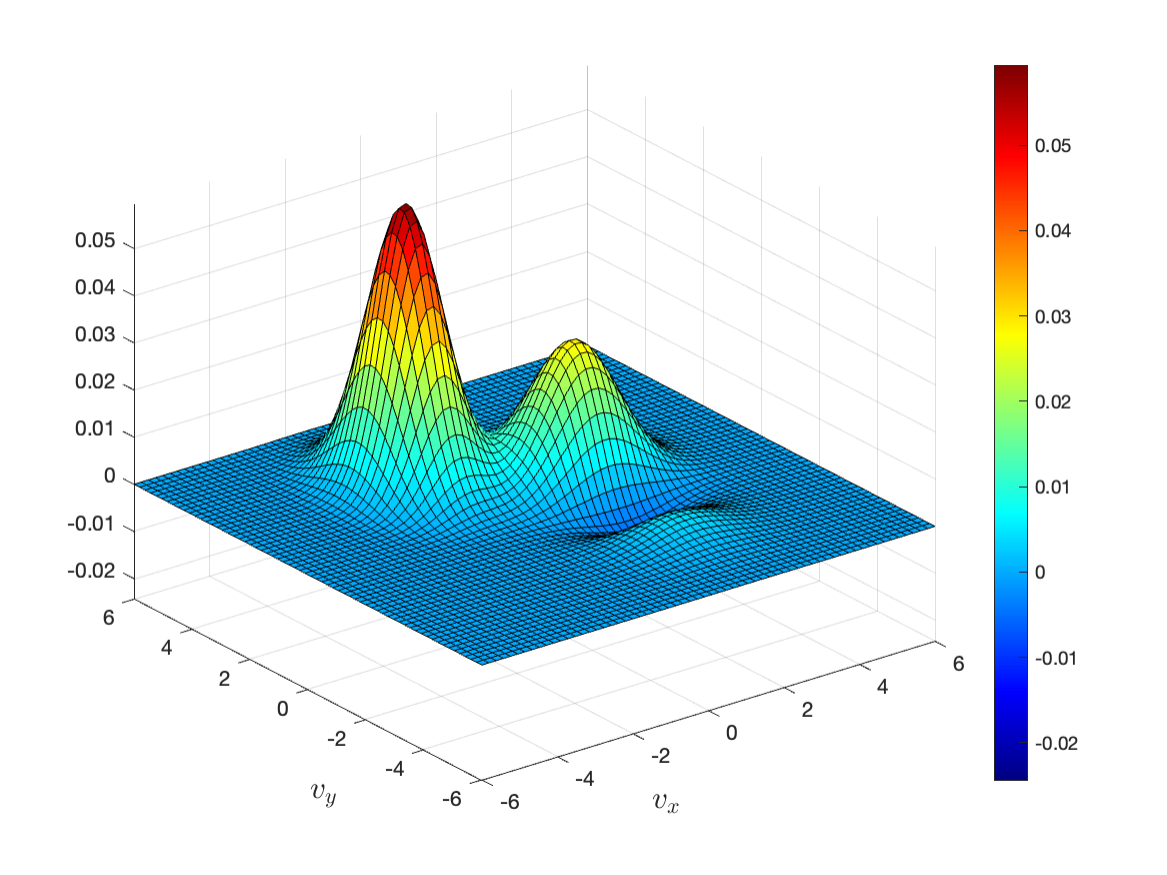}}}
        \mbox{
        {\includegraphics[width = 0.48 \textwidth, trim=15 0 15 0,clip]{./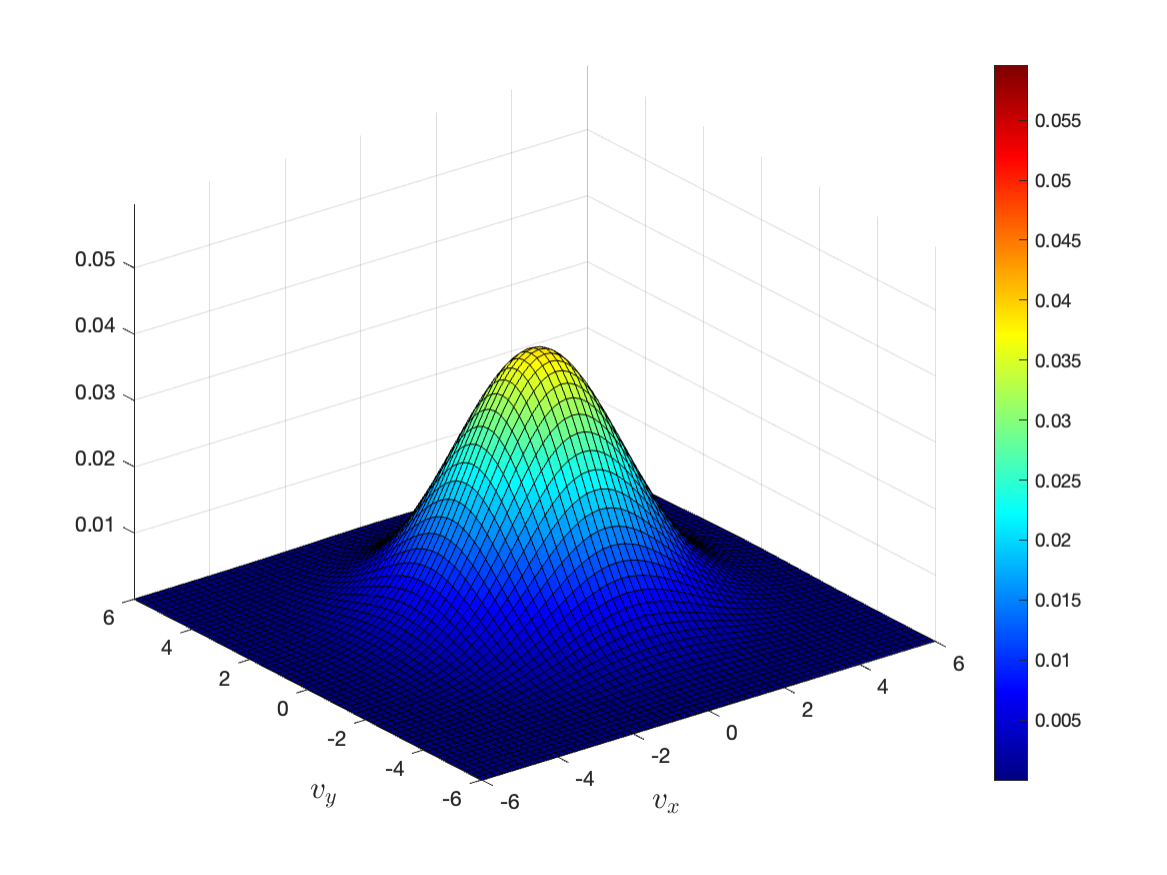}}
        {\includegraphics[width = 0.48 \textwidth, trim=15 0 15 0,clip]{./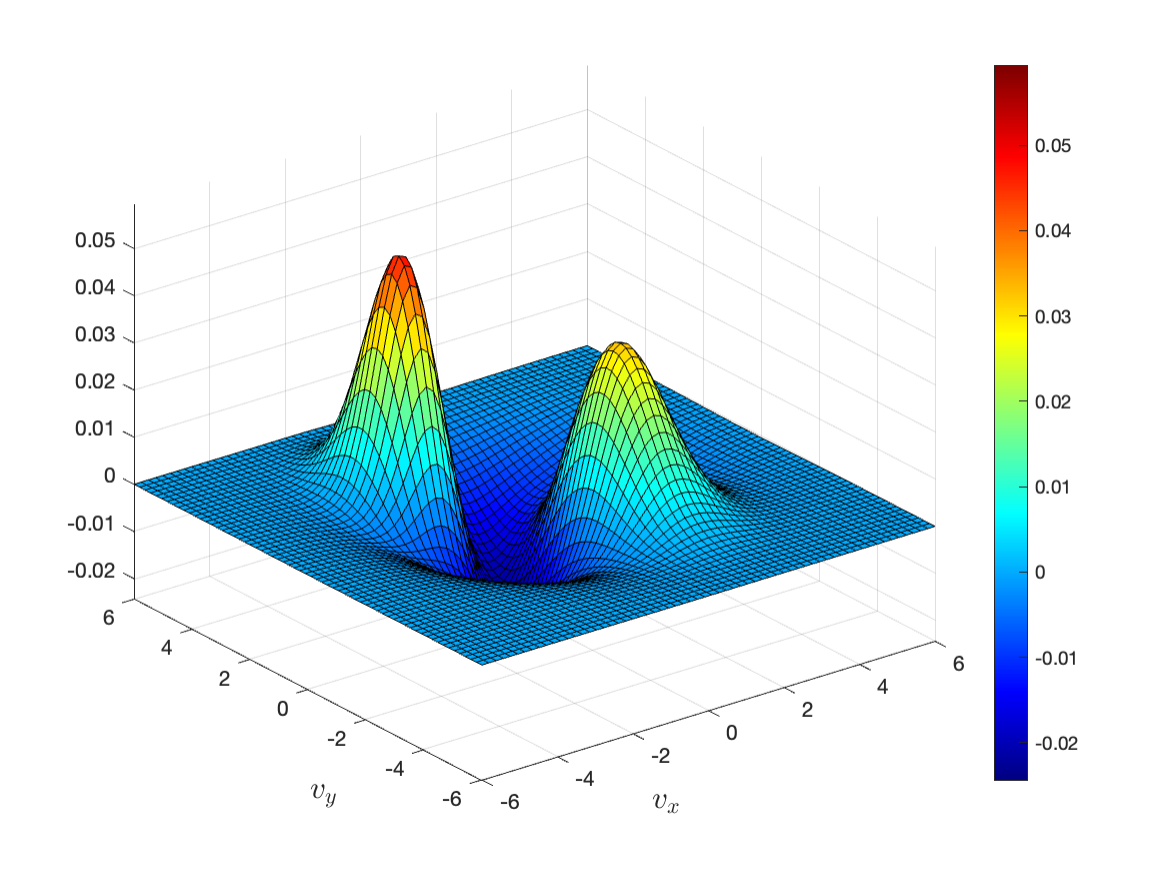}}}
        \caption{\sf Different decomposition. Left: background term \(\widetilde{\mathcal{M}}\); Right: perturbation term \(g\); Top: anisotropic reference Maxwellian; Middle: isotropic reference Maxwellian; Bottom: moment-matched Maxwellian.}
        \label{Mg}
    \end{center}
\end{figure}

\begin{figure}[tb]
    \begin{center}
        \mbox{
        {\includegraphics[width = 0.45 \textwidth, trim=0 0 0 0,clip]{./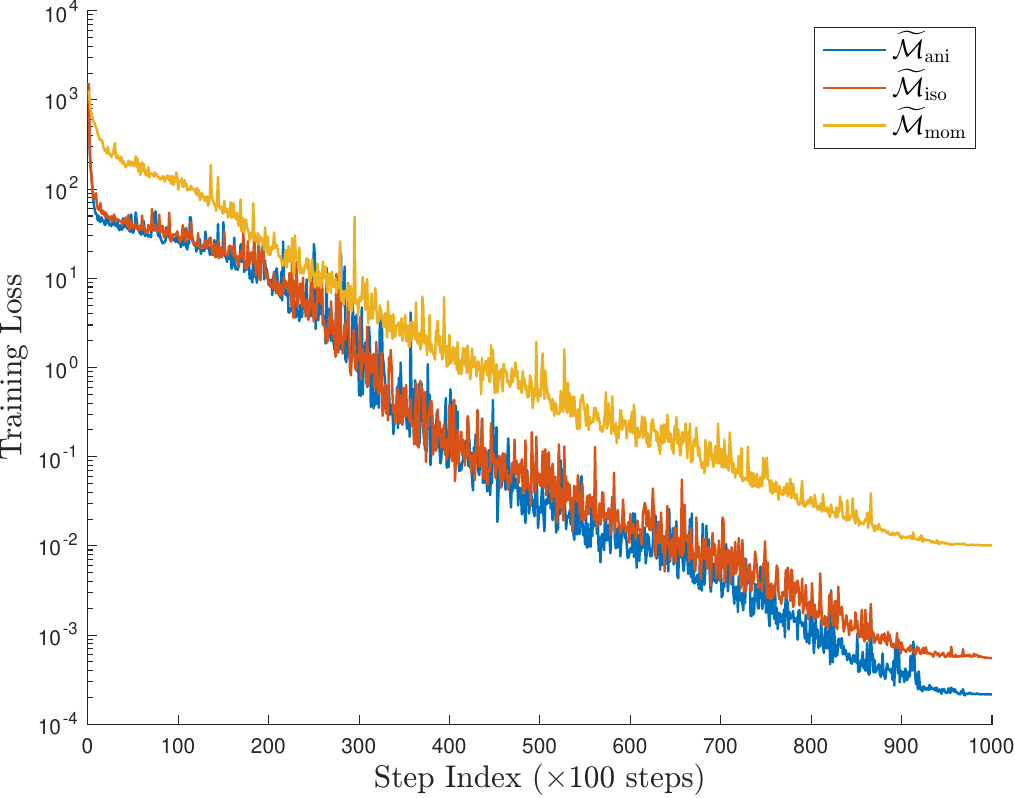}}}
        \caption{\sf Training loss for Example \ref{exam0}.}
        \label{normandloss}
    \end{center}
\end{figure}

\subsection{Two bubbles relaxation}
\label{exam1}
In this example, we study a two-bump problem with uncertainty. 
The initial distribution is
\begin{equation*}
  f_0(\mathbf{z}, \bv) = \frac{\rho_0}{2 \pi} \left( \exp{\left(- \frac{\vert \bv - \mathbf{s} + d \vert^2}{\sigma} \right)} + \exp{\left(- \frac{\vert \bv - \mathbf{s} - d\vert^2}{\sigma} \right)}\right),
\end{equation*}
where $d = 1.5$ and $\mathbf{s}(\mathbf{z}) = z_1(\sin{(2 \pi z_2)}, \cos{(2 \pi z_2)})^{\top}$ denotes the stochastic displacement vector. 
Here, $\mathbf{z} = (z_1, z_2)^{\top} \in [-1, 1] \times [0, 1]$ are random variables: $z_1$ controls the magnitude of displacement, while $z_2$ determines the direction angle. 
The reference density is $\rho_0 = 0.75$, and the parameter $\sigma = 2$ controls the thermal width of the bumps.
The velocity domain is truncated to $\bv \in [-6, 6]^2$ with Knudsen number $\varepsilon = 1$. 
The mesh is defined by $N_{\bv} = 64^2$ points for the deterministic solver of Landau equation.
This setup constitutes a five-dimensional problem: two velocity dimensions, two uncertainty dimensions, and one time dimension.

Fig.~\ref{BubblesError} reports the $L_1$ error for short-time ($T=2$) and long-time ($T=20$) behavior across the different methods, using $K=5$ samples for the expected-value estimator and $L=2500$ samples for the control-variate estimator.
As the initial-value problem evolves in time, the bimodal distribution relaxes toward a Maxwellian and the perturbation amplitude decays; consequently, the MC error decreases and eventually plateaus. 
For variance reduction based on the FP collision operator, the FP dynamics approach a Maxwellian much faster than the Landau operator, so the error of UQ initially increases with time and subsequently decreases as the Landau dynamics themselves approach steady state. 
Even so, the FP-based estimator remains more accurate than plain MC throughout. 
Using a calibrated FP operator exhibits a similar temporal error profile but achieves higher accuracy because it more closely matches the Landau operator. 
In both FP-based schemes, optimally chosen weights consistently outperform the fixed weight of one, and a multiple variance-reduction estimator that combines the FP and calibrated-FP operators attains the best accuracy.


\begin{figure}[tb]
    \begin{center}
        \mbox{
        {\includegraphics[width = 0.45 \textwidth, trim=0 0 0 0,clip]{./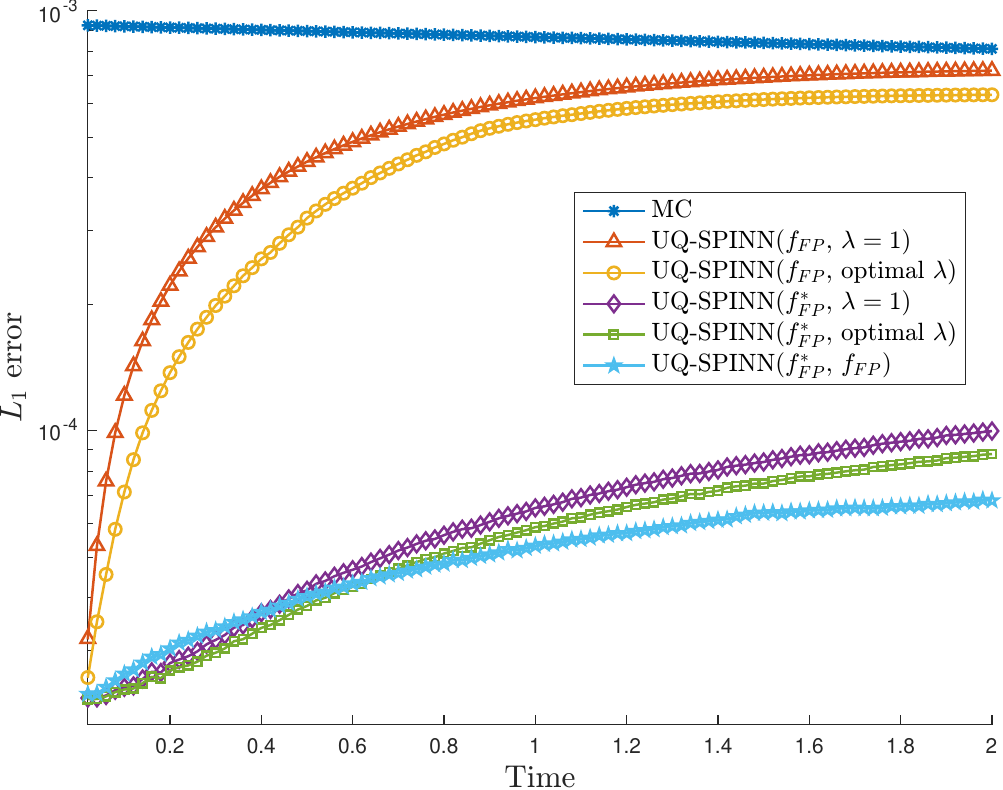}}
        {\includegraphics[width = 0.45 \textwidth, trim=0 0 0 0,clip]{./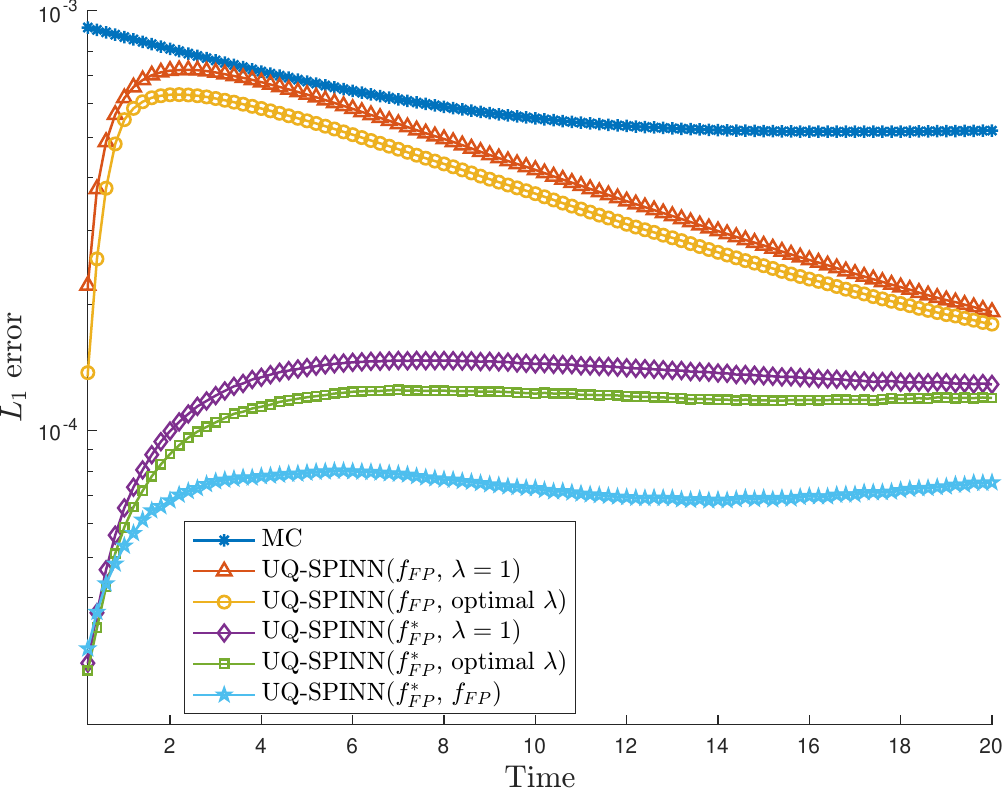}}}
        \caption{\sf $L_1$ error of expectation for Example \ref{exam1}. The number of samples used to compute the expected value and to construct the control variate are $K = 5$ and $L = 2500$, respectively. Left: short time behavior; Right: long time behavior.}
        \label{BubblesError}
    \end{center}
\end{figure}

To assess the generalization capability of UQ-SPINN, we train the model on a subdomain $\Omega_{z_1}^* \subset \Omega_{z_1} \equiv [-1,1]$ in the $z_1$ direction and generate FP samples over the full domain $\Omega_{z_1}$ to reduce the $\mathbb{E}[f]$ error of Landau. 
The results are reported in Fig.~\ref{BubblesDifferentZ}.
Even when the training region covers only $3/5$ of the evaluation domain, UQ-SPINN substantially improves UQ accuracy relative to MC.
Furthermore, enlarging the training subdomain $\Omega_{z_1}^*$ enhances generalization to the entire domain $\Omega_{z_1}$ and yields further reductions in UQ error.
\begin{figure}[tb]
    \begin{center}
        \mbox{
        {\includegraphics[width = 0.5 \textwidth, trim=0 0 0 0,clip]{./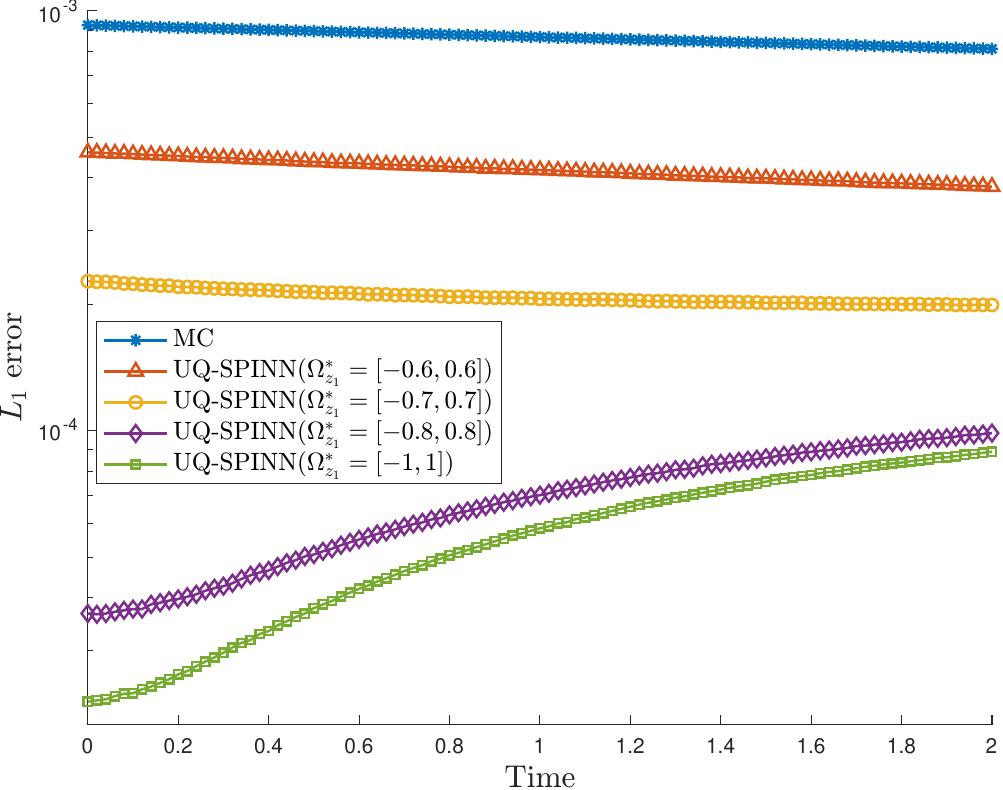}}}
        \caption{\sf $L_1$ error of expectation for Example \ref{exam1} under different training region.}
        \label{BubblesDifferentZ}
    \end{center}
\end{figure}

\subsection{Linear Landau damping}
\label{exam2}
In this test case, we consider the initial condition
\begin{equation*}
  f_0(z, x, \mathbf{v})
  = \frac{1}{2\pi}\!\left(1 + (4 + 2z)\,\alpha \cos(kx)\right)
    \exp\!\left(-\frac{v_x^2 + v_y^2}{2}\right),
\end{equation*}
with $\alpha=0.01$ and $k=0.5$. 
The uncertainty, spatial, and velocity domains are $\Omega_{z}=[0,1]$, $\Omega_x=[0,4\pi]$, and $\Omega_{\mathbf{v}}=[-6,6]^2$, respectively. 
Periodic boundary conditions are imposed on $f$ in space, while in velocity space $f$ satisfies homogeneous Dirichlet conditions on $\partial\Omega_{\mathbf{v}}$.
The Poisson equation is solved with Dirichlet boundary conditions. 
The deterministic solver for VPL uses a mesh with $N_x=65$ spatial points and $N_{\mathbf{v}}=32^2$ velocity points. 
We monitor the base-10 logarithm of the $L_2$ norm of the electric field,
\begin{equation*}
  \zeta(t)
  = \log_{10}\!\left\|\mathbf{E}(t,\cdot)\right\|_{L^2(\Omega_x)}
  = \log_{10}\!\left(\left[\int_{\Omega_x}\lvert \mathbf{E}(t,x)\rvert^2\,\mathrm{d}x\right]^{1/2}\right),
\end{equation*}
and simulate up to the final time $T=10$ for three Knudsen numbers: collisionless $\varepsilon=1e+6$, mild collisions $\varepsilon=1$, and strong collisions $\varepsilon=1e-4$.

Neural network surrogates for PDEs degrade over long horizons as stepwise errors compound and the state drifts off the physical manifold. 
To stabilize long-time behavior, we employ distribution-matched training on successive time windows. 
We compare global training on the full horizon with windowed training that partitions the interval into \([0,4]\), \([4,7]\), and \([7,10]\). 
Under the collisionless setting---where the VPL reference and the VPFP model coincide---the left panel of Fig.~\ref{LLDCompare} shows the time evolution of \(\zeta\): windowed training closely matches the reference, whereas global training performs poorly and fails to capture the pertinent structures. 
For the mild-collision regime, shown in the right panel of Fig.~\ref{LLDCompare}, the VPL and VPFP references exhibit a marked discrepancy; incorporating calibration parameter $\mu$ and data-correction term in UQ-SPINN drives the predictions toward the VPL reference.

We draw $K = 15$ parameter values \(z\sim\mathcal{U}[0,1]\) and compute VPL samples with a deterministic solver. 
Comparing the sample-mean density with the reference density yields the MC error shown in the left panel of Fig.~\ref{LLD}. 
Using these samples as supervision, we train a UQ-SPINN based on the calibrated VPFP system; to obtain better results, we partition the time interval into five equal subintervals, each of length~2. 
As shown in the right panel of Fig.~\ref{LLD}, we evaluate the trained model at $L = 20000$ random inputs $z$ produces \(\mathbb{E}\!\left[\zeta^{*}_{{VPFP}}\right]\) together with its confidence band, across collision regimes, \(\mathbb{E}\!\left[\zeta^{*}_{VPFP}\right]\) closely matches the reference \(\mathbb{E}[\zeta]\). 
We then combine these $L$ calibrated VPFP samples with the $K$ deterministic VPL samples to construct a variance-reduction estimator UQ-SPINN($\rho^*_{VPFP}$), which reduces the MC density error by roughly two orders of magnitude. 
We also train an EP-based UQ-SPINN and generate $L$ samples. 
At \(\varepsilon=1e-4\), its error is comparable to that of the calibrated VPFP-based model, consistent with the fact that both VPFP and VPL converge to EP in this limit. 
For \(\varepsilon\gg1e-4\), the calibrated VPFP model delivers a more stable and pronounced error reduction than EP. 
We note that the density amplitude in the vicinity of a discontinuity is extremely small, and—unless very high-order, low-diffusion numerics are used—these fine features are smoothed out. 
Moreover, the discontinuity locations in EP do not coincide with those of the large-\(\varepsilon\) cases observed for VPFP/VPL (e.g., \(\varepsilon=1\) or \(1e+6\)). 
Consequently, at positions where a discontinuity is present for large \(\varepsilon\) but absent for EP, the EP solution retains a larger amplitude and, because the dependence on \(z\) is nearly linear for both the density and \(\zeta\) in this regime, carries more information content, allowing EP to surpass calibrated VPFP locally. 
Conversely, whenever EP exhibits a discontinuity while the large-\(\varepsilon\) model does not, EP provides less information and its variance-reduction effect weakens; overall, EP is markedly less stable than calibrated VPFP. 
As nonlinearity increases (Example \ref{exam3}), this advantage further diminishes: the \(z\)-dependence becomes more intricate and regime-dependent—see Fig.~\ref{NLD} (right) for the comparison of \(\zeta\) at \(z=0\) and \(z=1\)—and the variance-reduction capability of EP samples degrades.
Finally, the EP-based and calibrated VPFP-based surrogates have already produced enough samples to realize variance reduction. 
Under the accuracy limits of the neural networks, multiple VRMC does not further decrease the variance and may even degrade accuracy because of finite-precision issues in the weights; accordingly, we omit those results. 

Table~\ref{tab:runtime} reports separately the UQ-SPINN one-time offline training cost of the neural-network surrogates and the online inference cost required to generate additional samples for the different test cases; the latter is the relevant quantity in the variance-reduction setting, where tens of thousands of realizations are required.

For this test case, assuming the same 20000 samples, the wall-clock times for the EP-based UQ-SPINN, the calibrated-VPFP-based UQ-SPINN, and the deterministic solvers for EP, calibrated VPFP, and VPL are
\begin{equation*}
\begin{aligned}
& 493.132 + 0.009 \times 200,\quad
2312.217 + 0.015 \times 200,\quad \\
& 13.828 \times 200, \quad
72633.966 \times 200, \quad
101151.912 \times 200,
\end{aligned}
\end{equation*}
respectively. Hence the ratio is approximately
\begin{equation*}
\begin{aligned}
&494.932 : 2315.217 : 2765.6 : 14526793.2 : 20230382.4 \\
\approx\; &1 : 4.68 : 5.58: 2.94 \times 10^{4} :4.09 \times 10^{4},
\end{aligned}
\end{equation*}
indicating that the efficiency for UQ-SPINN in improving UQ accuracy far exceeds that of the deterministic baseline.

\begin{remark}[Surrogate models and computational cost]
In all our experiments, we set the sample size for UQ-SPINN using either EP or the calibrated VPFP model as control variates to \(L = 20000\). Since neural-network inference is extremely fast (Table~\ref{tab:runtime}), one could in principle increase \(L\) to obtain a more accurate estimate of the control-variate mean and further reduce the Monte Carlo error. In practice, however, the achievable accuracy is limited by the surrogate error: for \(L > 20000\), additional samples do not improve the estimated control-variate mean.

Moreover, the training-time ratio for UQ-SPINN with calibrated VPFP versus EP as control variates ranges between $4.7$ to $6.3$, while the corresponding inference-time ratio span between $1.12$ and $1.7$. Under a fixed computational budget, these ratios would suggest allocating more samples to EP than to calibrated VPFP, leading to a comparison between two competing effects: a weakly correlated but highly sampled control variate (EP) versus a strongly correlated but sparsely sampled one (calibrated VPFP).

This issue is particularly relevant when considering direct deterministic solvers. In that setting, the high computational cost of VPFP makes it only a modest low-fidelity model, since it allows only a limited increase in the number of samples compared to direct VPL simulations, whereas EP enables orders-of-magnitude more samples. By contrast, VPFP is an excellent surrogate for training neural networks, as it replaces the Landau integral operator with a differential operator that can be efficiently evaluated within the loss function.

Finally, because inference time is negligible compared to training time for \(L \le 20000\), and increasing \(L\) beyond this value yields no further improvement due to surrogate error saturation, we did not perform the usual cost optimization by varying \(L\) between EP- and VPFP-based surrogates. This highlights a fundamental difference between direct solvers and UQ-SPINN when used as surrogate models.
\end{remark}


\begin{figure}[tb]
    \begin{center}
        \mbox{
        {\includegraphics[width = 0.45 \textwidth, trim=0 0 0 0,clip]{./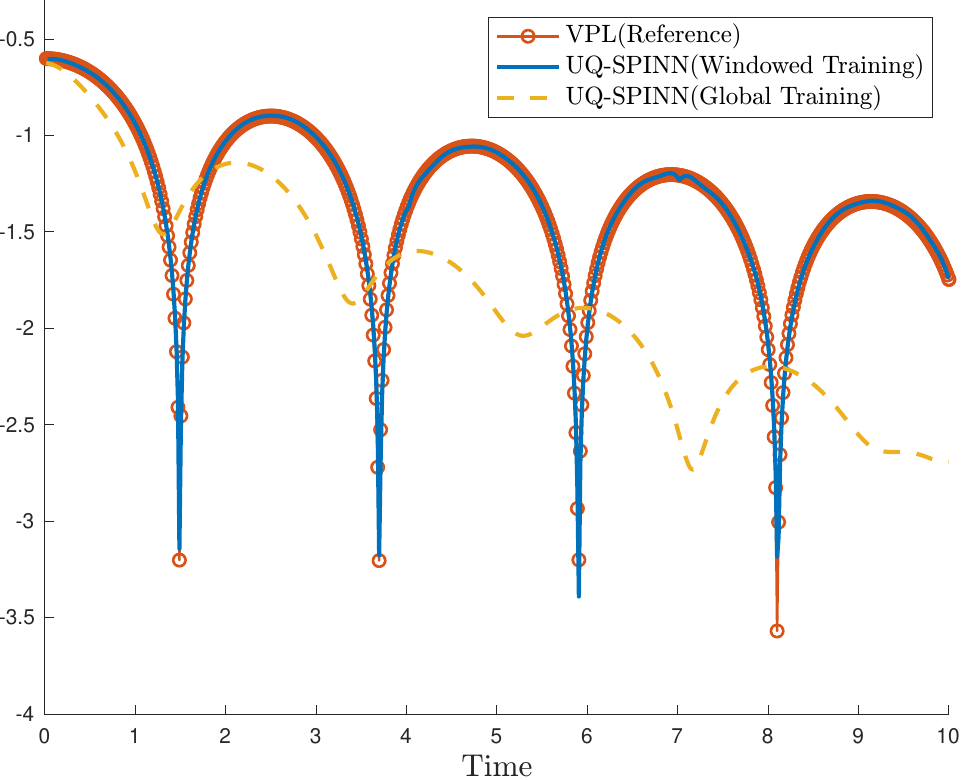}}
        {\includegraphics[width = 0.45 \textwidth, trim=0 0 0 0,clip]{./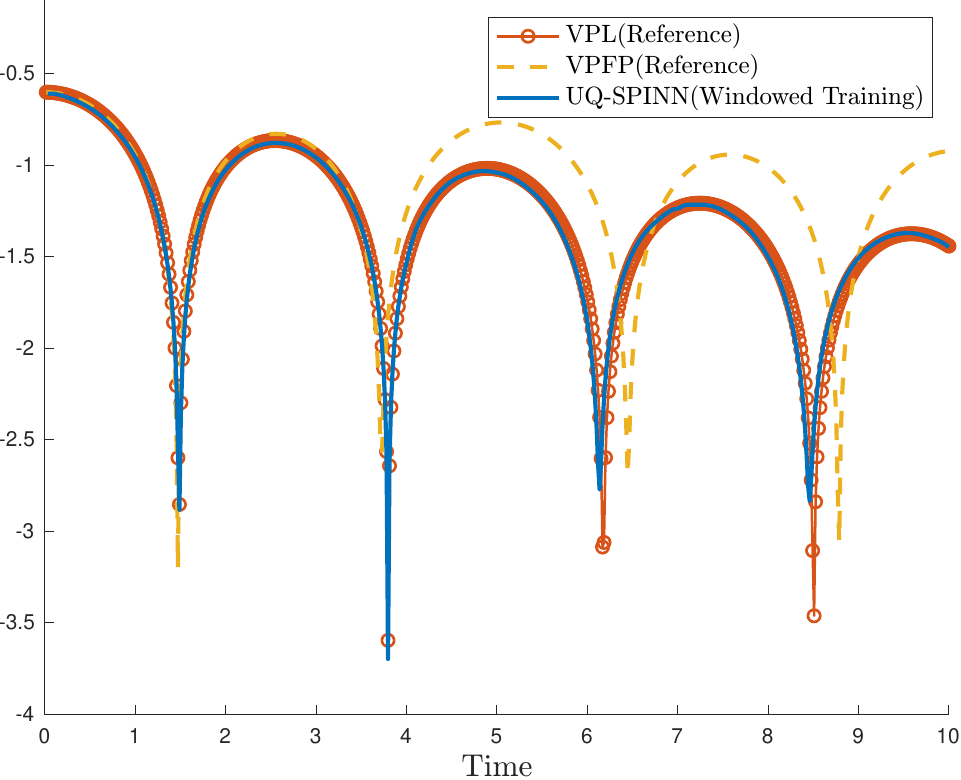}}}
        \caption{\sf Left: global training and windowed training at $\varepsilon = 1e+6$; Right: windowed training at $\varepsilon = 1$.} 
        \label{LLDCompare}
    \end{center}
\end{figure}

\begin{figure}[tb]
    \begin{center}
        \mbox{
        {\includegraphics[width = 0.45 \textwidth, trim=0 0 0 0,clip]{./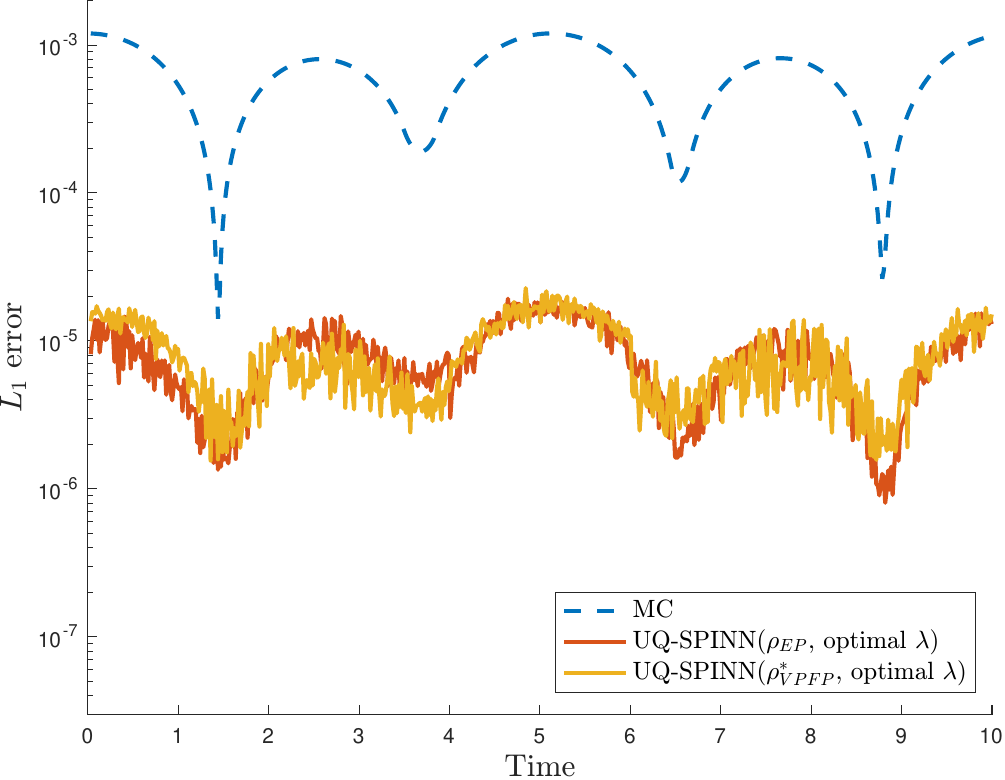}}
        {\includegraphics[width = 0.45 \textwidth, trim=0 0 0 0,clip]{./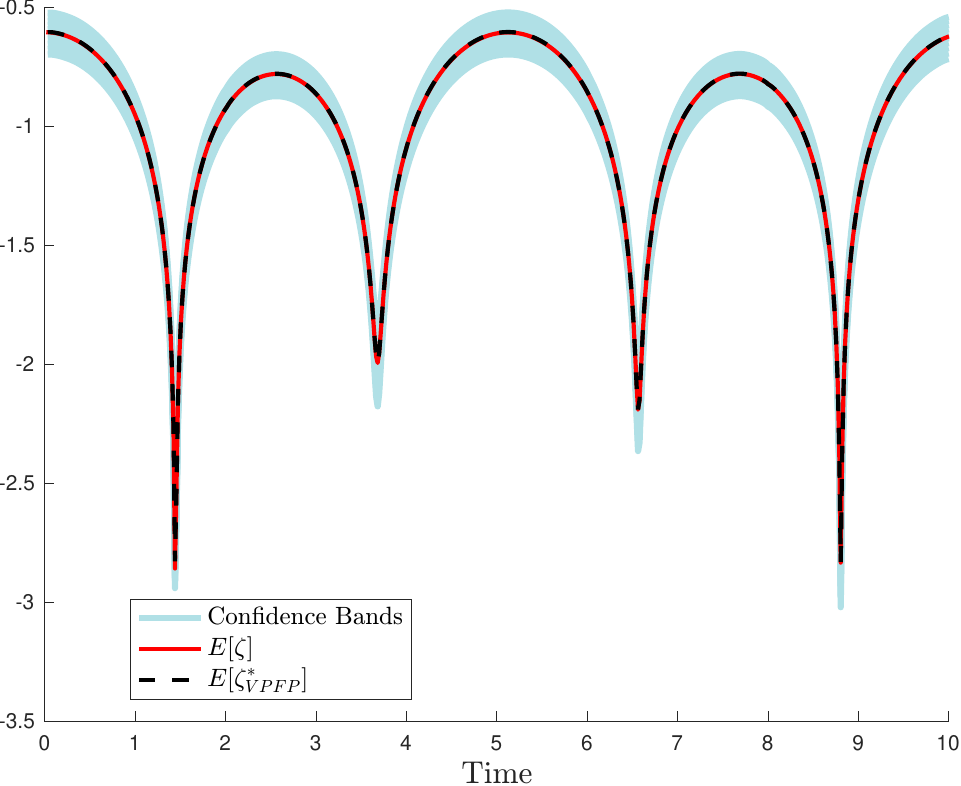}}}
        \mbox{
        {\includegraphics[width = 0.45 \textwidth, trim=0 0 0 0,clip]{./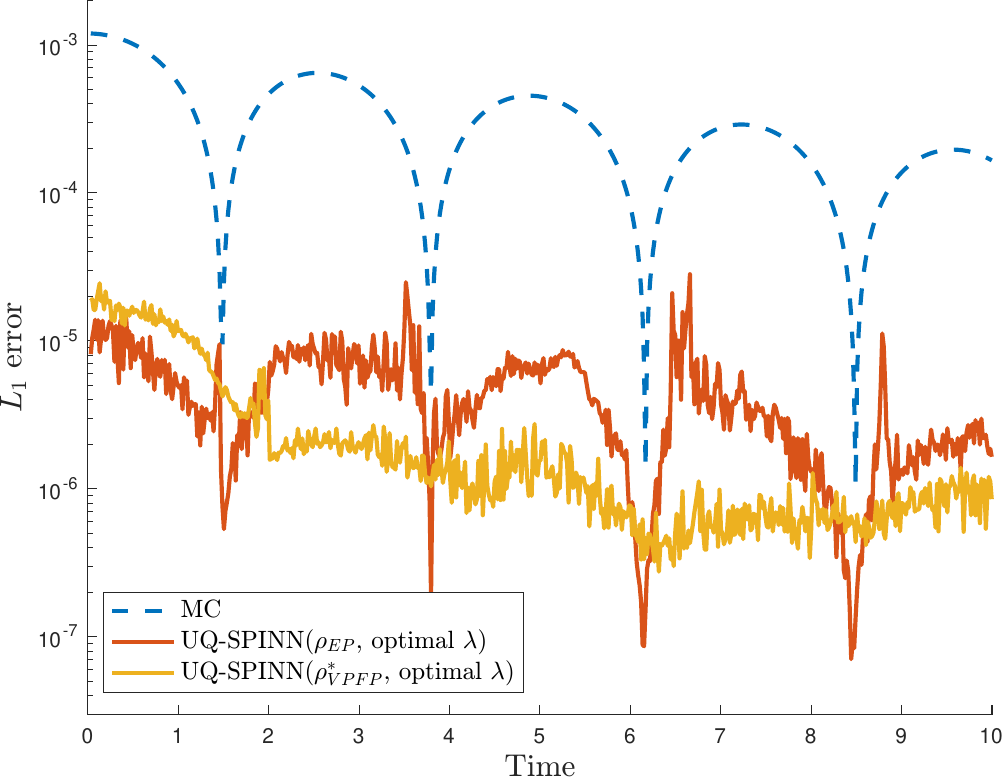}}
        {\includegraphics[width = 0.45 \textwidth, trim=0 0 0 0,clip]{./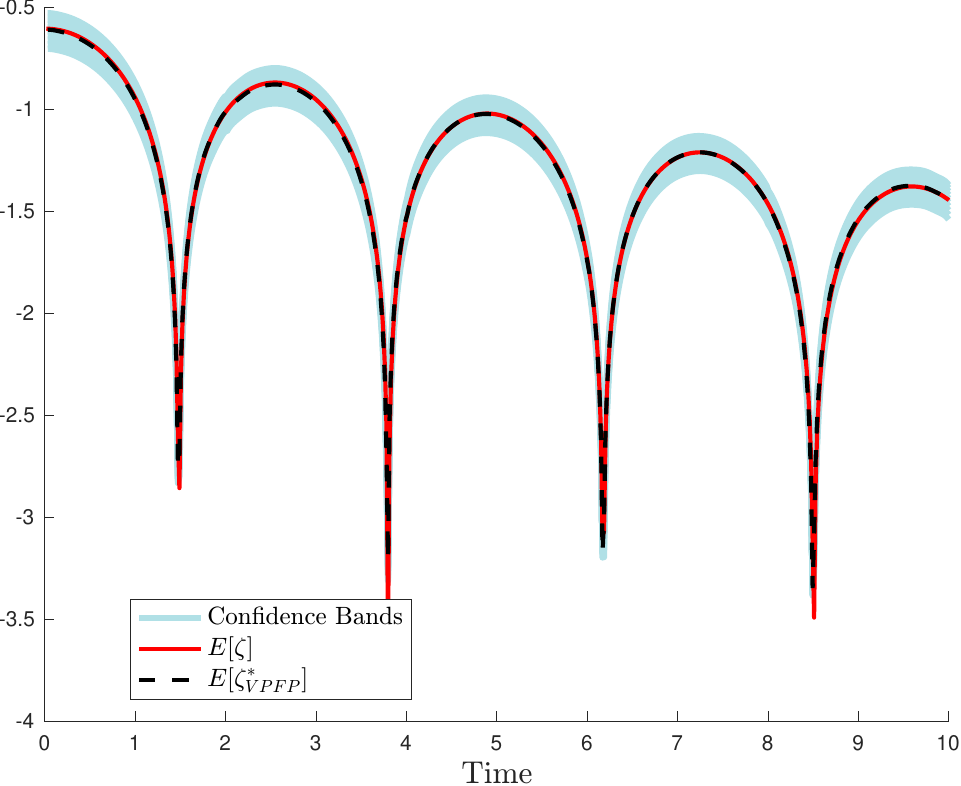}}}
        \mbox{
        {\includegraphics[width = 0.45 \textwidth, trim=0 0 0 0,clip]{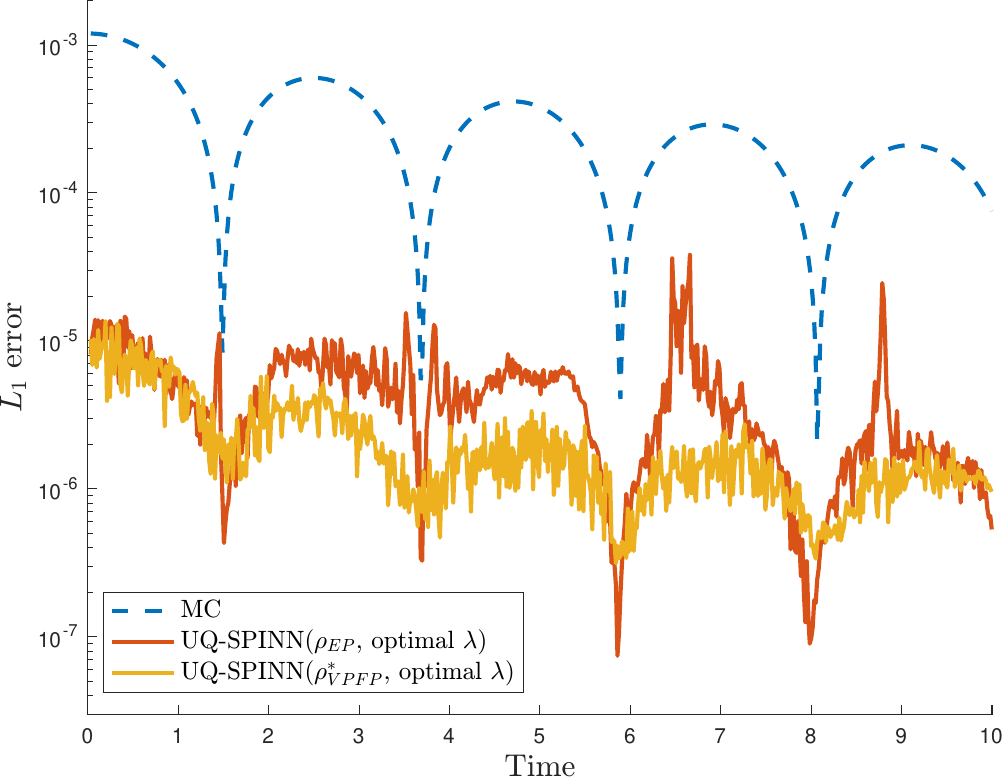}}
        {\includegraphics[width = 0.45 \textwidth, trim=0 0 0 0,clip]{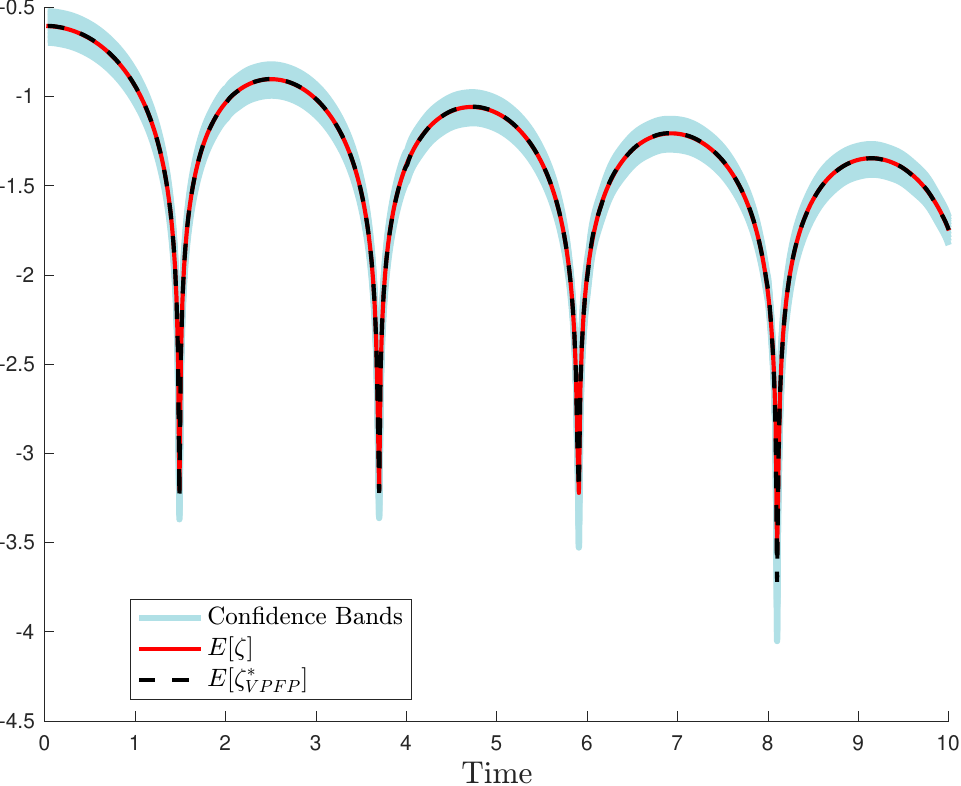}}}
        \caption{\sf Results for Example \ref{exam2}. The number of samples used to compute the expected value and to construct the control variate are $K = 15$, $L = 20000$ for calibrated VPFP and EP, respectively. Left: $L_1$ error of expectation $\mathbb{E}[\rho]$; Right: expectation $\mathbb{E}[\zeta]$ and confidence bands; Top: $\varepsilon = 1e-4$; Middle: $\varepsilon = 1$; Bottom: $\varepsilon = 1e+6$.} 
        \label{LLD}
    \end{center}
\end{figure}

\begin{table}[htbp]
\centering
\label{tab:runtime}
\caption{\sf Runtime comparison for Example \ref{exam2} -- \ref{exam4}. “Training” is the wall-clock training time for the UQ-SPINN (not applicable to the deterministic baseline). 
“Inference” is the wall-clock time to evaluate 100 samples. } 
\footnotesize{
\begin{tabular}{c|c c c c}
\hline\hline
\multirow{2}{*}{Example} & \multirow{2}{*}{Method} & \multirow{2}{*}{Equation} & {Training} & {Inference} \\
 & & &(s) & (s / 100 samples) \\\hline
\multirow{5}{*}{\ref{exam2}} &UQ-SPINN & EP   & 493.132    & 0.009      \\
                             &UQ-SPINN                  & calibrated VPFP & 2312.217  & 0.015      \\
                             &Deterministic Solver   & EP  & N/A       & 13.828  \\
                             &Deterministic Solver   & calibrated VPFP  & N/A       & 72633.966  \\
                             &Deterministic Solver   & VPL  & N/A       & 101151.912  \\ \hline
\multirow{5}{*}{\ref{exam3}} &UQ-SPINN & EP   & 1735.592   & 0.032      \\
                             &UQ-SPINN                  & calibrated VPFP & 8640.782  & 0.055      \\
                             &Deterministic Solver   & EP  & N/A       & 60.572  \\
                             &Deterministic Solver   & calibrated VPFP  & N/A       & 402232.129  \\
                             &Deterministic Solver   & VPL  & N/A       & 554730.226  \\ \hline
\multirow{5}{*}{\ref{exam4}} &UQ-SPINN & EP   & 980.097    & 0.033     \\
                             &UQ-SPINN                  & calibrated VPFP & 6180.826  & 0.037     \\
                             &Deterministic Solver   & EP  & N/A       & 9.948  \\
                             &Deterministic Solver   & calibrated VPFP  & N/A       & 33165.674  \\
                             &Deterministic Solver   & VPL  & N/A       & 46005.180  \\\hline\hline
\end{tabular}
}
\end{table}

\subsection{Nonlinear Landau damping}
\label{exam3}
In this case, we consider the nonlinear Landau damping with the initial distrubtion
\begin{equation*}
  f_0(z, x, \mathbf{v})
  = \frac{1}{2\pi}\!\left(1 + (1 + 3z)\,\alpha \cos(kx)\right)
    \exp\!\left(-\frac{v_x^2 + v_y^2}{2}\right),
\end{equation*}
with $\alpha = 0.1$ and $k = 0.5$.
The uncertainty, spatial, and velocity domains are $\Omega_{z}=[0,1]$, $\Omega_x=[0,4\pi]$, and $\Omega_{\mathbf{v}}=[-6,6]^2$, respectively. 
The boundary conditions are the same as those in Example \ref{exam2}.
The deterministic solver for VPL uses a mesh with $N_x=65$ spatial points and $N_{\mathbf{v}}=64^2$ velocity points. 
We evaluate $\zeta(t)$ up to the final time $T=36$ for three Knudsen numbers: collisionless $\varepsilon=1e+6$, mild collisions $\varepsilon=1$, and strong collisions $\varepsilon=1e-4$.
The time interval is divided into 18 equal subintervals, each of length~2. 
We set the sample sizes to $K=15$ (for estimating the expectation and training UQ-SPINN) and $L=20000$ (for constructing the control variates). 

In the left panel of Fig.~\ref{NLD}, we report the errors in the expected density for MC, UQ-SPINN\((\rho_{{EP}})\), and UQ-SPINN\((\rho^{*}_{{VPFP}})\). 
Because calibrated VPFP and VPL both converge to the limiting EP model as \(\varepsilon\to 0\), at \(\varepsilon=1e-4\) the two surrogates achieve comparable UQ-error reductions, with UQ-SPINN\((\rho^{*}_{{VPFP}})\) lowering the MC error by approximately two orders of magnitude. 
As \(\varepsilon\) increases (i.e., the Knudsen number departs from zero), the variance-reduction effect of UQ-SPINN\((\rho_{{EP}})\) degrades for two reasons: (i) EP is only the zero-Knudsen limit of VPL, so their solutions separate as the Knudsen number grows; and (ii) the dependence of \(\zeta\) on \(z\) becomes more intricate and nonlinear than in the linear Landau damping regime—see Fig.~\ref{NLD}(right) for the comparison at \(z=0\) and \(z=1\). 
By contrast, \(\mathbb{E}[\zeta^{*}_{{VPFP}}]\) closely matches the VPL reference \(\mathbb{E}[\zeta]\) across Knudsen numbers—thanks to data/parameter calibration and windowed training—thus providing high-quality control variates and consistently delivering about two orders of magnitude improvement in UQ accuracy. 

A subtlety arises at \(\varepsilon=1\): as time evolves, the density amplitude becomes so small that, unless very high-order, low-diffusion numerical solvers are used, the fine-scale features are smoothed away and the information content is lost. 
In this tiny-amplitude regime, UQ-SPINN\((\rho^{*}_{{VPFP}})\) cannot further improve UQ accuracy, whereas the EP-based surrogate UQ-SPINN\((\rho_{{EP}})\) retains a larger amplitude than the corresponding mild collisions VPFP/VPL solution and still provides modest variance reduction. 
We refer readers to Remark \ref{NLDRemark} for more details.

Finally, Table~\ref{tab:runtime} summarizes training and sample-generation times.
Assuming the same 20000 samples, the ratio of wall-clock times for the EP-based UQ-SPINN, the calibrated-VPFP-based UQ-SPINN, and the deterministic solvers of EP, calibrated VPFP, and VPL is approximately $1 : 4.97 : 6.96: 4.62 \times 10^{4} :6.37 \times 10^{4}$.
Combined with the numerical results above, these timings show that UQ-SPINN greatly increases sample-generation efficiency, thereby substantially reducing variance and enhancing uncertainty quantification accuracy.

\begin{figure}[tb]
    \begin{center}
        \mbox{
        {\includegraphics[width = 0.45 \textwidth, trim=0 0 0 0,clip]{./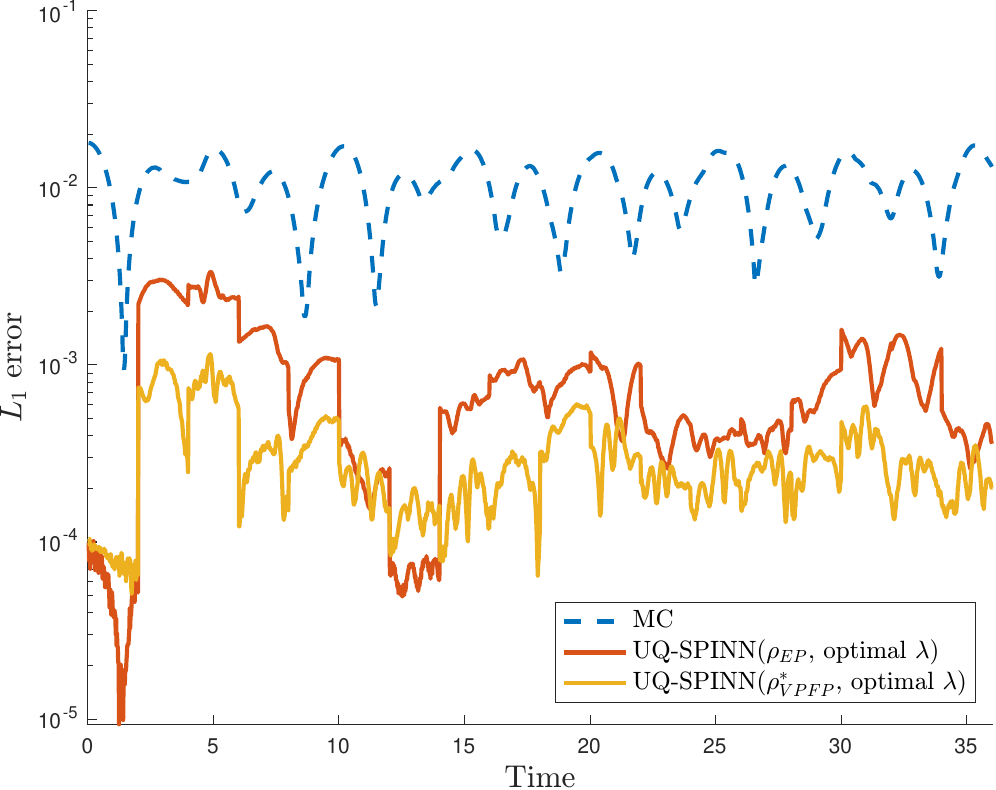}}
        {\includegraphics[width = 0.45 \textwidth, trim=0 0 0 0,clip]{./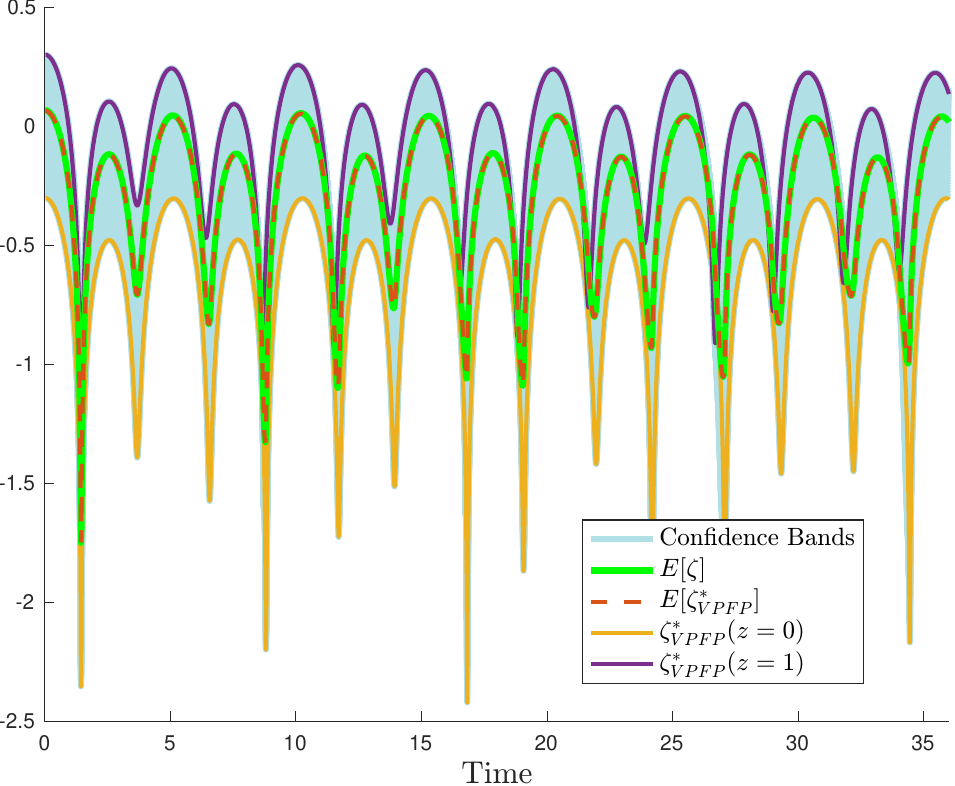}}}
        \mbox{
        {\includegraphics[width = 0.45 \textwidth, trim=0 0 0 0,clip]{./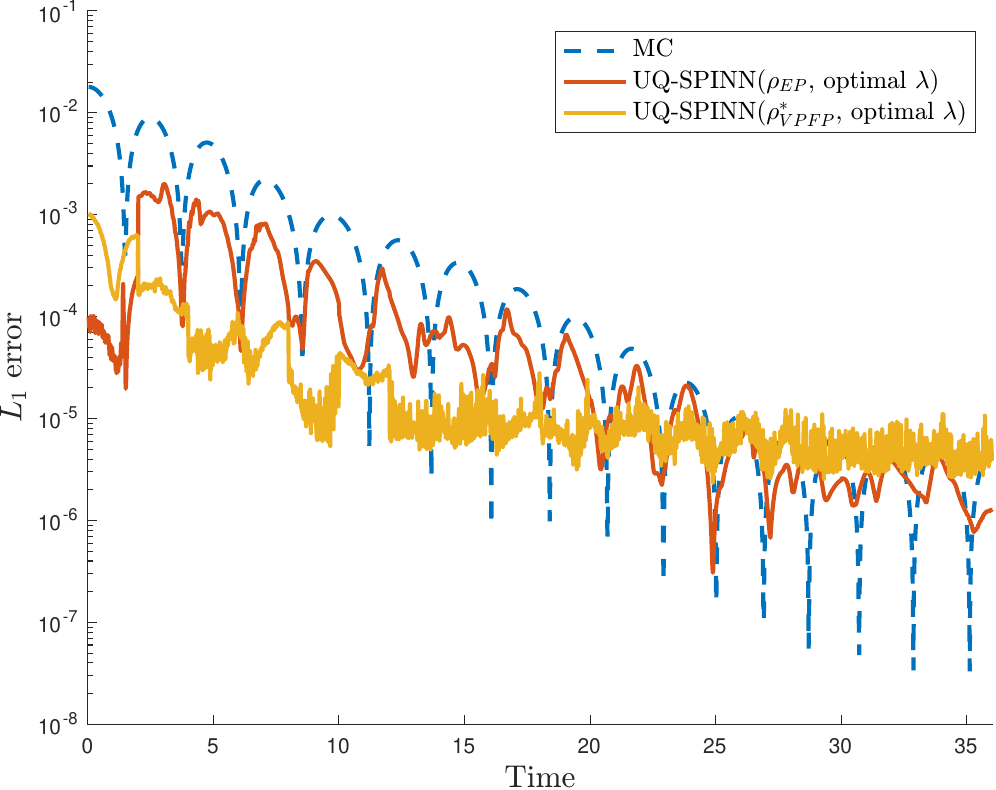}}
        {\includegraphics[width = 0.45 \textwidth, trim=0 0 0 0,clip]{./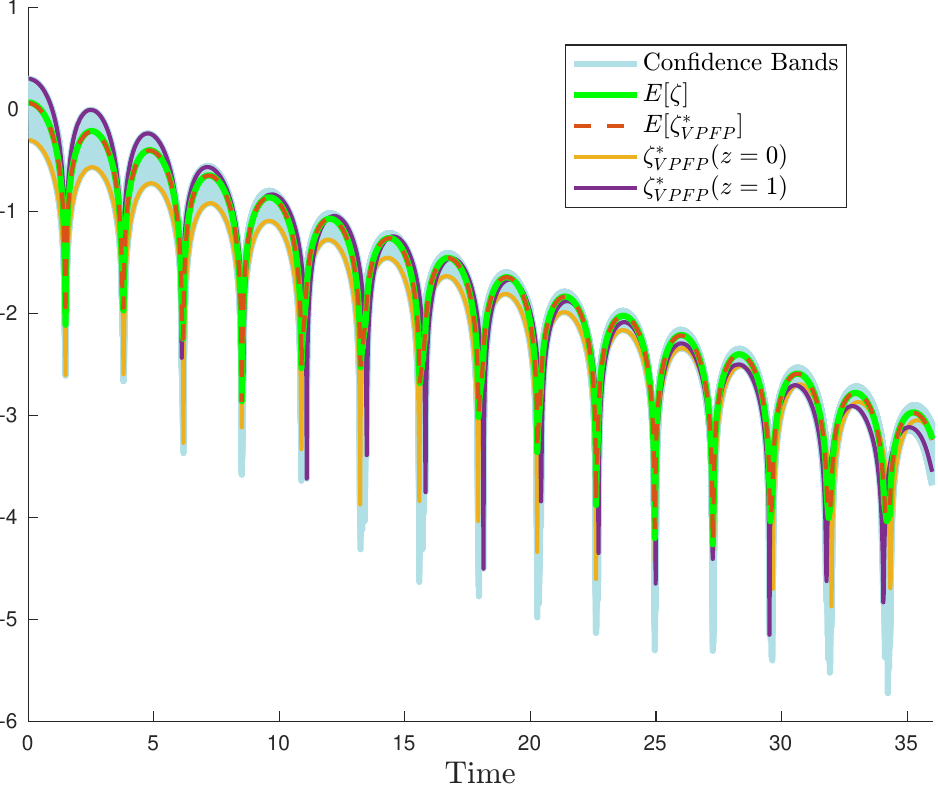}}}
        \mbox{
        {\includegraphics[width = 0.45 \textwidth, trim=0 0 0 0,clip]{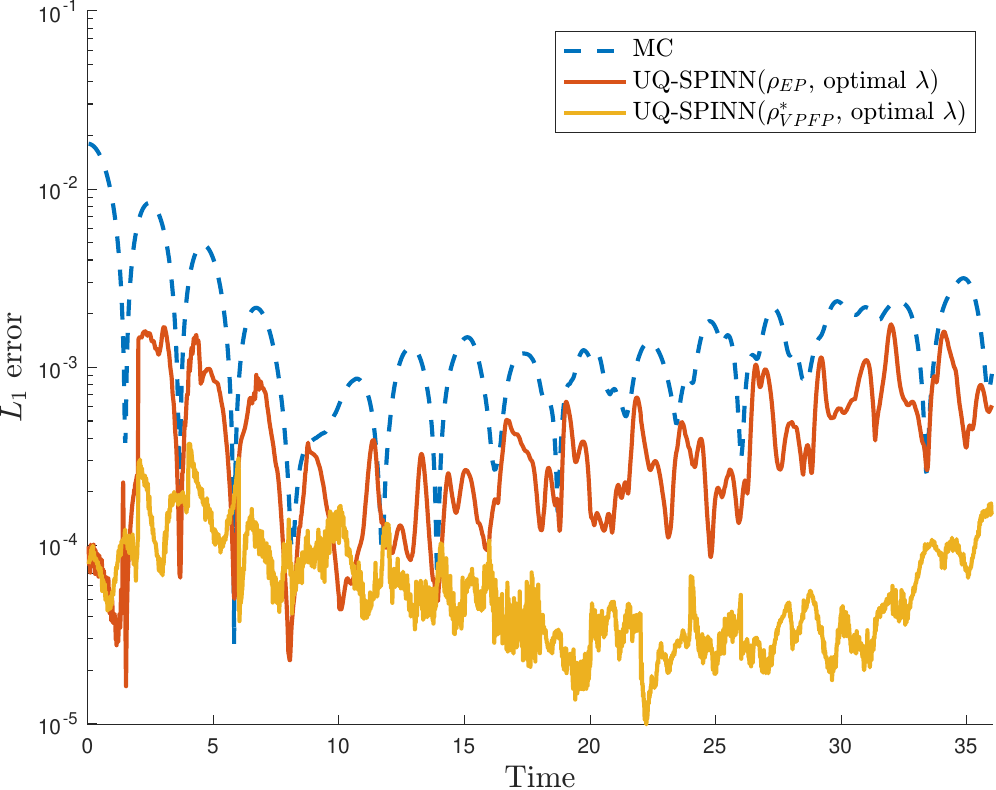}}
        {\includegraphics[width = 0.45 \textwidth, trim=0 0 0 0,clip]{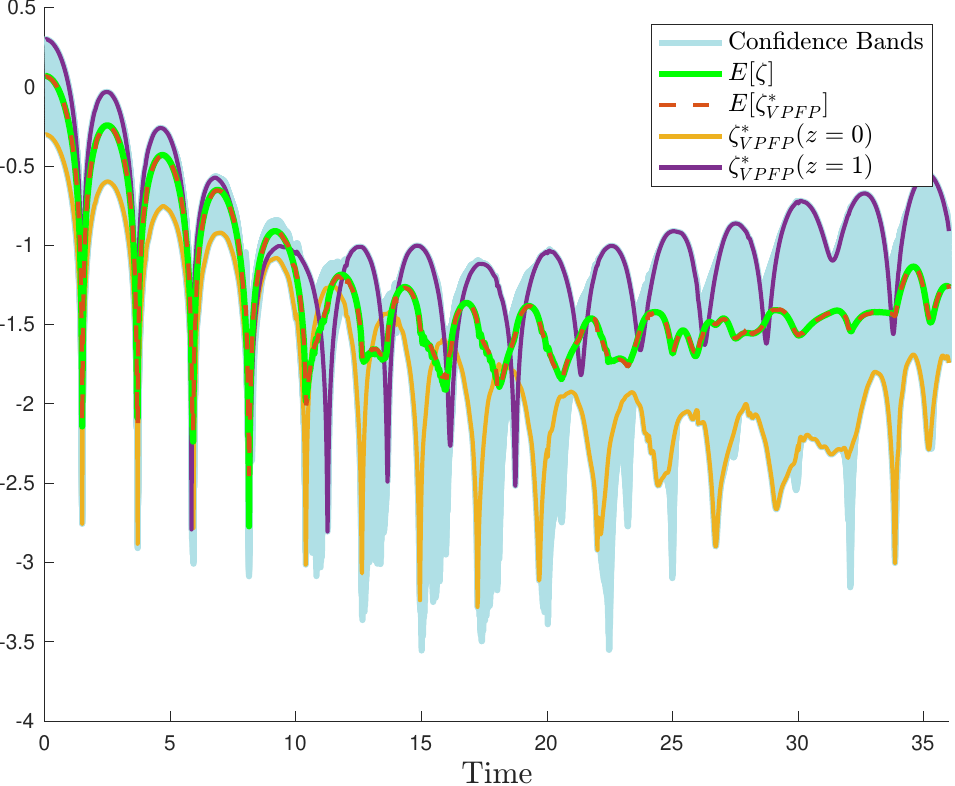}}}
        \caption{\sf Results for Example \ref{exam3}. The number of samples used to compute the expected value and to construct the control variate are $K = 15$, $L = 20000$ for calibrated VPFP and EP, respectively. Left: $L_1$ error of expectation $\mathbb{E}[\rho]$; Right: expectation $\mathbb{E}[\zeta]$ and confidence bands; Top: $\varepsilon = 1e-4$; Middle: $\varepsilon = 1$; Bottom: $\varepsilon = 1e+6$.} 
        \label{NLD}
    \end{center}
\end{figure}

\begin{remark}[Surrogate error vs PINN error]
\label{NLDRemark}
Figure~\ref{NLDAdd} reports the nonlinear Landau damping results for \(\varepsilon=1\), where Monte Carlo estimators are enhanced via control variates constructed from the EP model and a calibrated VPFP model. We compare two implementations of the control variates: (i) UQ-SPINN-based surrogates and (ii) a deterministic solver (DS). For UQ-SPINN, the control-variate means are estimated using \(L=20000\) samples. For DS, estimating these means by brute-force random sampling would be prohibitively expensive; instead, we approximate the expectation using Gauss quadrature in the random dimension \(z\). While this quadrature strategy is accurate for low-dimensional randomness, it suffers from the curse of dimensionality as the dimension of \(z\) increases.

In the left panel of Fig.~\ref{NLDAdd} (EP-based control variates), DS is more accurate at early times, primarily because the control-variate mean is computed more accurately via quadrature. For \(t>6\), however, UQ-SPINN and DS yield nearly indistinguishable errors. In this regime, the total error is dominated by the model discrepancy between EP and VPL, which substantially limits the achievable variance reduction.

In the right panel of Fig.~\ref{NLDAdd} (calibrated-VPFP-based control variates), we distinguish three time intervals, \([0,9]\cup[9,25]\cup[25,36]\). Over \([0,9]\), DS again achieves smaller errors, since its control-variate mean is obtained by collocation/quadrature rather than by noisy sampling. Over \([9,25]\) and \([25,36]\), however, the mismatch between the calibrated VPFP model and VPL grows over time in DS. Moreover, DS propagates exclusively from the initial condition at \(t=0\), so the resulting model error accumulates and progressively degrades variance reduction.

In contrast, UQ-SPINN employs a windowed training strategy: within each time window, both the initial and interior training data are provided by the VPL-based Monte Carlo solution. Since each window spans only \(\Delta t=2\), model discrepancies do not amplify as severely as in long-horizon DS simulations, which explains the superior performance of UQ-SPINN over \([9,25]\). Over \([25,36]\), UQ-SPINN no longer reduces the Monte Carlo error, and the error level remains comparable to that observed in \([9,25]\). Because windowed training largely suppresses accumulated model-error growth, the remaining error is primarily attributable to approximation error in the neural-network surrogate.
\end{remark}

\begin{figure}[tb]
    \begin{center}
        \mbox{
        {\includegraphics[width = 0.45 \textwidth, trim=0 0 0 0,clip]{./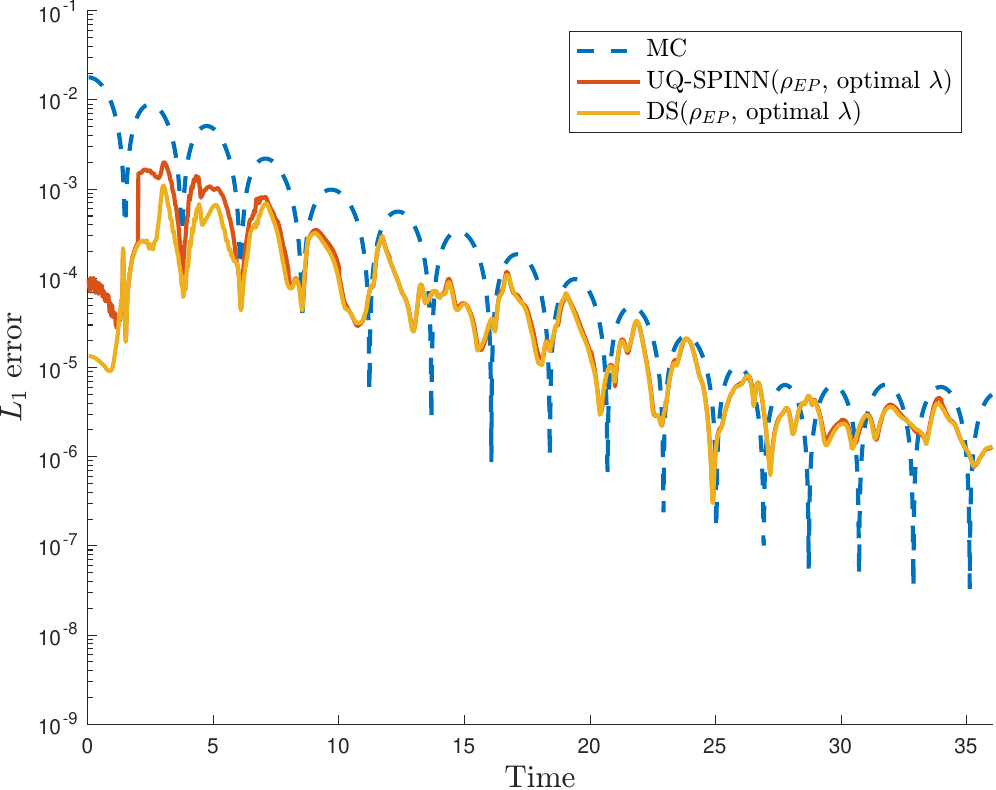}}
        {\includegraphics[width = 0.45 \textwidth, trim=0 0 0 0,clip]{./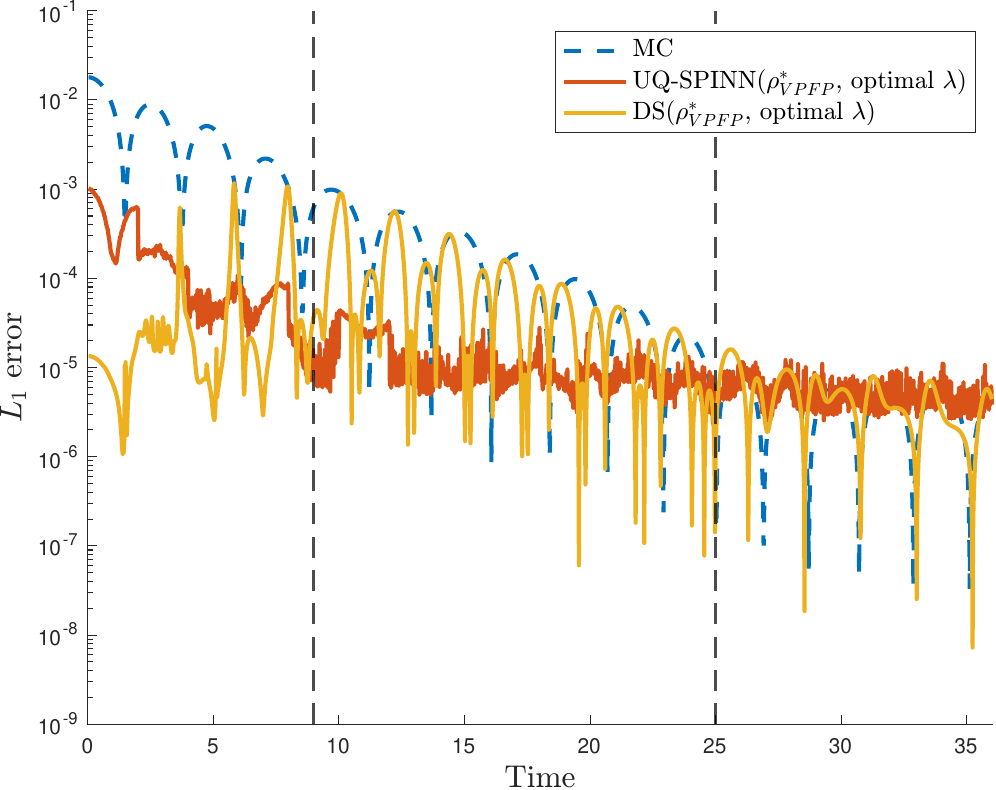}}}
        \caption{\sf Results for Example \ref{exam3}. $L_1$ error of expectation $\mathbb{E}[\rho]$. Left: based on EP model; Right: based on calibrated VPFP model.} 
        \label{NLDAdd}
    \end{center}
\end{figure}

\subsection{Two stream instability}
\label{exam4}
In the last case, the initial data are
\begin{equation*}
\begin{aligned}
  f_0(z, x, \mathbf{v})
  = &\frac{1}{4\pi T_0}\!\left(1 + (4 + 2z)\,\alpha \cos(kx)\right) \\
    &\times \left(\exp\!\left(-\frac{\left(v_x - d\right)^2 + v_y^2}{2T_0}\right) + \exp\!\left(-\frac{\left(v_x + d\right)^2 + v_y^2}{2T_0}\right)\right),
\end{aligned}
\end{equation*}
with $\alpha = 0.01$, $d = 1.3$, $T_0 = 0.3$, and $k=2/13$.
We set $\Omega_x=[0,13\pi]$ and other settings are the same as in Example~\ref{exam3}.
Figs.~\ref{TSI01} and \ref{TSIinf} display the expectation of projection $\mathcal{P}_{{VPFP}}^*$ of the distribution $f_{{VPFP}}^*$ onto the $(x,v_x)$ plane at various times and Knudsen numbers. 
Under strong collisionality (Fig.~\ref{TSI01}, top), the distribution function relaxes rapidly toward a Maxwellian. 
Under mild collisionality (Fig.~\ref{TSI01}, bottom), a small central hole emerges at \(t=1.4\); by \(t=8\) collisional effects dominate and \(f\) quickly relaxes to a Maxwellian state. 
In the collisionless case (Fig.~\ref{TSIinf}), instability develops.
\begin{figure}[tb]
    \begin{center}
        \mbox{
        {\includegraphics[width = 0.32 \textwidth, trim=0 0 0 0,clip]{./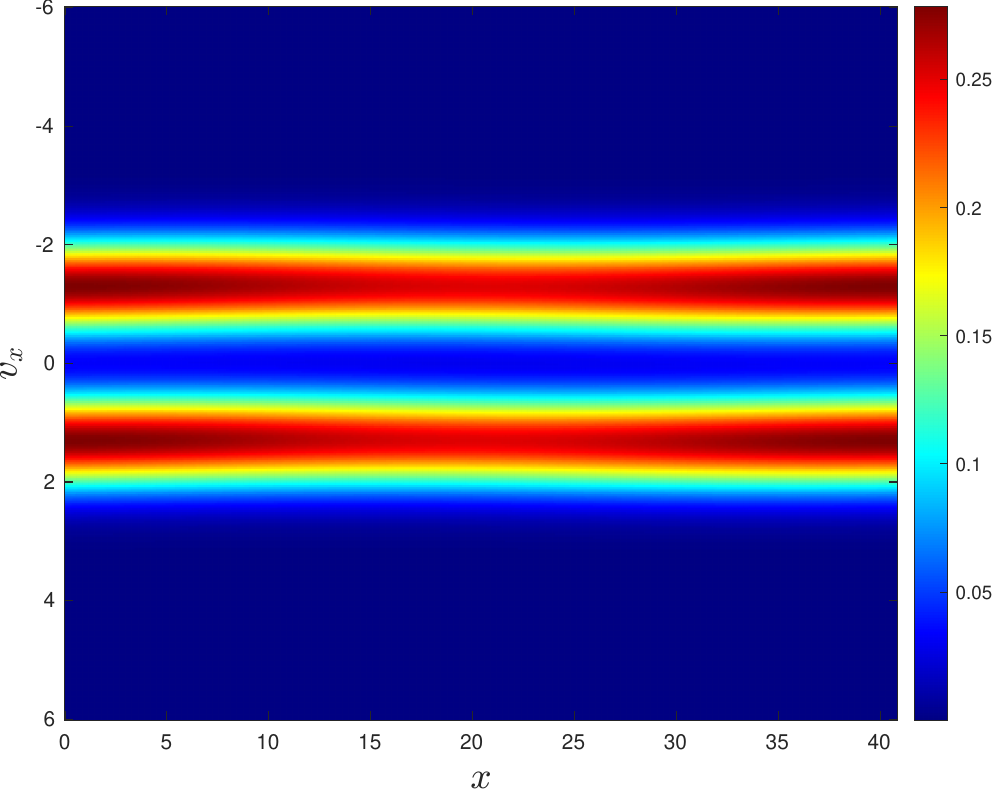}}
        {\includegraphics[width = 0.32 \textwidth, trim=0 0 0 0,clip]{./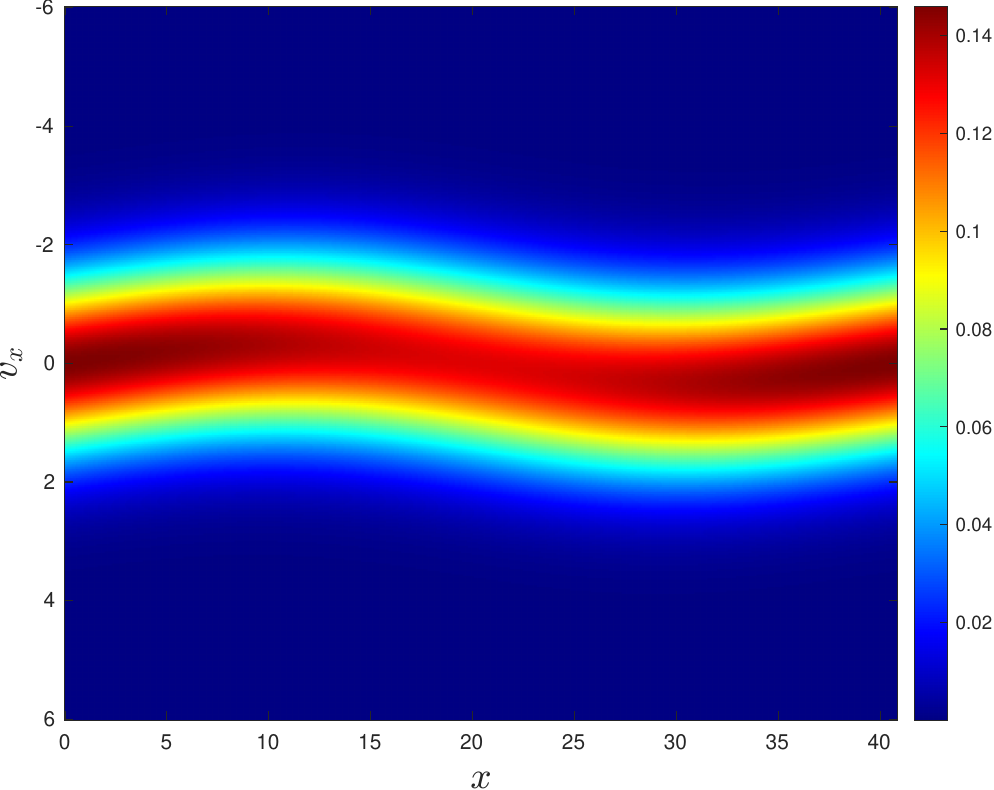}}
        {\includegraphics[width = 0.32 \textwidth, trim=0 0 0 0,clip]{./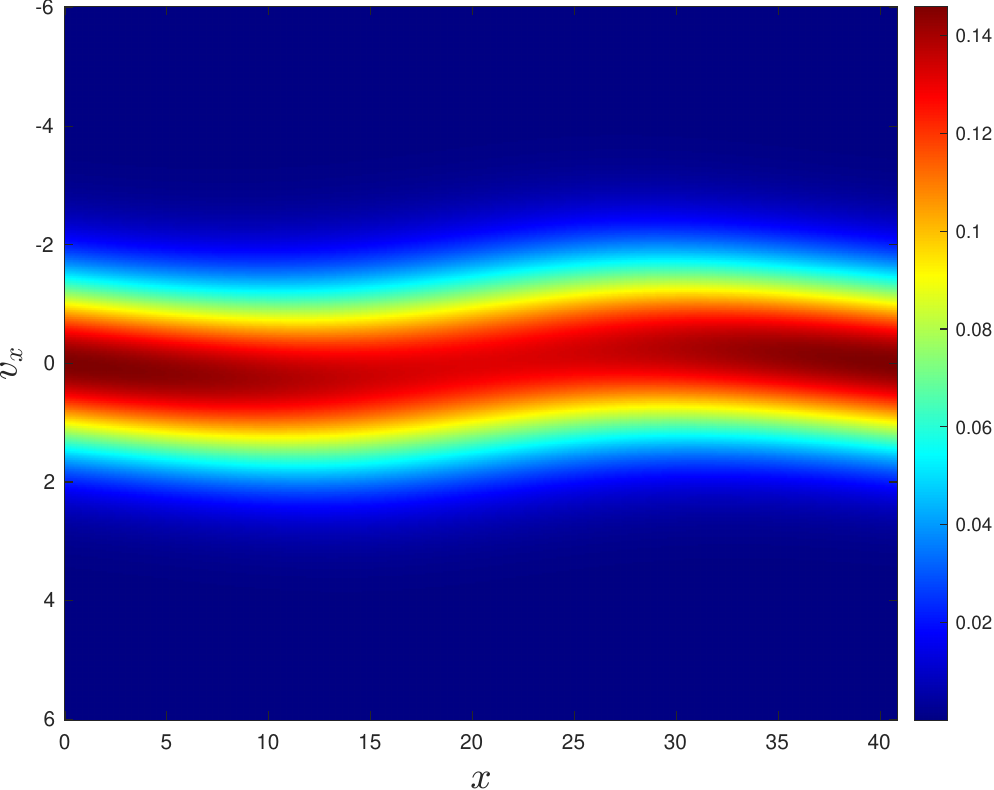}}}
        \mbox{
        {\includegraphics[width = 0.32 \textwidth, trim=0 0 0 0,clip]{./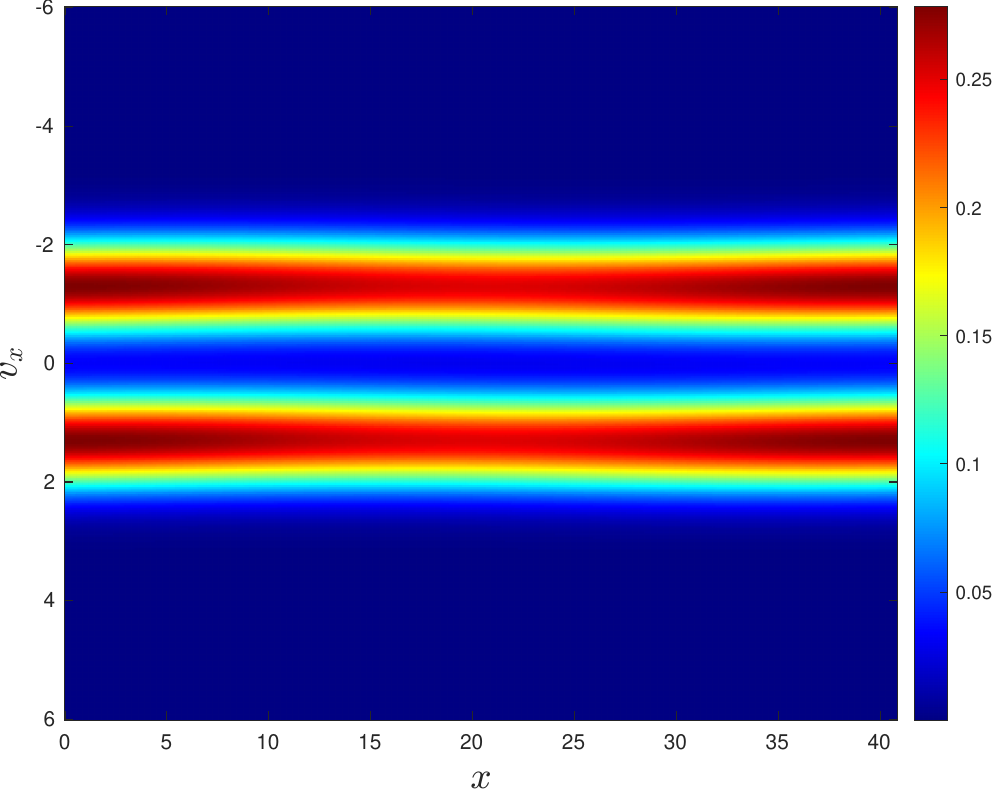}}
        {\includegraphics[width = 0.32 \textwidth, trim=0 0 0 0,clip]{./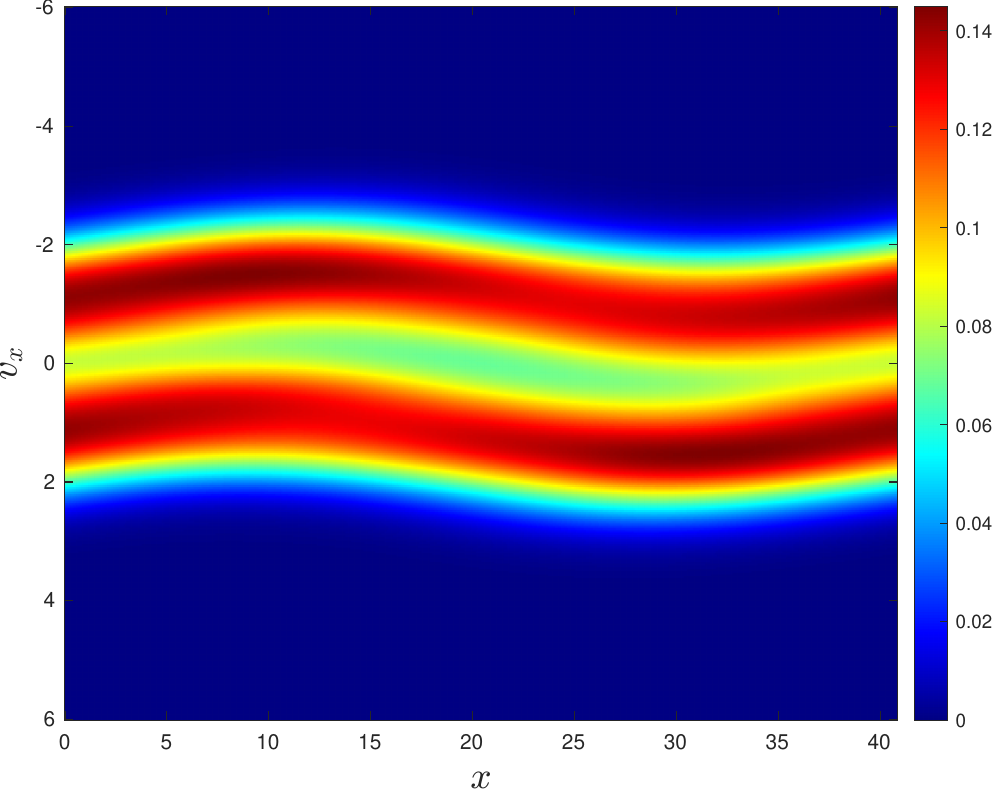}}
        {\includegraphics[width = 0.32 \textwidth, trim=0 0 0 0,clip]{./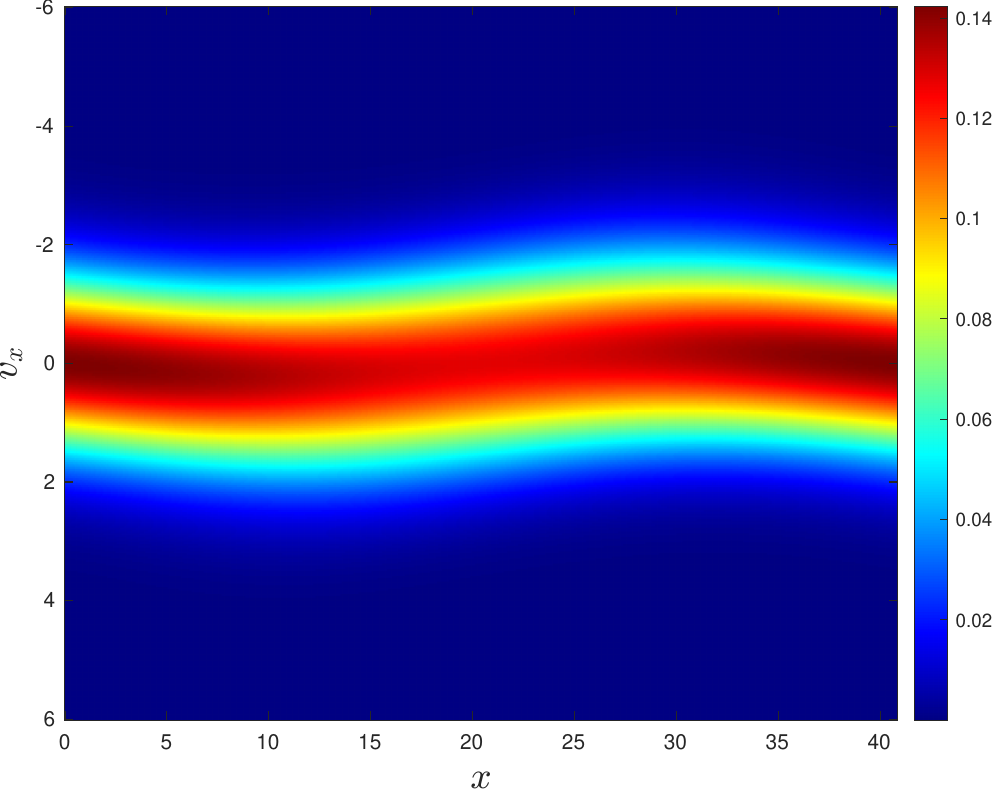}}}
        
        \caption{\sf Results for Example \ref{exam4}. Expectation of projections $\mathcal{P}_{VPFP}^*$ at $t = 0, 1.4,$ and $8$; Top: $\varepsilon = 1e-4$; Bottom: $\varepsilon = 1$.} 
        \label{TSI01}
    \end{center}
\end{figure}

\begin{figure}[tb]
    \begin{center}
        \mbox{
        {\includegraphics[width = 0.32 \textwidth, trim=0 0 0 0,clip]{./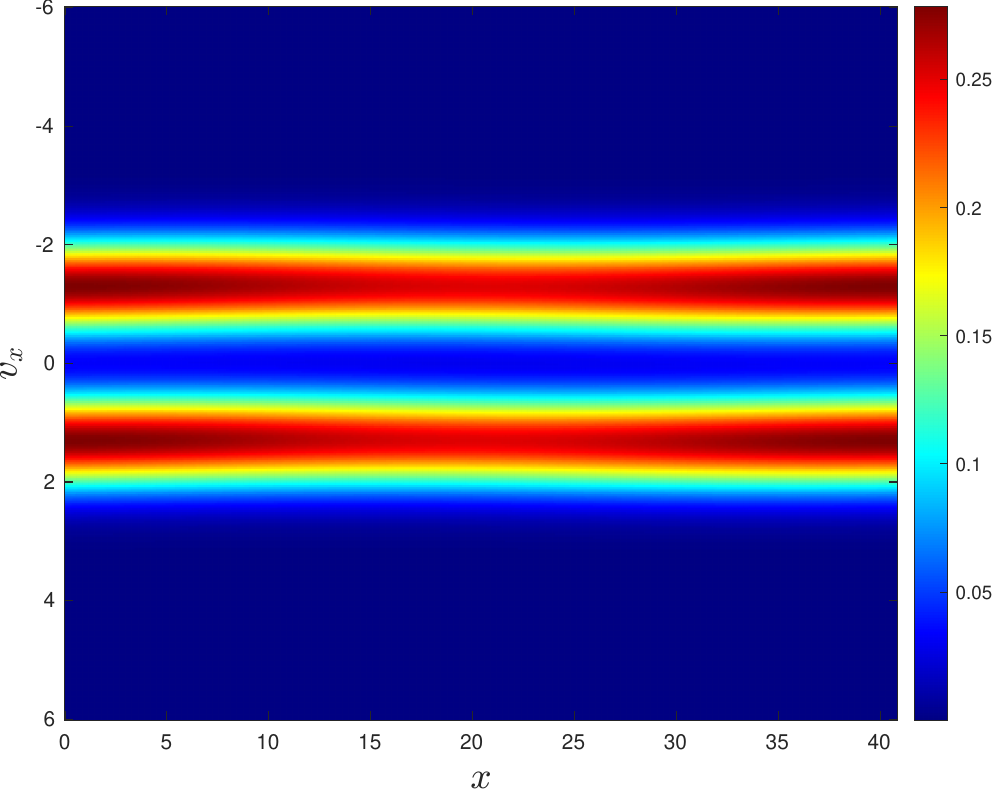}}
        {\includegraphics[width = 0.32 \textwidth, trim=0 0 0 0,clip]{./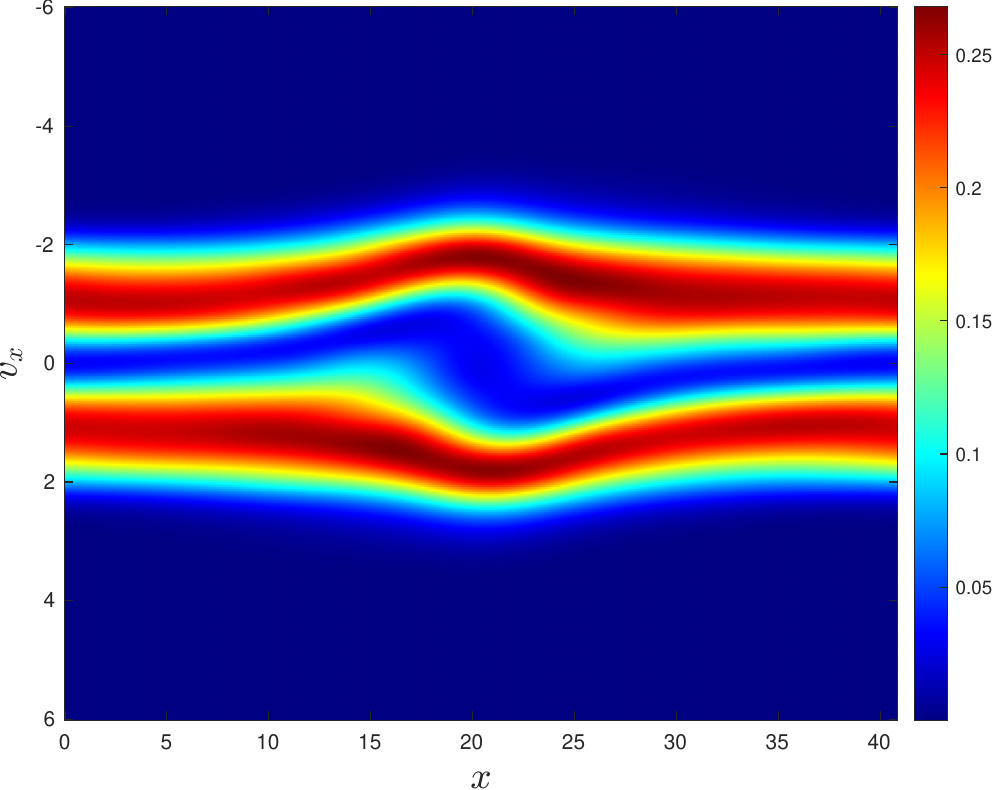}}
        {\includegraphics[width = 0.32 \textwidth, trim=0 0 0 0,clip]{./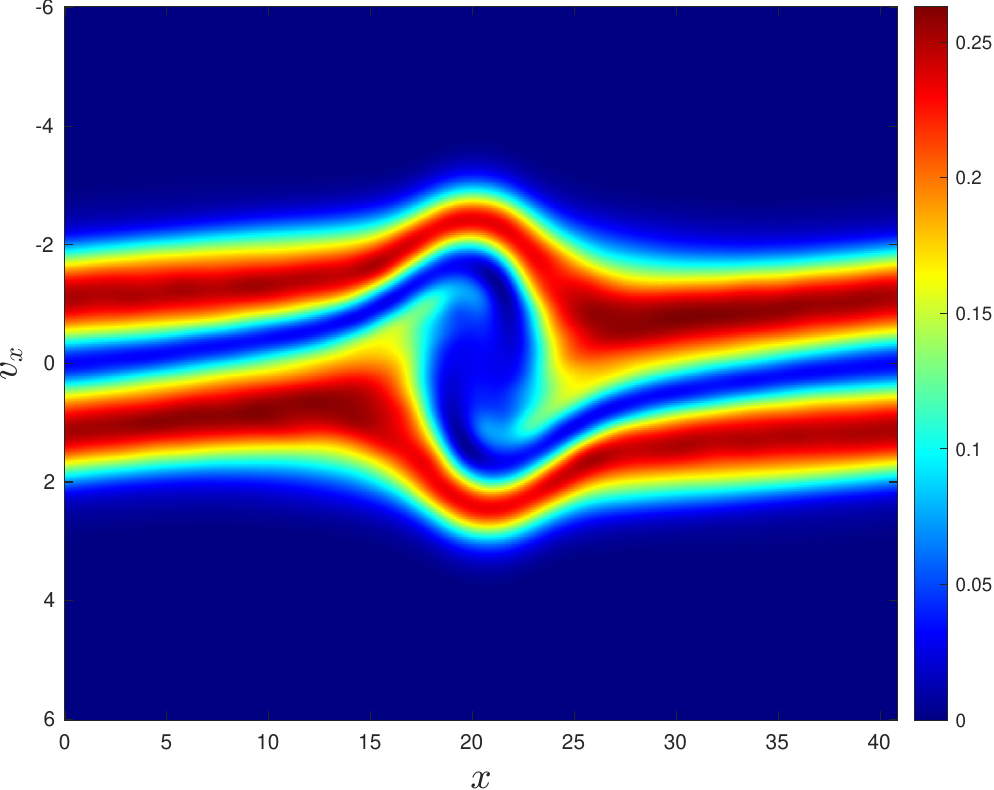}}}
        \caption{\sf Results for Example \ref{exam4}. Expectation of projections $\mathcal{P}_{VPFP}^*$ at $t = 0, 15,$ and $20$ for $\varepsilon = 1e+6$.} 
        \label{TSIinf}
    \end{center}
\end{figure}

Fig.~\ref{TSIError} compares the MC error and the variance-reduced error using $K=15$ samples (for estimating the expectation and training UQ-SPINN) and $L=20000$ samples (for constructing the control variates).
We set \(\alpha\) as in the linear Landau–damping benchmark, and the nonlinearity is mild. 
Consequently, the variance-reduction behavior of UQ-SPINN\((\rho_{{EP}})\) across Knudsen numbers mirrors the linear case: at \(\varepsilon=1e-4\) its MC-error reduction is comparable to that of UQ-SPINN\((\rho^{*}_{{VPFP}})\) (about two orders of magnitude), and although its performance remains substantial as the Knudsen number increases, it exhibits larger fluctuations. 
However, for UQ-SPINN\((\rho^{*}_{{VPFP}})\), the MC-error reduction degrades with increasing Knudsen number. 
Unlike the linear/nonlinear Landau–damping tests—which remain near steady state across Knudsen numbers—the present two-stream instability does not reach steady state in velocity space and features two bubble-like structures; approximating such shapes requires more training time and computational effort. 
As a result, at \(\varepsilon=1\) (early times) and \(\varepsilon=1e+6\) (throughout the horizon), UQ-SPINN\((\rho^{*}_{{VPFP}})\) underperforms UQ-SPINN\((\rho_{{EP}})\). 
Nevertheless, it still yields a substantial MC-error reduction and, importantly, supplies the projection \(\mathcal{P}^{*}_{{VPFP}}\), which is unavailable to UQ-SPINN\((\rho_{{EP}})\).
Finally, the EP-based and calibrated VPFP-based surrogates have already produced enough samples to realize variance reduction. 
Under the accuracy limits of the neural networks, multiple VRMC does not further decrease the variance and may even degrade accuracy because of finite-precision issues in the weights; accordingly, we omit those results. 


In Table~\ref{tab:runtime}, we summarize runtimes for the different methods. 
Although neural network entails a training cost, this overhead is quickly amortized by its fast inference, particularly when many samples are required.
Assuming the same 20000 samples, the ratio of wall-clock times for the EP-based UQ-SPINN, the calibrated-VPFP-based UQ-SPINN, and the deterministic solver for EP, calibrated VPFP, and VPL is approximately $1 : 6.27 : 2.02: 6.72 \times 10^{3} : 9.32 \times 10^{3}$.

\begin{figure}[tb]
    \begin{center}
        \mbox{
        {\includegraphics[width = 0.45 \textwidth, trim=0 0 0 0,clip]{./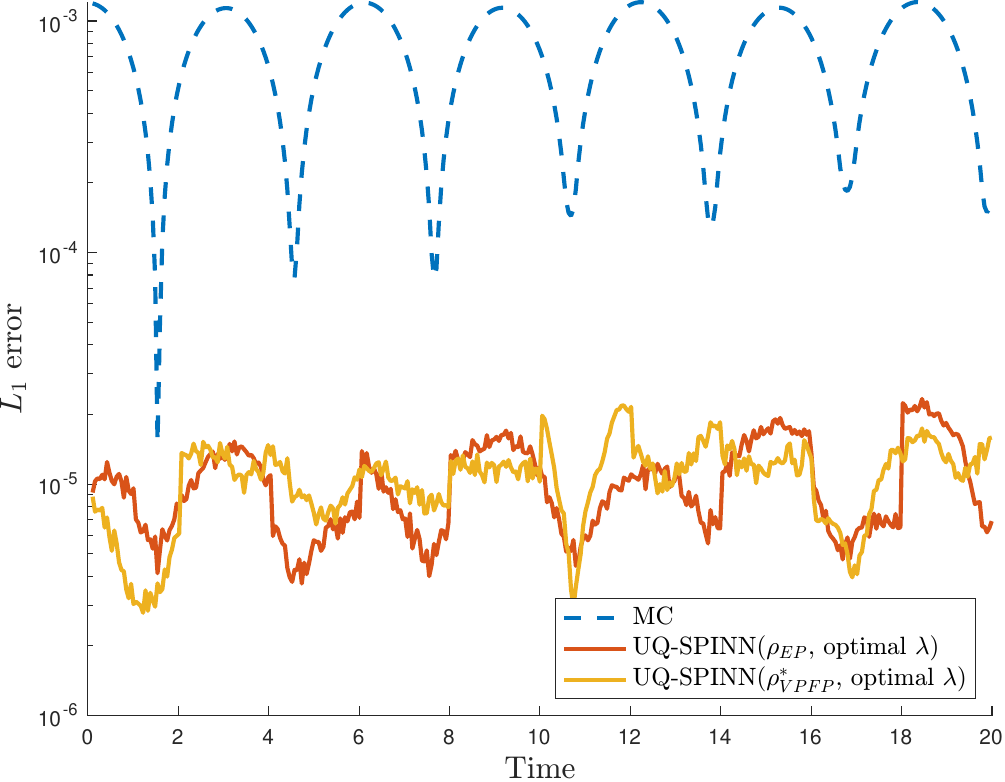}}
        {\includegraphics[width = 0.45 \textwidth, trim=0 0 0 0,clip]{./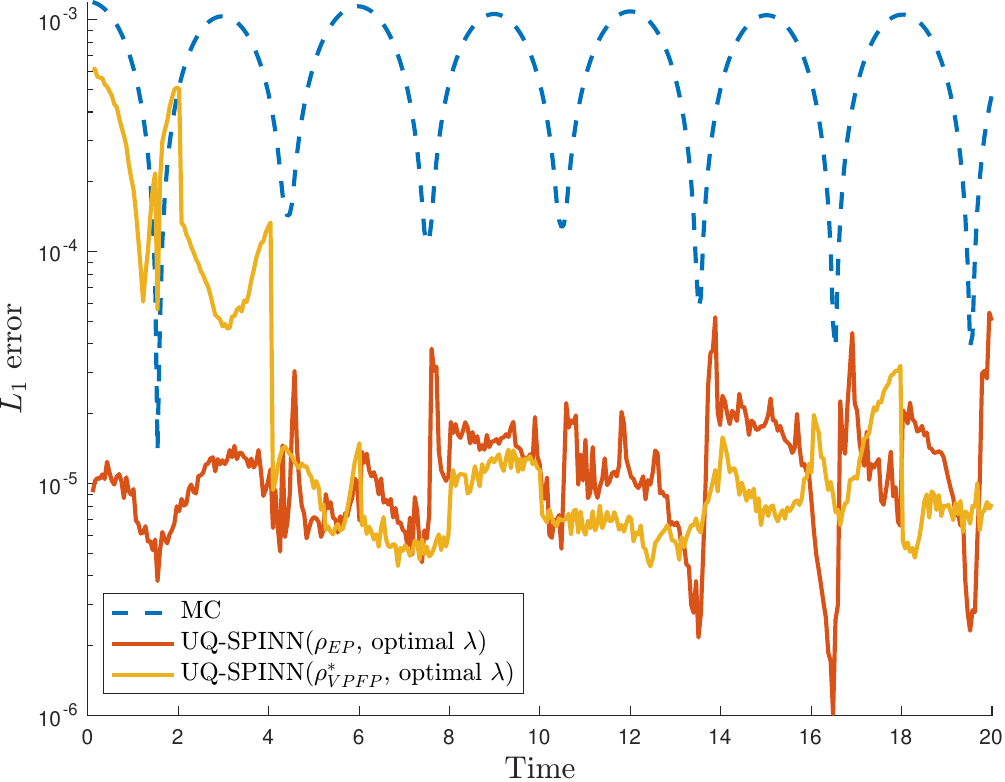}}}
        \mbox{
        {\includegraphics[width = 0.45 \textwidth, trim=0 0 0 0,clip]{./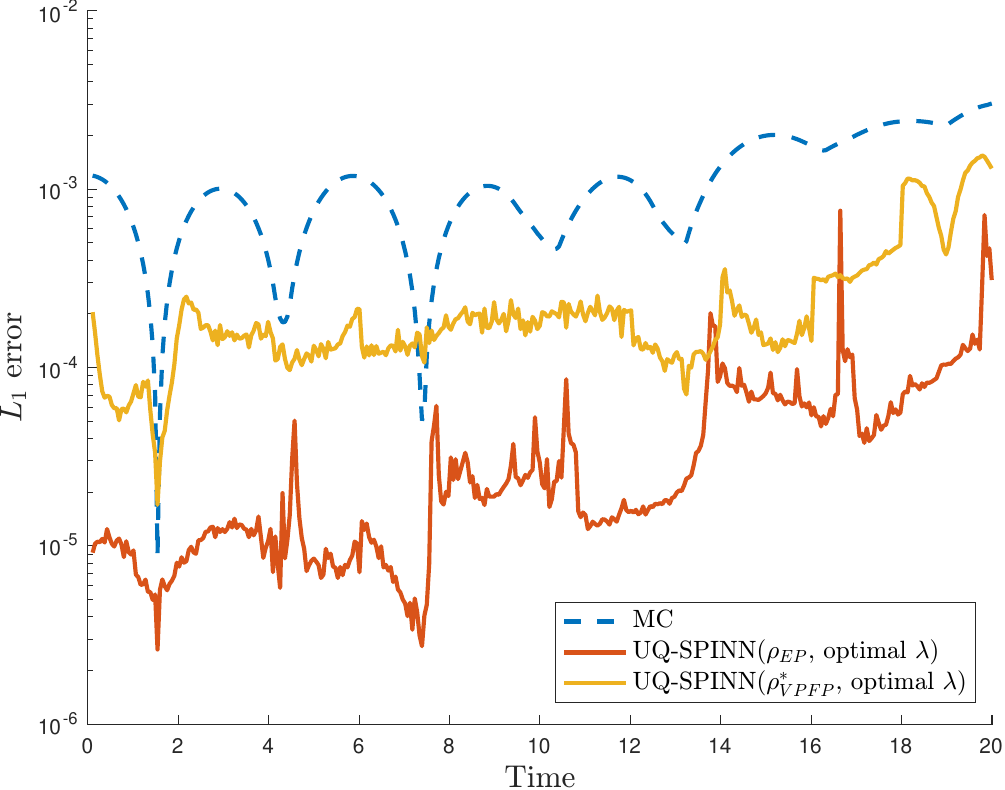}}}
        \caption{\sf $L_1$ error of expectation $\mathbb{E}[\rho]$ for Example \ref{exam4}. The number of samples used to compute the expected value and to construct the control variate are $K = 15$, $L = 20000$ for calibrated VPFP and EP, respectively. Top Left: $\varepsilon = 1e-4$; Top Right: $\varepsilon = 1$; Bottom: $\varepsilon = 1e+6$.} 
        \label{TSIError}
    \end{center}
\end{figure}


\section{Conclusion}
\label{con}
This work has presented a variance-reduced Monte Carlo framework for uncertainty quantification in the Vlasov--Poisson--Landau (VPL) system, combining a high-fidelity, asymptotic-preserving solver for the VPL equation with lower-fidelity physical models based on the Vlasov--Poisson--Fokker--Planck (VPFP) and Euler--Poisson (EP) equations through optimally chosen control-variate weights. A central aspect of the methodology is the use of neural-network surrogates in conjunction with these physical low-fidelity models. Although VPFP and EP provide meaningful reduced descriptions of the VPL dynamics, generating the large number of low-fidelity realizations required by variance-reduction techniques may still be computationally demanding when each realization is obtained by directly solving the corresponding time-dependent PDE. The introduction of tensor neural networks and of an enhanced separable physics-informed neural network (SPINN) therefore provides an additional level of approximation in which the offline training cost can be amortized when many samples are needed, and low-fidelity realizations are produced through forward evaluations of trained networks rather than repeated numerical time stepping.

The proposed SPINN architecture, based on a micro--macro decomposition with an anisotropic Maxwellian background, is designed to preserve physical structure and asymptotic consistency. The VPFP surrogate is further calibrated using a limited set of high-fidelity VPL data and trained through a windowed strategy. This windowed training is essential to avoid long-horizon error accumulation and yields an accurate neural-network representation of the surrogate dynamics over extended times. Importantly, learning the VPFP surrogate via UQ-SPINN also provides a decisive computational advantage over direct VPFP simulations: once trained, the surrogate delivers extremely fast inference (Table~\ref{tab:runtime}), enabling the evaluation of tens of thousands of realizations at negligible marginal cost.

The numerical experiments show that the most demanding regimes arise at long times, not because the neural-network approximation deteriorates, but because the correlation between the surrogate model (EP or calibrated VPFP) and the VPL dynamics may progressively weaken. In this situation, increasing the number of control-variate samples does not necessarily improve the results, since the dominant limitation becomes the intrinsic model discrepancy rather than sampling error. This observation also clarifies why, in our neural-network setting, we did not carry out a separate computational-cost optimization by varying the number of samples between different learned control variates: for the relevant ranges of samples, inference is negligible compared with training, and beyond a threshold the performance saturates due to model-discrepancy effects rather than insufficient sampling.

Importantly, the variance-reduced Monte Carlo estimator remains robust under these conditions. When the correlation between low- and high-fidelity models deteriorates, the optimal control-variate weights automatically decrease and the estimator smoothly degenerates to standard Monte Carlo without loss of consistency or stability. In this sense, the variance-reduction framework provides a fail-safe mechanism: even when a surrogate model ceases to be effective as a control variate in a given long-time regime, the estimator remains unbiased and stable, reverting seamlessly to standard Monte Carlo without user intervention.

Overall, the numerical results confirm that substantial variance reduction can be achieved whenever sufficient correlation with VPL is preserved. Accurate statistical estimates are obtained with significantly fewer high-fidelity samples and reduced effective computational cost, while maintaining robustness across stochastic dimensions and physical regimes. In particular, the learned VPFP surrogate combines strong practical efficiency (very low inference cost) with improved long-time behavior through windowed training, and serves as an effective control variate over wide parameter and time ranges.

\bibliographystyle{siamplain}
\bibliography{references}
\end{document}